\def\UrlSpecials{\do\~{\kern -.15em\lower .7ex\hbox{~}\kern .04em}} \catcode`~=13 
\newcommand{\nn}{\nonumber}
\newcommand{\calC}{\mathcal{C}}
\newcommand{\calE}{\mathcal{E}}
\newcommand{\calN}{\mathcal{N}}
\newcommand{\calP}{\mathcal{P}}
\newcommand{\calS}{\mathcal{S}}
\newcommand{\ba}{\mathbf{a}}
\newcommand{\bb}{\mathbf{b}}
\newcommand{\be}{\mathbf{e}}
\newcommand{\bE}{\mathbf{E}}
\newcommand{\bI}{\mathbf{I}}
\newcommand{\br}{\mathbf{r}}
\newcommand{\bs}{\mathbf{s}}
\newcommand{\bt}{\mathbf{t}}
\newcommand{\bV}{\mathbf{V}}
\newcommand{\bw}{\mathbf{w}}
\newcommand{\bx}{\mathbf{x}}
\newcommand{\bX}{\mathbf{X}}
\newcommand{\by}{\mathbf{y}}
\newcommand{\bz}{\mathbf{z}}
\newcommand{\bbE}{\mathbb{E}}
\newcommand{\bbN}{\mathbb{N}}
\newcommand{\bbP}{\mathbb{P}}
\newcommand{\bbR}{\mathbb{R}}
\DeclareMathAlphabet{\mathbsf}{OT1}{cmss}{bx}{n}
\DeclareMathAlphabet{\mathssf}{OT1}{cmss}{m}{sl}% slanted sans serif
\DeclareSymbolFont{bsfletters}{OT1}{cmss}{bx}{n}  
\DeclareSymbolFont{ssfletters}{OT1}{cmss}{m}{n}
\DeclareMathSymbol{\bsfGamma}{0}{bsfletters}{'000}
\DeclareMathSymbol{\ssfGamma}{0}{ssfletters}{'000}
\DeclareMathSymbol{\bsfDelta}{0}{bsfletters}{'001}
\DeclareMathSymbol{\ssfDelta}{0}{ssfletters}{'001}
\DeclareMathSymbol{\bsfTheta}{0}{bsfletters}{'002}
\DeclareMathSymbol{\ssfTheta}{0}{ssfletters}{'002}
\DeclareMathSymbol{\bsfLambda}{0}{bsfletters}{'003}
\DeclareMathSymbol{\ssfLambda}{0}{ssfletters}{'003}
\DeclareMathSymbol{\bsfXi}{0}{bsfletters}{'004}
\DeclareMathSymbol{\ssfXi}{0}{ssfletters}{'004}
\DeclareMathSymbol{\bsfPi}{0}{bsfletters}{'005}
\DeclareMathSymbol{\ssfPi}{0}{ssfletters}{'005}
\DeclareMathSymbol{\bsfSigma}{0}{bsfletters}{'006}
\DeclareMathSymbol{\ssfSigma}{0}{ssfletters}{'006}
\DeclareMathSymbol{\bsfUpsilon}{0}{bsfletters}{'007}
\DeclareMathSymbol{\ssfUpsilon}{0}{ssfletters}{'007}
\DeclareMathSymbol{\bsfPhi}{0}{bsfletters}{'010}
\DeclareMathSymbol{\ssfPhi}{0}{ssfletters}{'010}
\DeclareMathSymbol{\bsfPsi}{0}{bsfletters}{'011}
\DeclareMathSymbol{\ssfPsi}{0}{ssfletters}{'011}
\DeclareMathSymbol{\bsfOmega}{0}{bsfletters}{'012}
\DeclareMathSymbol{\ssfOmega}{0}{ssfletters}{'012}
\theoremstyle{plain}
\newtheorem{theorem}{Theorem} 
\newtheorem{lemma}{Lemma}
\newtheorem{definition}{Definition} 
\newtheorem{remark}{Remark}
\newcommand{\qednew}{\nobreak \ifvmode \relax \else
      \ifdim\lastskip<1.5em \hskip-\lastskip
      \hskip1.5em plus0em minus0.5em \fi \nobreak
      \vrule height0.75em width0.5em depth0.25em\fi}
\newcommand\Cov{\mathrm{Cov}}
\newcommand\Var{\mathrm{Var}}
\newcommand\Range{\mathrm{Range}}
\title{Misspecified Phase Retrieval with Generative Priors}
\author{%
  Zhaoqiang Liu \\
    National University of Singapore\\
  \texttt{dcslizha@nus.edu.sg} \\
  % examples of more authors
  \And
  Xinshao Wang \\
  University of Oxford \\
  \texttt{xinshao.wang@eng.ox.ac.uk} \\
  \AND
  Jiulong Liu \\
  Chinese Academy of Sciences\\
  \texttt{jiulongliu@lsec.cc.ac.cn} \\\
  % examples of more authors
  % \And
  % Coauthor \\
  % Affiliation \\
  % Address \\
  % \texttt{email} \\
  % \AND
  % Coauthor \\
  % Affiliation \\
  % Address \\
  % \texttt{email} \\
  % \And
  % Coauthor \\
  % Affiliation \\
  % Address \\
  % \texttt{email} \\
  % \And
  % Coauthor \\
  % Affiliation \\
  % Address \\
  % \texttt{email} \\
}
\begin{document}

\maketitle

\begin{abstract}
  In this paper, we study phase retrieval under model misspecification and generative priors. In particular, we aim to estimate an $n$-dimensional signal $\mathbf{x}$ from $m$ i.i.d.~realizations of the single index model $y = f(\mathbf{a}^T\mathbf{x})$, where $f$ is an unknown and possibly random nonlinear link function and $\mathbf{a} \in \mathbb{R}^n$ is a standard Gaussian vector. We make the assumption $\mathrm{Cov}[y,(\mathbf{a}^T\mathbf{x})^2] \ne 0$, which corresponds to the misspecified phase retrieval problem. In addition, the underlying signal $\mathbf{x}$ is assumed to lie in the range of an $L$-Lipschitz continuous generative model with bounded $k$-dimensional inputs. We propose a two-step approach, for which the first step plays the role of spectral initialization and the second step refines the estimated vector produced by the first step iteratively. We show that both steps enjoy a statistical rate of order $\sqrt{(k\log L)\cdot (\log m)/m}$ under suitable conditions. Experiments on image datasets are performed to demonstrate that our approach performs on par with or even significantly outperforms several competing methods. 
\end{abstract}

\section{Introduction}

Compressed sensing (CS) is perhaps the most popular instance of high-dimensional inverse problems, for which one has the {\em linear} measurement model
\begin{equation}\label{eq:linear}
 y = \ba^T\bx + \eta, 
\end{equation}
where $\ba \in \bbR^n$ is the sensing vector, $\bx \in \bbR^n$ is the sparse signal to estimate, and $\eta$ represents additive noise. It has been well-known for CS that roughly $m = O(s\log(n/s))$ i.i.d. Gaussian measurements are sufficient to ensure the accurate recovery of a signal with $s$ non-zero entries~\cite{wainwright2009information,arias2012fundamental,Fou13,candes2013well,scarlett2016limits}. 

Phase retrieval (PR) arises in numerous scientific areas including X-ray crystallography, acoustics, astronomy, microscopy, optics, wireless communications, and quantum information~\cite{candes2015phase}, where one cannot measure $\ba^T\bx$ directly, and can only record its magnitude. For example, the following {\em noisy magnitude-only} measurement model has been adopted in various prior works on sparse (real-valued) PR~\cite{wang2017sparse,jagatap2019sample,gao2020perturbed,huang2020estimation,cai2021solving,cai2021sample}:
\begin{equation}\label{eq:magOnly}
 y = |\ba^T\bx| + \eta,
\end{equation}
where the signal $\bx$ is assumed to be sparse, and the sensing vector $\ba$ is assumed to be a standard Gaussian vector. 

However, both the linear and magnitude-only measure models in~\eqref{eq:linear} and~\eqref{eq:magOnly} are idealized views of the data generating process. To make the setup more general, one can utilize the following semi-parametric {\em single index model (SIM)} for general nonlinear models:
\begin{equation}\label{eq:SIM}
 y = f(\ba^T\bx),
\end{equation}
where $f\,:\, \bbR \to \bbR$ is an {\em unknown} (possibly random) nonlinear link function, and $\ba$ is typically assumed to be Gaussian. In addition, since the norm of $\bx \in \bbR^n$ is absorbed into the SIM, for brevity, it is common to assume that $\bx$ is a unit vector, i.e., $\|\bx\|_2 = 1$. SIMs have long been studied in the conventional setting where the number of measurements $m > n$~\cite{han1987non,sherman1993limiting,li1989regression}. In recent years, they have also been analyzed in the high-dimensional setting where $m \ll n$, mainly under the assumption that the underlying signal is sparse. Relevant works include but are not limited to~\cite{plan2016generalized,neykov2016l1,genzel2016high,plan2017high,oymak2017fast}. For all these works, it is crucial to impose the following assumption on the SIM:
\begin{equation}\label{eq:classic_SIM}
 \Cov[y,\ba^T\bx] \ne 0.
\end{equation}
The pivotal condition in~\eqref{eq:classic_SIM} is fairly generic and encompasses notable special examples such as noisy $1$-bit measurements and general (non-binary) quantization schemes. However, it is not satisfied by PR models including~\eqref{eq:magOnly} and the related models
\begin{equation}\label{eq:linkPRs}
y = |\ba^T\bx + \eta|, \quad y = (\ba^T\bx)^2 + \eta,
\end{equation}
where $\eta$ refers to zero-mean random noise that is independent of $\ba$. In order to formalize a class of SIMs that encompass the above-mentioned PR models (and more general models, see the discussion in Section~\ref{sec:setup}) as special cases, misspecified phase retrieval (MPR) has been studied in~\cite{neykov2020agnostic,yang2019misspecified}, with the condition~\eqref{eq:classic_SIM} being replaced by the assumption
\begin{equation}\label{eq:cond_MPR}
 \Cov[y,(\ba^T\bx)^2] \ne 0.
\end{equation}
It is worth mentioning that another motivation behind studying MPR is that the theoretical analysis for PR typically relies on the correct model specification that the data points are indeed generated by the correct model, and the MPR model enables theoretical analysis under statistical model misspecification.

In both works~\cite{neykov2020agnostic,yang2019misspecified}, the signal $\bx$ is assumed to be sparse. Recently, motivated by tremendous successful applications of deep generative models and following the seminal work~\cite{bora2017compressed} on generative model based linear CS, it has been popular to study high-dimensional inverse problems under generative priors~\cite{rick2017one,hand2018phase,hand2018global,heckel2019deep,wu2019deep,jalal2020robust,asim2020invertible,ongie2020deep,whang2020compressed,menon2020pulse,jalal2021instance,nguyen2021provable}. In particular, instead of being sparse, the underlying signal is assumed to be contained in (or lie near) the range of a generative model. It has been empirically demonstrated in~\cite{bora2017compressed} and its various follow-up works (e.g., see~\cite{dhar2018modeling,van2018compressed,liu2020sample} and a literature review in~\cite{scarlett2022theoretical}) that compared to sparsity based methods, corresponding generative model based algorithms require significantly fewer samples to recover the signal up to a given accuracy. 

In this paper, we study the MPR problem under generative modeling assumptions.

\subsection{Related Work}

In this subsection, we summarize some relevant works on PR and SIM, for both cases with or without generative priors.

\textbf{PR and SIM without generative priors}: There is a large amount of literature providing practical algorithms for PR, including convex methods~\cite{ohlsson2012cprl,candes2013phaselift,candes2015phaseMC,waldspurger2015phase,bahmani2017phase,goldstein2018phasemax} and empirically more competitive non-convex methods~\cite{gerchberg1972practical,fienup1982phase,netrapalli2015phase,pauwels2017fienup}. In particular, in the seminal work~\cite{netrapalli2015phase} and a variety of its follow-up works~\cite{candes2015phase,cai2016optimal,chen2017solving,wang2017sparse,wang2017solving,zhang2017nonconvex,jagatap2019sample,soltanolkotabi2019structured,cai2020sparse}, whether it is for general PR (with no priors on the signal) or sparse PR, two-step approaches have been proposed with provable guarantees. The first step consists of a spectral initialization method, and the second step is typically an iterative (such as alternating minimization and gradient descent) algorithm that further refines the initial guess of the first step. For general PR, the spectral initialization step turns out to be unnecessary, and optimal sample complexity guarantees can be established even when using random initialization~\cite{sun2018geometric,chen2019gradient}. However, to the best of our knowledge, it is not the case for sparse PR. More specifically, all theoretically-guaranteed non-convex algorithms for sparse PR require spectral initialization, and this typically results in a sub-optimal sample complexity of $O\big(s^2 \log n\big)$ (instead of $O(s \log n)$), where $s$ refers to the number of non-zero entries of the signal to estimate. 

MPR is also closely related to SIMs, which have been extensively studied in the conventional setting where $m > n$, see, e.g.,~\cite{han1987non,horowitz2009semiparametric,mccullagh2019generalized}. High-dimensional SIMs have received a lot of attention in recent years, with theoretical guarantees for signal estimation and support recovery~\cite{foster2013variable,ganti2015learning,radchenko2015high,thrampoulidis2015lasso,luo2016forward,plan2016generalized,plan2017high,cheng2017bs,yang2017high,goldstein2018structured,wei2018structured,pananjady2021single,eftekhari2021inference}. In particular, motivated by the idea that under the assumption~\eqref{eq:classic_SIM}, a SIM can be converted into a scaled linear measurement model with unconventional noise, the authors of~\cite{plan2016generalized} consider minimizing the linear least-squares objective function over a convex set. They show that a reliable estimation of the signal can be obtained by such a simple method despite the unknown nonlinear link function.   

As mentioned earlier, MPR with sparse priors has been studied in~\cite{neykov2020agnostic,yang2019misspecified}. The work~\cite{neykov2020agnostic} implements a two-step procedure based on convex optimization, with the first step being a spectral initialization method. More specifically, a semidefinite program is solved to produce an initial vector, and then an $\ell_1$ regularized least square is solved to obtain a refined estimator. Such a procedure suffers from high computational costs. A more efficient two-step approach, which is a simple variant of the thresholded Wirtinger flow method~\cite{cai2016optimal}, is proposed in~\cite{yang2019misspecified}. In the first step, identical to that in~\cite{cai2016optimal}, the initial vector is calculated by a thresholded spectral method that first estimates the support of the sparse signal by thresholding and then performs the classic power method over the submatrix corresponding to the estimated support. In the second step, a thresholded gradient descent algorithm is employed. Both approaches in~\cite{neykov2020agnostic,yang2019misspecified} can attain the optimal statistical rate of order $\sqrt{(s \log n)/m}$, provided that the sensing vector is standard Gaussian and the number of samples $m = \Omega\big(s^2\log n\big)$. In addition, the second step of the approach proposed in~\cite{yang2019misspecified} is shown to achieve a linear convergence rate.  

\textbf{PR and SIM with generative priors}: PR with generative priors has been studied in~\cite{hand2018phase,hyder2019alternating,jagatap2019algorithmic,shamshad2020compressed,aubin2020exact,liu2021towards}. More specifically, an approximate message passing algorithm is proposed in~\cite{aubin2020exact}. The authors of~\cite{hand2018phase,shamshad2020compressed} minimize the objective over the latent space in $\bbR^k$ using gradient descent, where $k$ is the latent dimension of the generative prior. Corresponding algorithms may easily get stuck in local minima since the explorable solution space is limited. Recovery guarantees for projected gradient descent algorithms over the ambient space in $\bbR^n$ for noiseless PR with pre-trained or untrained neural network priors have been proposed in~\cite{hyder2019alternating,jagatap2019algorithmic}. No initialization methods have been proposed in these works, making the assumption on the initial vector therein a bit stringent. On the other hand, the authors of~\cite{liu2021towards} propose a spectral initialization method for PR with generative priors and provide recovery guarantees with respect to globally optimal solutions of a corresponding optimization problem. The optimization problem is non-convex, and a projected power method is proposed in~\cite{liu2022generative} to approximately find an optimal solution. 

Generative model based SIMs have been studied in~\cite{liu2020generalized,liu2022projected,liu2022non,wei2019statistical}. The authors of~\cite{liu2020generalized,liu2022projected,liu2022non} provide optimal sample complexity upper bounds under the assumption of Gaussian sensing vectors. But their results rely on the assumption~\eqref{eq:classic_SIM}, which is not satisfied by widely adopted PR models. The SIM studied in~\cite{wei2019statistical} encompasses certain PR models as special cases, and the sensing vector can be non-Gaussian. However, the nonlinear link function $f$ is assumed to be differentiable, making it not directly applicable to the PR model in~\eqref{eq:magOnly}. Moreover, it is worth mentioning that in both works~\cite{wei2019statistical,liu2020generalized}, the recovery guarantees are with respect to globally optimal solutions of typically highly non-convex optimization problems. Attaining these optimal solutions is practically difficult.

\subsection{Contributions}

Throughout this paper, we assume that the signal $\bx$ lies in the range of an $L$-Lipschitz continuous generative model with bounded $k$-dimensional inputs. Our main contributions are as follows:

\begin{itemize}
 \item We propose a two-step approach for MPR with generative priors. In particular, in the first step, we make use of the projected power method proposed in~\cite{liu2022generative} to obtain a good initial vector for the iterative algorithm used in the second step.
 
 \item We show that under appropriate initialization, both steps attain a statistical rate of order $\sqrt{(k\log L)\cdot(\log m)/m}$, and the second step achieves a linear convergence rate. The initialization condition for the first step is mild in the sense that it allows the inner product between the starting point and the signal $\bx$ to be a sufficiently small positive constant. In contrary, the initialization condition for the second step is more restrictive, making the first step necessary. Notably, unlike for the works on MPR with sparse priors~\cite{neykov2020agnostic,yang2019misspecified} that require the sub-optimal $O\big(s^2\log n\big)$ sample complexity, the sample complexity requirement for our recovery guarantees is $O((k \log L) \cdot (\log m))$, which is naturally conjectured to be near-optimal~\cite{liu2020information,kamath2020power}. 
 
 \item We perform numerical experiments on image datasets to corroborate our theoretical results. In particular, for the noisy magnitude-only measurement model~\eqref{eq:magOnly}, we observe that our approach gives reconstructions that are competitive with those of the alternating phase projected gradient descent (APPGD) algorithm proposed in~\cite{hyder2019alternating}, which is the corresponding state-of-the-art method, though we do not utilize the knowledge of the link function and allow for model misspecification. Moreover, for several closely related measurement models that satisfy the condition~\eqref{eq:cond_MPR}, our approach leads to superior reconstruction performance compared to all other competing methods, including APPGD. 
\end{itemize}
 
\section{Problem Formulation}
\label{sec:prob_form_algo}

In this section, we overview some important assumptions that we adopt. Before proceeding, we present the notation used in this paper.

\subsection{Notation}

We use upper and lower case boldface letters to denote matrices and vectors respectively. For any positive integer $N$, we use the shorthand notation $[N]=\{1,2,\cdots,N\}$, and $\bI_N$ represents the identity matrix in $\bbR^{N \times N}$. The support (set) of a vector is the index set of its non-zero entries. We use $\|\bX\|_{2\to 2}$ to represent the spectral norm of $\bX$. We use $B^k(r)$ to denote the radius-$r$ in $\bbR^k$, i.e., $B^k(r):=\{\bz \in \bbR^k: \|\bz\|_2 \le r\}$, and $\calS^{n-1}$ represents the unit sphere in $\bbR^n$, i.e., $\calS^{n-1} := \{\bs \in \bbR^n: \|\bs\|_2=1\}$. We use $G$ to denote a pre-trained $L$-Lipschitz continuous generative model from $B^k(r)$ to $\bbR^n$. We focus on the setting where $k \ll n$. For any set $S \subseteq B^k(r)$, we write $G(S) = \{ G(\bz) \,:\, \bz \in S  \}$. Our goal is to estimate the signal $\bx \in \mathrm{Range}(G) = G(B^k(r))$ from realizations of the sensing vector $\ba$ and the observation $y$ (generated according to the SIM in~\eqref{eq:SIM}). For any two sequences of real values $\{a_j\}$ and $\{b_j\}$, we write $a_j = O(b_j)$ if there exist an absolute constant $C_1$ and a positive integer $j_1$ such that for any $j>j_1$, $|a_j| \le C_1 b_j$, $a_j = \Omega(b_j)$ if there exist an absolute constant $C_2$ and a positive integer $j_2$ such that for any  $j>j_2$, $|a_j| \ge C_2 b_j$. We use the generic notations $C$ and $C'$ to denote large positive constants, and we use $c$ to denote a small positive constant; their values may differ from line to line.

\subsection{Setup}
\label{sec:setup}

First, the following are the standard definitions of a sub-exponential random variable and the associated sub-exponential norm.
\begin{definition}\label{def:subExp}
 A random variable $X$ is said to be sub-exponential if there exists a positive constant $C$ such that $\left(\bbE\left[|X|^p\right]\right)^{1/p} \le C p$ for all $p \ge 1$, or equivalently, if there exists a positive constant $C'$ such that $\bbP(|X| > u) \le \exp(1-u/C')$ for all $u\ge 0$. The sub-exponential norm of $X$ is defined as $\|X\|_{\psi_1} := \sup_{p \ge 1} p^{-1} \left(\bbE\left[|X|^p\right]\right)^{1/p}$.
\end{definition}

We will focus on the following settings except where stated otherwise:

\begin{itemize}

\item The observations are independently generated according to the SIM~\eqref{eq:SIM}, where $f$ is the link function that is {\em unknown} and possibly random.

 \item We have an $L$-Lipschitz continuous generative model $G \,:\, B^k(r) \to \bbR^n$. For convenience, similarly to~\cite{liu2020sample,liu2022generative}, we assume that the generative model is normalized such that $\mathrm{Range}(G) \subseteq \calS^{n-1}$. For a general (unnormalized) generative model, we may essentially consider its normalized version. See, e.g.,~\cite[Remark~1]{liu2022generative}. 
 
 \item The signal $\bx$ is contained in the range of $G$, i.e., $\bx \in \Range(G) \subseteq \calS^{n-1}$.
 
 \item The sensing vector $\ba \in \bbR^n$ is standard Gaussian, i.e., $\ba \sim \calN(\mathbf{0},\bI_n)$.
 
 \item The random variable $y = f(\ba^T\bx)$ is sub-exponential with the sub-exponential norm being denoted by $K_y$, i.e., $K_y := \|y\|_{\psi_1}$. In addition, we use $M_y$ to denote the expectation of $y$, i.e., $M_y := \bbE[y]$. 
 \begin{remark}
  $y$ will be sub-exponential when $f(x)$ comprises of $x^c$ plus lower order terms with $c \le 2$ (since the product of two sub-Gaussian random variables is sub-exponential), and therefore we will see that the $y$ corresponding to all the measurement models presented in our paper is sub-exponential. We remark that the assumption of sub-exponential $y$ is not essential and it can be easily relaxed. For example, when $y = x^c$ with $c$ being an even integer that is larger than $2$, there will be only a minor change in the order of the $\log m$ term in the sample complexity and statistical rate. However, for brevity, we follow~\cite{neykov2020agnostic,yang2019misspecified} and make the assumption of sub-exponential $y$ to avoid non-essential complications. 
 \end{remark}

 \item We consider MPR and assume that\footnote{The case that $\nu <0$ can be similarly handled by considering $-y$.} 
 \begin{equation}\label{eq:cond_MPR_g0}
  \nu := \Cov\left[y, (\ba^T\bx)^2\right] >0,
 \end{equation}
 or equivalently,
 \begin{equation}\label{eq:cond_MPR_g0g}
  \nu := \Cov\left[f(g),g^2\right] >0,
 \end{equation}
where $g \sim \calN(0,1)$ is a standard normal random variable. Note that the assumption~\eqref{eq:cond_MPR_g0g} is only with respect to the nonlinear link function $f$. The condition in~\eqref{eq:cond_MPR_g0} is satisfied by PR models described in~\eqref{eq:magOnly} and~\eqref{eq:linkPRs}. It is also satisfied by relevant models such as~\cite{yang2019misspecified}
 \begin{align}
  y = |\ba^T\bx| + 2 \tanh(|\ba^T\bx|) + \eta, \quad y = 2(\ba^T\bx)^2 + 3\sin(|\ba^T\bx|) + \eta.\label{eq:linkPRs_tanhsin}
 \end{align}
See~\cite[Proposition~3 and Remark~4]{neykov2020agnostic} for more general examples.
\end{itemize}

\section{Algorithm}
\label{sec:algo}

In this section, we describe our two-step algorithm devised for MPR with generative priors. Suppose that we have $m$ i.i.d. realizations of $\ba$ and $y$, namely $\ba_1,\ldots,\ba_{m}$ and $y_1,\ldots,y_{m}$. To estimate the signal $\bx$, we consider the following two-step approach:
\begin{enumerate}
 \item We perform $T_1$ iterations in the first step. In particular, let
 \begin{equation}\label{eq:defbV}
  \bV = \frac{1}{m}\sum_{i=1}^m y_i \left(\ba_i\ba_i^T - \bI_n\right). 
 \end{equation}
We perform the projected power method proposed in~\cite{liu2022generative}: For $t = 0,1,\ldots,T_1 -1$, let
\begin{equation}\label{eq:pg_ppower}
 \bw^{(t+1)} = \calP_G\left(\bV\bw^{(t)}\right),
\end{equation}
where $\calP_G(\cdot)$ denotes the projection function onto $\Range(G)$,\footnote{That is, for any $\bs \in \bbR^n$, $\calP_G(\bs) := G\big(\arg \min_{\bz \in B^k(r)} \|G(\bz) - \bs\|_2\big)$. Similarly to~\cite{shah2018solving,hyder2019alternating,jagatap2019algorithmic,peng2020solving,liu2022generative}, we implicitly assume the exact projection in our analysis. In practice, approximate methods such as gradient- and GAN-based projections~\cite{shah2018solving,raj2019gan} have been shown to work well.} and we obtain $\bx^{(0)} := \bw^{(T_1)}$. Similarly to~\cite{kuleshov2013fast,liu2021towards}, we set the starting point $\bw^{(0)}$ as the column of $\frac{1}{m}\sum_{i=1}^m y_i \ba_i\ba_i^T$ (i.e., a shifted version of $\bV$) that corresponds to the largest diagonal entry. Note that it is easy to calculate that $\bbE[\bV] = \nu \bx \bx^T$ (see, e.g.,~\cite[Lemma~8]{liu2021towards}), for which each column is a scalar product of $\bx$. This motivates the use of a shifted version of $\bV$ to get the initialization vector.

\item  We perform $T_2$ iterations in the second step. In particular, let 
\begin{align}
 &\bar{y} = \frac{1}{m}\sum_{i=1}^{m}y_i \label{eq:def_bary}
\end{align}
be the empirical mean of the observations. We perform the following iterative procedure: For $t = 0,1,2,\ldots,T_2 -1$, let 
\begin{align}
 &\hat{\nu}^{(t)} = \frac{1}{m}\sum_{i=1}^{m} (y_i - \bar{y})\cdot\left(\ba_i^T\bx^{(t)}\right)^2 \label{eq:def_hatnu} \\
 & \tilde{y}_i^{(t)} = (y_i - \bar{y})\cdot \left(\ba_i^T\bx^{(t)}\right), i =1,2,\ldots,m\label{eq:def_tildey}. \\ 
 & \tilde{\bx}^{(t+1)} = \bx^{(t)} - \frac{\zeta}{m}\cdot \sum_{i=1}^m \left(\hat{\nu}^{(t)}\cdot \left(\ba_i^T\bx^{(t)}\right) - \tilde{y}_i^{(t)}\right) \ba_i,\label{eq:def_tildebx} \\
 & \bx^{(t+1)} = \calP_G\left(\tilde{\bx}^{(t+1)}\right),\label{eq:bx_tp1}
\end{align}
where $\zeta >0$ is a tuning parameter, and $\hat{\nu}^{(t)}$ can be thought of as an approximation of $\nu$ defined in~\eqref{eq:cond_MPR_g0}. The idea behind calculating $\tilde{y}_i^{(t)}$ is that by comparing~\eqref{eq:classic_SIM} and~\eqref{eq:cond_MPR}, we observe that to transform the MPR model into a conventional SIM, we may use $(y-\bbE[y])(\ba^T\bx)$ to replace $y$, see also~\cite{neykov2020agnostic}. Moreover,~\eqref{eq:def_tildebx} can be considered as a gradient descent step with respect to a linear measurement model with the scale factor $\hat{\nu}^{(t)}$ and observations $\tilde{y}_i^{(t)}$. Then, in~\eqref{eq:bx_tp1}, we project the calculated vector $\tilde{\bx}^{(t+1)}$ onto $\Range(G)$. Finally, we use $\hat{\bx}:=\bx^{(T_2)}$ to represent the estimated vector obtained after $T_2$ iterations.
\end{enumerate}
For convenience, we summarize the details in Algorithm~\ref{algo:mpr_gen}.
\begin{algorithm}[t]
\caption{A two-step approach for misspecified phase retrieval with generative priors}
\label{algo:mpr_gen}
{\bf Input}: $\{(\ba_i,y_i)\}_{i=1}^{m}$, step size $\zeta >0$, number of iterations $T_1$ for the first step, number of iterations $T_2$ for the second step, pre-trained generative model $G$, initial vector $\bw^{(0)}$ \\
\textbf{First step}: 
\begin{algorithmic}[1]
\STATE \textbf{for} $t = 0,1,\ldots,T_1-1$ \textbf{do}
\STATE $\quad \bw^{(t+1)} = \calP_{G}\big(\bV \bw^{(t)}\big)$
\STATE \textbf{end for}
\end{algorithmic}
\textbf{Second step}: \\
Let $\bx^{(0)} := \bw^{(T_1)}$
\begin{algorithmic}[1]
\STATE \textbf{for} $t = 0,1,\ldots,T_2-1$ \textbf{do}
\STATE \quad Calculate $\hat{\nu}^{(t)}$, $\tilde{y}_i^{(t)}$, $\tilde{\bx}^{(t+1)}$ and $\bx^{(t+1)}$ according to~\eqref{eq:def_hatnu},~\eqref{eq:def_tildey},~\eqref{eq:def_tildebx}, and~\eqref{eq:bx_tp1}, respectively
\STATE \textbf{end for} \\
\end{algorithmic}
{\bf Output}: $\hat{\bx} := \bx^{(T_2)}$ 
\end{algorithm}
 
\section{Theoretical Results}

The following theorem establishes recovery guarantees for the first step of Algorithm~\ref{algo:mpr_gen}. The proof is given in the supplementary material. Note that $K_y := \|y\|_{\psi_1}$ ({\em cf.} Section~\ref{sec:setup}) is considered a fixed constant and is omitted in the $O(\cdot)$ notation.
\begin{theorem}\label{thm:gpca}
 Assume that there exists a positive integer $t_0$ such that $\bx^T\bw^{(t_0)} \ge c_0$, where $c_0$ is a sufficiently small positive constant. Suppose that $m = \Omega((k \log (nLr)) \cdot (\log m))$ with a large enough implied constant. Then, we have that with probability $1-O(1/m)$, it holds for all $t > t_0$ that
 \begin{equation}\label{eq:either1}
 \left\|\bw^{(t)} - \bx\right\|_2 \le \frac{CK_y}{c_0}\sqrt{\frac{(k\log (nLr))\cdot(\log m)}{m}} = O\left(\sqrt{\frac{(k\log (nLr))\cdot(\log m)}{m}}\right).
\end{equation}
\end{theorem}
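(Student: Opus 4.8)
My plan is to isolate a single probabilistic statement — a \emph{scale‑sensitive} uniform deviation bound for the matrix $\bV$ — and then derive the theorem from it by a short deterministic analysis of the projected power iteration. Recall from the discussion preceding Theorem~\ref{thm:gpca} that $\bbE[\bV]=\nu\bx\bx^T$, so set $\bE:=\bV-\nu\bx\bx^T$ (mean zero); then for any $\bw\in\Range(G)$ we have $\bV\bw=\nu(\bx^T\bw)\,\bx+\bE\bw$. Since $\Range(G)\subseteq\calS^{n-1}$, for $t\ge1$ both $\bw^{(t)}$ and $\bx$ are unit vectors, so I may assume $t_0\ge1$ (one projected‑power step already lands in $\Range(G)$) and abbreviate $\alpha_t:=\bx^T\bw^{(t)}$, so that $\|\bw^{(t)}-\bx\|_2=\sqrt{2(1-\alpha_t)}$.

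\noindent\emph{Step 1 (the key lemma; this is where essentially all the work sits).} Let $\calD:=\{(\bs_1-\bs_2)/\|\bs_1-\bs_2\|_2 : \bs_1,\bs_2\in\Range(G),\ \bs_1\ne\bs_2\}$ be the set of unit ``difference directions'' of $\Range(G)$. I would show that with probability $1-O(1/m)$,
\[
 \sup_{\bv\in\calD,\ \bs\in\Range(G)}\bigl|\bv^T\bE\bs\bigr|\;\le\;\delta\;:=\;CK_y\sqrt{\frac{(k\log(nLr))\cdot(\log m)}{m}}.
\]
The feature I care about is the normalization of $\bv$: it makes the bound degrade linearly with $\|\bw^{(t+1)}-\bx\|_2$ in Step~2. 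To prove it I would fix unit $\bv,\bs$ and note that $\bv^T\bE\bs=\frac1m\sum_{i=1}^m\bigl(y_i\bigl[(\ba_i^T\bv)(\ba_i^T\bs)-\bv^T\bs\bigr]\bigr)-\nu(\bx^T\bv)(\bx^T\bs)$ is a centered average of i.i.d.\ terms, each a product of the sub‑exponential $y_i$ with the sub‑exponential $(\ba_i^T\bv)(\ba_i^T\bs)$, with variance $O(K_y^2)$. A Bernstein‑type tail bound for such mildly heavy‑tailed sums — obtained after truncating the summands at a poly‑$\log m$ level, which is exactly where the $\log m$ factor in $\delta$ enters — yields a per‑pair failure probability $\exp\bigl(-c(k\log(nLr))(\log m)\bigr)$. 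I would then discretize: an $\eta$‑net of $B^k(r)$ has cardinality $(3r/\eta)^k$, hence by $L$‑Lipschitzness of $G$ induces an $L\eta$‑net of $\Range(G)$ and an associated net of $\calD$; the off‑net residual $|\bv^T\bE(\bs-\bs_0)|\le\|\bE\|_{2\to2}\,\eta$ is killed by the crude bound $\|\bE\|_{2\to2}\lesssim K_y\,n\log m$ (valid with probability $1-O(1/m)$) once $\eta$ is taken polynomially small in $(n,L,r,m)$, which pins $\log(1/\eta)=O(\log(nLr))$ — the source of the $\log(nLr)$ factor — and a union bound over the net, of log‑cardinality $O(k\log(nLr))$, is dominated by the per‑pair estimate as soon as $\log m$ exceeds an absolute constant. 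I expect the delicate points to be the tail inequality for the triple product $y_i(\ba_i^T\bv)(\ba_i^T\bs)$ and bookkeeping the net resolution so that the $\log m$ coming from $\eta$ merges into $\log(nLr)$ rather than proliferating.

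\noindent\emph{Step 2 (deterministic iteration and induction).} Work on the event of Step~1 and fix $t\ge t_0$ with $\alpha_t>0$. Since $\bw^{(t+1)}=\calP_G(\bV\bw^{(t)})$ is the nearest point of $\Range(G)$ to $\bV\bw^{(t)}$ and $\bx\in\Range(G)$, we have $\|\bw^{(t+1)}-\bV\bw^{(t)}\|_2\le\|\bx-\bV\bw^{(t)}\|_2$; squaring and using $\|\bw^{(t+1)}\|_2=\|\bx\|_2=1$ collapses this to $(\bw^{(t+1)}-\bx)^T\bV\bw^{(t)}\ge0$. Substituting $\bV\bw^{(t)}=\nu\alpha_t\bx+\bE\bw^{(t)}$ and $(\bw^{(t+1)}-\bx)^T\bx=\alpha_{t+1}-1$ gives
\[
 \nu\alpha_t(1-\alpha_{t+1})\;\le\;(\bw^{(t+1)}-\bx)^T\bE\bw^{(t)}\;\le\;\delta\,\|\bw^{(t+1)}-\bx\|_2,
\]
the last step being Step~1 with $\bv=(\bw^{(t+1)}-\bx)/\|\bw^{(t+1)}-\bx\|_2\in\calD$ and $\bs=\bw^{(t)}\in\Range(G)$. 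Since $\|\bw^{(t+1)}-\bx\|_2=\sqrt{2(1-\alpha_{t+1})}$, cancelling one factor $\sqrt{1-\alpha_{t+1}}$ (the case $\alpha_{t+1}=1$ being trivial) gives $\|\bw^{(t+1)}-\bx\|_2\le 2\delta/(\nu\alpha_t)$ — this is the crucial point: the square root in the distance–correlation identity upgrades the scale‑sensitive bound from the naive $\sqrt\delta$ to the rate $\delta$. Finally, $m=\Omega((k\log(nLr))(\log m))$ with a large implied constant forces $\delta\le\nu c_0/4$; then $\alpha_{t_0}\ge c_0$ yields $\|\bw^{(t_0+1)}-\bx\|_2\le 2\delta/(\nu c_0)\le1/2$, hence $\alpha_{t_0+1}=1-\tfrac12\|\bw^{(t_0+1)}-\bx\|_2^2\ge7/8$, and by induction $\alpha_t\ge7/8$ for all $t\ge t_0+1$. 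Since $\alpha_{t-1}\ge c_0$ in every case, plugging back gives, for all $t>t_0$,
\[
 \|\bw^{(t)}-\bx\|_2\;\le\;\frac{2\delta}{\nu c_0}\;=\;\frac{CK_y}{c_0}\sqrt{\frac{(k\log(nLr))\cdot(\log m)}{m}}
\]
after absorbing the fixed constant $\nu$ into $C$, which is the claimed bound.
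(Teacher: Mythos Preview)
Your proposal is correct and follows essentially the same route as the paper: establish a uniform, scale-sensitive bound on $\bv^T\bE\bs$ over vectors from (differences of) $\Range(G)$ via pointwise concentration, a net on $B^k(r)$ pushed through $G$, and the crude spectral bound $\|\bE\|_{2\to2}=O(K_y\,n\log m/m)$ to mop up the net residual; then feed this into a deterministic analysis of the projected power step. The paper packages these two pieces as Lemma~\ref{lem:alls1s2} (your Step~1) and Lemma~\ref{lem:gpca_ppower} (your Step~2), the latter being quoted as a black box from~\cite{liu2022generative}.

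The one genuine difference worth recording is that your Step~2 is fully worked out, whereas the paper simply invokes~\cite[Theorem~2]{liu2022generative}. Your argument --- projection optimality gives $(\bw^{(t+1)}-\bx)^T\bV\bw^{(t)}\ge0$, hence $\nu\alpha_t(1-\alpha_{t+1})\le\delta\|\bw^{(t+1)}-\bx\|_2$, and the identity $\|\bw^{(t+1)}-\bx\|_2^2=2(1-\alpha_{t+1})$ cancels one factor of $\sqrt{1-\alpha_{t+1}}$ to yield $\|\bw^{(t+1)}-\bx\|_2\le2\delta/(\nu\alpha_t)$ --- is precisely the mechanism that the cited result encodes, and makes transparent why the rate is $\delta$ rather than $\sqrt\delta$. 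On the concentration side (Step~1), note that the paper's pointwise lemma (Lemma~\ref{lem:eachs1s2}) does not apply a single Bernstein bound to the triple product $y_i(\ba_i^T\bv)(\ba_i^T\bs)$; instead it decomposes $\bv,\bs$ along $\bx$ and its orthogonal complement, conditions on $(y_i,\ba_i^T\bx)$, and exploits the independence of the remaining Gaussian coordinates together with the variance control $\frac1m\sum y_i^2=O(K_y^2)$ on the event $\calE$. This is what keeps the heavy-tail penalty at a single $\sqrt{\log m}$ (via the max bound $\max_i|y_i|\le 5K_y\log m$) rather than $\log m$ --- your sketch of ``Bernstein after truncation at poly-$\log m$'' is the right idea, but when you fill in the details you will want this conditioning/decomposition rather than a direct sub-Weibull bound on the full product, or you risk losing a $\sqrt{\log m}$ factor.
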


Since a $d$-layer feedforward neural network generative model from $B^k(r)$ to $\bbR^n$ is typically $L$-Lipschitz continuous with $L = n^{\Theta(d)}$~\cite{bora2017compressed} and we may set $r = n^{\Theta(d)}$, the upper bound in~\eqref{eq:either1} is of order $\sqrt{(k \log L)\cdot(\log m)/m}$. Such a statistical rate is naturally conjectured to be near-optimal according to information-theoretic lower bounds established for MPR with sparse priors~\cite{neykov2020agnostic} and generative model based principal component analysis~\cite{liu2022generative}. Therefore, Theorem~\ref{thm:gpca} reveals that the first step of Algorithm~\ref{algo:mpr_gen} attains the near-optimal statistical rate under appropriate initialization and exact projections. The accurate projection assumption is perhaps the major caveat to Theorem~\ref{thm:gpca}. However, it is a standard assumption in relevant works including~\cite{shah2018solving,hyder2019alternating,jagatap2019algorithmic,peng2020solving,liu2022generative}. In practice, both gradient- and GAN-based projection methods~\cite{shah2018solving,raj2019gan} have been shown to be highly effective in approximating the projection step.

\begin{remark}
 Spectral initialization steps in relevant works on {\em sparsity} based PR~\cite{wang2017sparse,jagatap2019sample,gao2020perturbed,huang2020estimation,cai2021solving,cai2021sample} or MPR~\cite{neykov2020agnostic,yang2019misspecified} require the {\em sub-optimal} sample complexity $O(s^2 \log n)$, where $s$ refers to the number of non-zero entries. In contrary, according to Theorem~\ref{thm:gpca}, our spectral initialization step only requires the {\em near-optimal} $O((k \log L)\cdot (\log m))$ sample complexity (with a linear rather than a quadratic dependence on $k$). However, we note that such an advantage of our spectral initialization step comes at a price. In particular, we require the initialization condition $\bx^T\bw^{(t_0)} \ge c_0$, which is not required by spectral initialization steps in the above-mentioned works on sparse PR/MPR.   
\end{remark}
\begin{remark}
 For some applications, we may assume that the dataset contains only vectors whose elements are all non-negative. For example, this is a natural assumption for image datasets. During pre-training, we can easily set the activation function of the last layer of the neural network generative model to be a certain non-negative function such as ReLU or sigmoid, and the range of such a generative model is contained in the non-negative orthant. Therefore, the assumption that $\bx^T\bw^{(t_0)} \ge c_0$ for a sufficiently small positive constant $c_0$ is also mild. Similar assumptions have been made in relevant works including~\cite{deshpande2014cone,liu2022generative} where it is not appropriate to assume that $-\bx$ is also contained in the structured set (such as a closed convex cone or the range of a deep generative model). As a result, we provide an upper bound on $\|\bw^{(t)} -\bx\|_2$, instead of $\min \{\|\bw^{(t)} -\bx\|_2,\|\bw^{(t)} +\bx\|_2\}$, which is a commonly adopted distance measure in relevant literature on real-valued PR. 
\end{remark}

Moreover, although the projected power iterations in the first step of Algorithm~\ref{algo:mpr_gen} can attain the near-optimal statistical rate under appropriate conditions, it is evident in a large body of literature on PR (see, e.g.,~\cite{netrapalli2015phase,candes2015phase,cai2016optimal,zhang2017nonconvex,yang2019misspecified,neykov2020agnostic}) that such a spectral method better serves as the initialization step of a subsequent iterative approach. This motivates us to propose the second step of Algorithm~\ref{algo:mpr_gen}, and in our numerical experiments, we clearly observe the benefit of the second step. More specifically, compared to simply performing the projected power method, performing both steps of Algorithm~\ref{algo:mpr_gen} leads to significantly better reconstructed images when the total number of iterations is fixed to be the same, namely $T_1 +T_2$.
 
Next, we present the following theorem, which is proved in the supplementary material. This theorem shows that under appropriate initialization and the assumption of exact projections, the iterative algorithm in the second step of Algorithm~\ref{algo:mpr_gen} converges linearly to a point achieving the near-optimal statistical rate of order $\sqrt{(k \log L)\cdot(\log m)/m}$. 
\begin{theorem}\label{thm:mpr_second_main}
 Assume that the step size $\zeta$ and $\bx^{(0)}$, which is the initial vector for the second step of Algorithm~\ref{algo:mpr_gen}, satisfy 
 \begin{equation}\label{eq:imp_cond_secondStep}
  2\cdot|1-\zeta\nu| + 5\zeta \nu \cdot \big\|\bx^{(0)}-\bx\big\|_2 + \beta_1  = 1 -\beta_2,
 \end{equation}
 where both $\beta_1$ and $\beta_2$ are positive constants.\footnote{$\beta_1$ is sufficiently small, and it is used to absorb a certain $o(1)$ term.} Suppose that $m = \Omega((k \log (nLr)) \cdot (\log m))$ with a large enough implied constant. Then, we have with probability $1-O(1/m)$, the following event occurs: There exists a positive integer $T_0 = O\big(\log\big(\frac{m}{(k\log(nLr))\cdot (\log m)}\big)\big)$, such that the sequence $\big\{\|\bx^{(t)}-\bx\|_2\big\}_{t\in [0,T_0]}$ is monotonically decreasing, with the following inequality holds for all $t \le T_0$:
 \begin{equation}
 \left\|\bx^{(t)}-\bx\right\|_2 < (1-\beta_2)^t \cdot\|\bx^{(0)}-\bx\big\|_2 + \frac{CK_y}{\beta_2}\sqrt{\frac{(k\log (nLr))\cdot(\log m)}{m}}.
\end{equation}
 In addition, we have for all $t \ge T_0$ that
 \begin{equation}
  \left\|\bx^{(t)}-\bx\right\|_2  \le \frac{CK_y}{\beta_2}\sqrt{\frac{(k\log (nLr))\cdot(\log m)}{m}} = O\left(\sqrt{\frac{(k\log (nLr))\cdot(\log m)}{m}}\right).
 \end{equation}
\end{theorem}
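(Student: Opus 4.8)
The plan is to analyze the iteration map $\bx^{(t)} \mapsto \bx^{(t+1)}$ defined by~\eqref{eq:def_hatnu}--\eqref{eq:bx_tp1} and show it is a contraction, up to an additive statistical error term, whenever $\bx^{(t)}$ is close enough to $\bx$. The starting point is the observation that~\eqref{eq:def_tildebx} is a gradient-descent step for the surrogate linear model with scale $\hat{\nu}^{(t)}$ and responses $\tilde y_i^{(t)}$. So I would first write $\tilde\bx^{(t+1)} - \bx = \bx^{(t)} - \bx - \frac{\zeta}{m}\sum_i \big(\hat\nu^{(t)}(\ba_i^T\bx^{(t)}) - \tilde y_i^{(t)}\big)\ba_i$ and decompose the stochastic sum into (i) a population term, (ii) a concentration-error term, and (iii) terms involving the approximation errors $\hat\nu^{(t)} - \nu$ and $\bar y - M_y$. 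For the population term, using $\bbE[(y - M_y)(\ba^T\bx) \ba] = \nu \bx$ (which is the same computation underlying $\bbE[\bV] = \nu\bx\bx^T$, already invoked in the paper via~\cite[Lemma~8]{liu2021towards}) and $\bbE[(\ba^T\bx^{(t)})\ba] = \bx^{(t)}$, the noiseless map sends $\bx^{(t)}-\bx$ to $(1-\zeta\nu)(\bx^{(t)}-\bx)$ plus a bilinear remainder that is $O(\zeta\nu\|\bx^{(t)}-\bx\|_2)\cdot\|\bx^{(t)}-\bx\|_2$; this is exactly where the factor $2|1-\zeta\nu| + 5\zeta\nu\|\bx^{(0)}-\bx\|_2$ in~\eqref{eq:imp_cond_secondStep} comes from (the constant $5$ and the factor $2$ being bookkeeping slack absorbing the $\beta_1$ term).

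Next I would invoke a restricted concentration bound over the low-dimensional range of $G$. The key technical ingredient is a uniform bound, valid with probability $1 - O(1/m)$ provided $m = \Omega((k\log(nLr))\cdot(\log m))$, of the form
\begin{equation}
\sup_{\bu,\bv \in \Range(G)\cup(-\Range(G))} \left| \frac{1}{m}\sum_{i=1}^m (y_i - M_y)(\ba_i^T\bu)(\ba_i^T\bv) - \nu (\bx^T\bu)(\bx^T\bv) - \text{(lower order pop.\ terms)} \right| \le C K_y \sqrt{\frac{(k\log(nLr))\cdot(\log m)}{m}},
\end{equation}
together with the analogous bound controlling $\frac{1}{m}\sum_i (y_i-\bar y)(\ba_i^T\bx^{(t)})\ba_i$ in the direction of any fixed test vector in $\Range(G)$. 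This is the generative-prior analogue of the restricted-isometry-type arguments used in the sparse-MPR works, and would be proved by a chaining/$\epsilon$-net argument over $B^k(r)$: the $L$-Lipschitz property transfers a net of $B^k(r)$ (of size $(4Lr/\delta)^k$, i.e., $\log$-cardinality $O(k\log(Lr/\delta))$) to a net of $\Range(G)$, and on each net point a Bernstein inequality for sub-exponential summands (legitimate since $y$ is sub-exponential and $(\ba^T\bu)^2$ is sub-exponential, their product having a controllable $\psi_{1/2}$-type tail — hence the $\log m$ factor) gives the pointwise deviation. The same net also controls $|\hat\nu^{(t)} - \nu|$ and $|\bar y - M_y|$ by the statistical-rate quantity. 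I would present these as a lemma (proved in the supplement) and then feed them into the decomposition.

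Combining these pieces yields the per-step recursion $\|\bx^{(t+1)} - \bx\|_2 \le 2\|\tilde\bx^{(t+1)} - \bx\|_2 \le \rho_t \|\bx^{(t)} - \bx\|_2 + E$, where $\rho_t = 2|1-\zeta\nu| + 5\zeta\nu\|\bx^{(t)}-\bx\|_2 + \beta_1$ and $E = CK_y\beta_2^{-1}\sqrt{(k\log(nLr))(\log m)/m}$ — the factor $2$ from the projection onto $\Range(G)$ being non-expansive only up to a factor of $2$ relative to the target (since $\bx\in\Range(G)$, $\|\calP_G(\bs)-\bx\|_2 \le \|\calP_G(\bs)-\bs\|_2 + \|\bs-\bx\|_2 \le 2\|\bs-\bx\|_2$). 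Under~\eqref{eq:imp_cond_secondStep} we have $\rho_0 = 1-\beta_2 < 1$; an induction shows that as long as $\|\bx^{(t)}-\bx\|_2 \le \|\bx^{(0)}-\bx\|_2$ we keep $\rho_t \le 1-\beta_2$, and unrolling the recursion gives $\|\bx^{(t)}-\bx\|_2 \le (1-\beta_2)^t\|\bx^{(0)}-\bx\|_2 + E/\beta_2$ and monotone decrease while the geometric term dominates $E$. Setting $T_0$ to be the index at which $(1-\beta_2)^{T_0}\|\bx^{(0)}-\bx\|_2 \asymp E$ gives $T_0 = O(\log(m/((k\log(nLr))(\log m))))$ and the stated two-regime conclusion. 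The main obstacle is the uniform concentration lemma: getting the right $\sqrt{k\log(Lr)}$ scaling (linear in $k$, not quadratic) with a clean $\log m$ factor requires carefully handling the heavy-tailed (sub-exponential, or worse) summands — a truncation argument on $|y_i|$ at level $\sim K_y\log m$, controlling the truncation bias, and a Bernstein bound on the truncated variables — and then checking the bilinear-form supremum over the net propagates correctly; everything else is bookkeeping around the contraction factor.
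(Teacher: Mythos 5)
Your overall architecture --- decomposing the gradient step into a population contraction, a restricted concentration error over a $(\delta/L)$-net of $B^k(r)$ transferred to $\Range(G)$, and the scale-factor error $\hat{\nu}^{(t)}-\nu$, followed by an induction on the contraction factor --- matches the paper's proof, and your net-plus-Bernstein concentration lemma and the unrolling of the recursion are essentially what appear in the supplement (Lemmas~\ref{lem:diff_nuhatnu} and~\ref{lem:thm2_eachs1s2}). However, there is a genuine gap at the step
\begin{equation*}
\left\|\bx^{(t+1)}-\bx\right\|_2 \;\le\; 2\left\|\tilde{\bx}^{(t+1)}-\bx\right\|_2 \;\le\; \rho_t\left\|\bx^{(t)}-\bx\right\|_2 + E.
\end{equation*}
The second inequality is false in the regime $m\ll n$. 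The vector $\tilde{\bx}^{(t+1)}$ lives in the ambient space $\bbR^n$, and its distance to $\bx$ is dominated by the fluctuations of $\frac{1}{m}\sum_i(y_i-\bar y)(\ba_i^T\bx^{(t)})\ba_i$ and of $\frac{\hat{\nu}^{(t)}}{m}\sum_i\ba_i\ba_i^T\bx^{(t)}$ around their means; these are sums of $m$ heavy-tailed vectors in $\bbR^n$, so $\|\tilde{\bx}^{(t+1)}-\bx\|_2$ is of order $\sqrt{n/m}\gg 1$, not of order $\rho_t\|\bx^{(t)}-\bx\|_2+\sqrt{(k\log(nLr))(\log m)/m}$. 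Your restricted concentration lemma only controls inner products of the empirical gradient with test directions in (the difference set of) $\Range(G)$, which cannot bound the full Euclidean norm of $\tilde{\bx}^{(t+1)}-\bx$; and the crude projection bound $\|\calP_G(\bs)-\bx\|_2\le 2\|\bs-\bx\|_2$ is only useful when $\|\bs-\bx\|_2$ is itself small.

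The paper avoids this by never bounding $\|\tilde{\bx}^{(t+1)}-\bx\|_2$: the optimality of the projection, $\|\tilde{\bx}^{(t+1)}-\bx^{(t+1)}\|_2\le\|\tilde{\bx}^{(t+1)}-\bx\|_2$, expands to
\begin{equation*}
\big\|\bx^{(t+1)}-\bx\big\|_2^2 \;\le\; 2\big\langle\tilde{\bx}^{(t+1)}-\bx,\;\bx^{(t+1)}-\bx\big\rangle,
\end{equation*}
so the error vector is only ever tested against the direction $\bx^{(t+1)}-\bx$, which does lie in $G(M)-\bx$ up to a $\delta$-perturbation; this is exactly where your restricted concentration applies, and dividing through by $\|\bx^{(t+1)}-\bx\|_2$ (the paper's ``quadratic formula'' step) yields the contraction factor $2\cdot|1-\zeta\nu|+5\zeta\nu\|\bx^{(t)}-\bx\|_2+o(1)$. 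The factor $2$ in~\eqref{eq:imp_cond_secondStep} is the price of this inner-product trick, not of a non-expansiveness argument. Replacing your middle inequality with this three-point bound (and keeping everything else) recovers the paper's proof.
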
 
\begin{remark}\label{remark:comp_init_thms12}
 In our analysis, we need to impose the assumption~\eqref{eq:imp_cond_secondStep} on the step size $\zeta$ and initial vector $\bx^{(0)}$. This makes the first step of Algorithm~\ref{algo:mpr_gen} necessary since when $\|\bx^{(0)} - \bx\|_2$ is not small, say $\|\bx^{(0)} - \bx\|_2 = 1$, the condition~\eqref{eq:imp_cond_secondStep} cannot be satisfied. In comparison, to attain the near-optimal statistical rate $O(\sqrt{(k \log L)\cdot(\log m)/m})$, the initialization condition of the first step of Algorithm~\ref{algo:mpr_gen} is milder, and $\bx^T\bw^{(t_0)}$ (see Theorem~\ref{thm:gpca}) only needs to be lower bounded by a sufficiently small positive constant (thus $\|\bw^{(t_0)}-\bx\|_2$ can be close to $\sqrt{2}$). However, although the second step of Algorithm~\ref{algo:mpr_gen} requires a more restrictive initialization condition, we observe from our experimental results that it clearly refines the estimate of the first step. Such a phenomenon is also observed in various works related to PR, including~\cite{netrapalli2015phase,candes2015phase,cai2016optimal,zhang2017nonconvex,yang2019misspecified,neykov2020agnostic}.
\end{remark}
\begin{remark}
 In Remark~\ref{remark:comp_init_thms12}, we have briefly discussed the comparison of the initialization condition $\bx^T\bw^{(t_0)} \ge c_0$ in Theorem~\ref{thm:gpca} and the typical initialization condition $\|\bx - \bw^{(t_0)}\|_2 < \delta\|\bx\|_2$. In the following, we provide a more detailed discussion: When both $\bx$ and $\bw^{(t_0)}$ are unit vectors (this is the setting of our Theorem~\ref{thm:gpca}), the typical initialization requirement $\|\bx - \bw^{(t_0)}\|_2 < \delta\|\bx\|_2$ can be reduced to $2\big(1-\bx^T\bw^{(t_0)}\big) < \delta^2$, or equivalently, $\bx^T\bw^{(t_0)} 1- \frac{\delta^2}{2}$. Note that $\delta$ is typically a small positive constant (e.g., $\delta = \frac{1}{6}$ in~\cite{cai2016optimal} and $\delta = \frac{1}{8}$ in~\cite{candes2015phase}), and thus the typical initialization condition requires $\bx^T\bw^{(t_0)}$ to be larger than some positive constant that is close to $1$. This is stronger than the assumption $\bx^T\bw^{(t_0)} \ge c_0$,\footnote{It basically assumes the weak recovery of the signal, see, e.g.,~\cite{mondelli2018fundamental}.} where $c_0$ is a sufficiently small positive constant. 
\end{remark}

\begin{remark}\label{remark:stepSize}
 The condition in~\eqref{eq:imp_cond_secondStep} requires $|1-\zeta \nu| < \frac{1}{2}$. This reveals that we should choose $\zeta$ such that $\zeta \in \big(\frac{1}{2\nu},\frac{3}{2\nu}\big)$, and a good choice of $\zeta$ is $\zeta = \frac{1}{\nu}$ (for this case, the condition~\eqref{eq:imp_cond_secondStep} reduces to $\|\bx^{(0)}-\bx\|_2 < \frac{1}{5}$). Since the knowledge of the link function $f$ (and thus the knowledge of $\nu$, which is dependent on $f$; See~\eqref{eq:cond_MPR_g0g}) cannot be assumed, in our experiments, we use $\hat{\nu}^{(t)}$ to approximate $\nu$. That is, $\zeta$ is set to be $\frac{1}{\hat{\nu}^{(t)}}$ in the $t$-th iteration of the second step of Algorithm~\ref{algo:mpr_gen} (though it is slightly varying instead of being fixed). 
\end{remark}
  
  \section{Numerical Results}

 In this section, we demonstrate the empirical performance of our Algorithm~\ref{algo:mpr_gen} (denoted by~\texttt{MPRG}). We remark that these numerical experiments are proof-of-concept rather than seeking to be comprehensive since our contributions are primarily theoretical. We present some numerical results for the MNIST~\cite{lecun1998gradient} dataset in the main document. Additional results for the MNIST dataset and some experimental results for the CelebA~\cite{liu2015deep} dataset are presented in the supplementary material. 
 
 The MNIST dataset contains $60,000$ images of handwritten digits. The size of each image is $28 \times 28$, and thus the dimension of the image vector is $n = 784$. For the MNIST dataset, the generative model $G$ is set to be (the normalized version of) a pre-trained variational autoencoder (VAE) model with the latent dimension being $k = 20$. We make use of the VAE model pre-trained by the authors of~\cite{bora2017compressed} directly. For this VAE model, both the encoder and decoder are set to be fully connected neural networks with two hidden layers, and the architecture is $20-500-500-784$. The VAE model is trained by the Adam optimizer with a mini-batch size $100$ and a learning rate of $0.001$, and is trained from the images in the training set. The projection step $\calP_G(\cdot)$ ({\em cf.}~\eqref{eq:pg_ppower}) is approximated by the Adam optimizer with a learning rate of $0.1$ and $120$ steps. 
 
 We report the results on $10$ testing images that are selected from the test set. Note that these images are unseen by the pre-trained generative model. We perform $10$ random restarts. For reconstructed images, we choose the best among these $10$ random restarts to reduce the impact of local minima. We also provide quantitative comparisons with respect to the reconstruction error $\|\hat{\bx}-\bx\|_2$, where $\bx$ is the underlying signal and $\hat{\bx}$ refers to the estimated (normalized) vector produced by each of the methods described below. The reconstruction error is averaged over both the $10$ testing images and $10$ random restarts. All experiments are run using Python 3.6 and TensorFlow 1.5.0, with a NVIDIA GeForce GTX 1080 Ti 11GB GPU. 
 
For Algorithm~\ref{algo:mpr_gen}, we set $T_1 = 20$ and $T_2 =30$. As mentioned in Section~\ref{sec:algo}, the starting point $\bw^{(0)}$ is set to be the column of $\frac{1}{m}\sum_{i=1}^m y_i \ba_i\ba_i^T$ (i.e., a shifted version of $\bV$ defined in~\eqref{eq:defbV}) that corresponds to the largest diagonal entry. In addition, as mentioned in Remark~\ref{remark:stepSize}, we set the step size $\zeta$ as $\zeta=\frac{1}{\hat{\nu}^{(t)}}$ ({\em cf.}~\eqref{eq:def_hatnu}) in the $t$-th iteration of the second step of Algorithm~\ref{algo:mpr_gen}. We compare our Algorithm~\ref{algo:mpr_gen} (denoted by~\texttt{MPRG}) with the following methods: 1) The method proposed in~\cite{yang2019misspecified}, which is for misspecified phase retrieval with sparse priors and is denoted by~\texttt{MPRS}. All the involved parameters are set to be the same as those in~\cite{yang2019misspecified}. 2) Simply performing the first step of Algorithm~\ref{algo:mpr_gen} for $T_1 + T_2 = 50$ iterations. The corresponding method is denoted by~\texttt{PPower}. The purpose of comparing to~\texttt{PPower} is to verify the benefit of the second step of Algorithm~\ref{algo:mpr_gen}. 3) Simply performing the second step of Algorithm~\ref{algo:mpr_gen} for $T_1 + T_2 = 50$ iterations. The corresponding method is denoted by~\texttt{Step2}. The purpose of comparing to~\texttt{Step2} is to check whether the first step of Algorithm~\ref{algo:mpr_gen} is practically useful. 4) The Alternating Phase Projected Gradient Descent (denoted by~\texttt{APPGD}) algorithm proposed in~\cite{hyder2019alternating}. This algorithm is specifically designed for phase retrieval with magnitude-only measurements ({\em cf.}~\eqref{eq:magOnly}) and generative priors, and the corresponding iterative procedure is
\begin{align}
 & \bx^{(t+1)} = \calP_G\left(\bx^{(t)} - \frac{\tau}{m} \sum_{i=1}^m \left(\left(\ba_i^T\bx^{(t)}\right) - y_i \cdot \mathrm{sign}\left(\ba_i^T\bx^{(t)}\right)\right)\ba_i\right), 
\end{align}
where $\tau>0$ is the step size. We follow~\cite{hyder2019alternating} to set $\tau = 0.9$. For a fair comparison, we use the vector produced after $T_1 = 20$ iterations of the first step of Algorithm~\ref{algo:mpr_gen} as the initialization vector of~\texttt{APPGD}, and then we run~\texttt{APPGD} for $T_2 = 30$ iterations. 
 
 We first consider the noisy magnitude-only measurement models for $i \in [m]$, 
 \begin{align}
  & y_i = |\ba_i^T\bx| +\eta_i, \label{eq:noisy_magModel1} \\
 & y_i = |\ba_i^T\bx +\eta_i|, \label{eq:noisy_magModel2}
 \end{align}
where $\eta_i$ are i.i.d. realizations of an $\calN(0,\sigma^2)$ random variable. For such a measurement model, the corresponding random nonlinear link function $f$ is $f(x) = |x| + \eta$ or $f(x) = |x + \eta|$, where $\eta \sim \calN(0,\sigma^2)$. The numerical results for~\eqref{eq:noisy_magModel1} and~\eqref{eq:noisy_magModel2} are demonstrated in Figures~\ref{fig:MNIST_magOnly} and~\ref{fig:quant_MNIST_magOnly}. From these figures, we observe that the sparsity based method~\texttt{MPRS} attains poor reconstructions, and the results of~\texttt{PPower} are not desirable. The three methods~\texttt{Step2},~\texttt{APPGD},~\texttt{MPRG} lead to similar reconstruction error, but the reconstructed images of~\texttt{APPGD} and~\texttt{MPRG} are better than those of~\texttt{Step2}. In particular, our method~\texttt{MPRG} leads to mostly accurate reconstructions that are competitive compared to those of~\texttt{APPGD}, even if we do not make use of the knowledge of the link function $f$ and~\texttt{MPRG} is not specifically designed for the magnitude-only measurement models. 

 \begin{figure}[h]
\begin{center}
\begin{tabular}{cc}
 \includegraphics[height=0.22\textwidth]{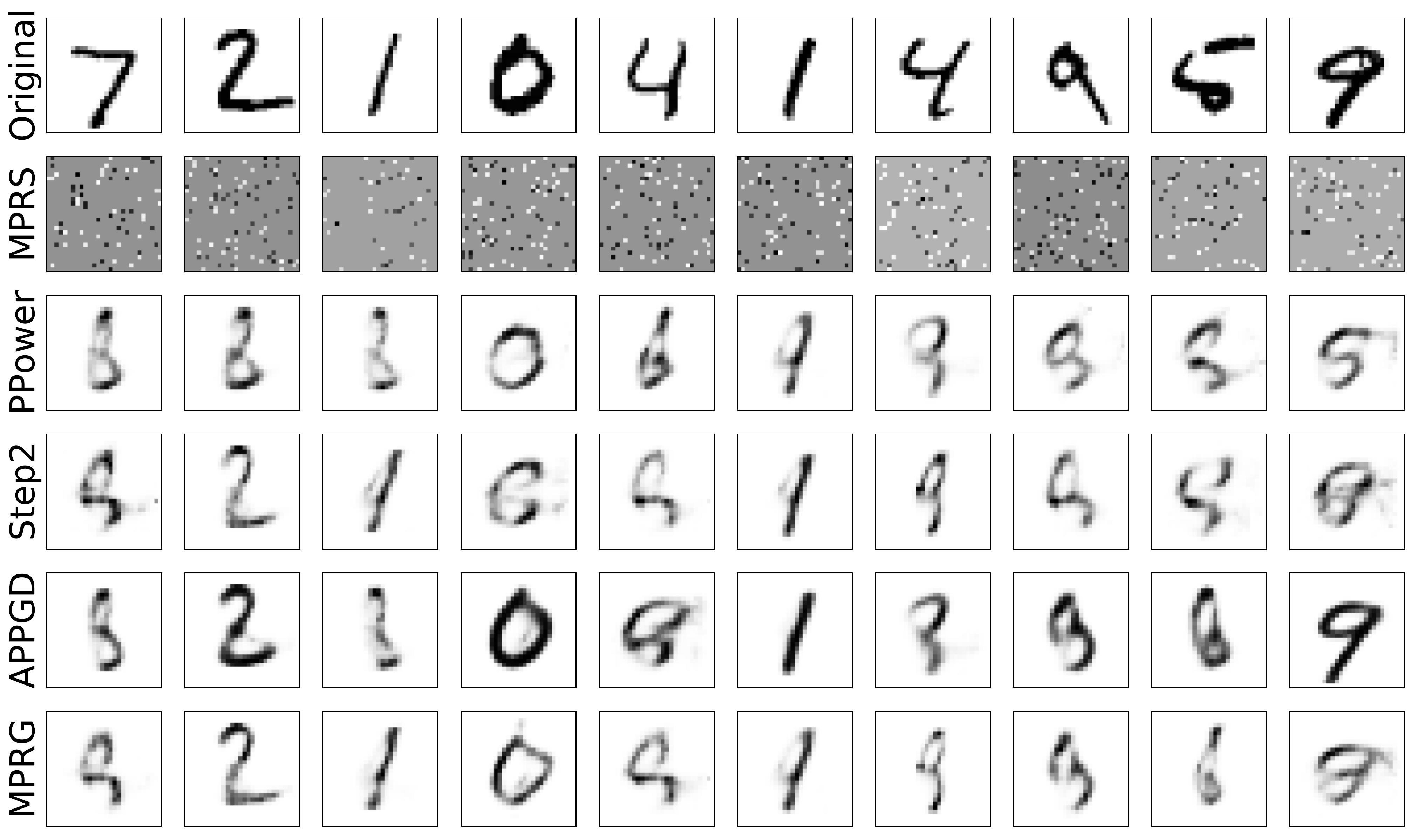} & \hspace{0.1cm}
\includegraphics[height=0.22\textwidth]{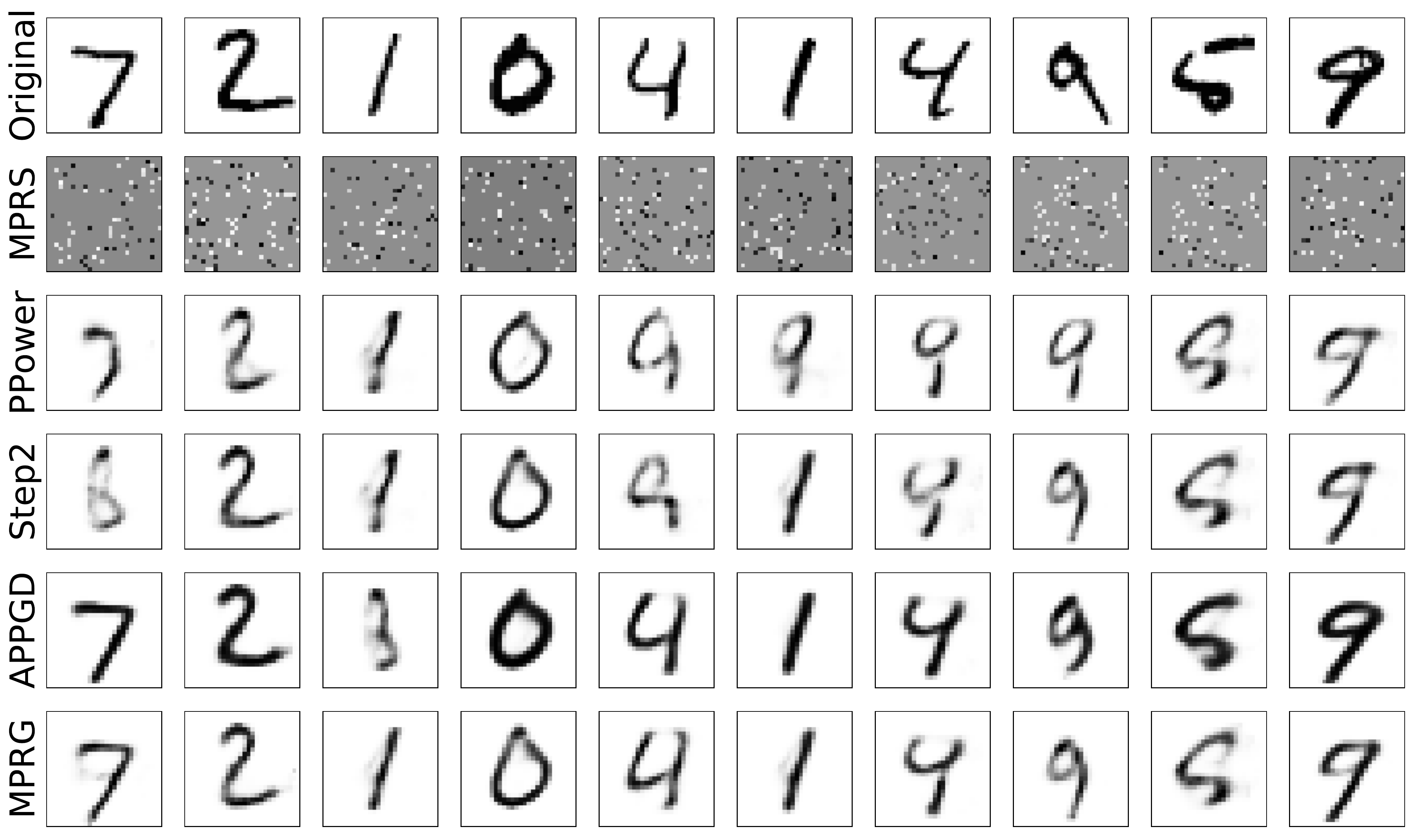} \\
{\small (a) ~\eqref{eq:noisy_magModel1} with $m = 200$ and $\sigma = 0$}  & {\small (b) ~\eqref{eq:noisy_magModel2} with $m =400$ and $\sigma =0.01$}
\end{tabular}
\caption{Examples of reconstructed MNIST images for the measurement models~\eqref{eq:noisy_magModel1} and~\eqref{eq:noisy_magModel2}.}
\label{fig:MNIST_magOnly}
\end{center}
\end{figure} 

 \begin{figure}[h]
\begin{center}
\begin{tabular}{cc}
\includegraphics[height=0.35\textwidth]{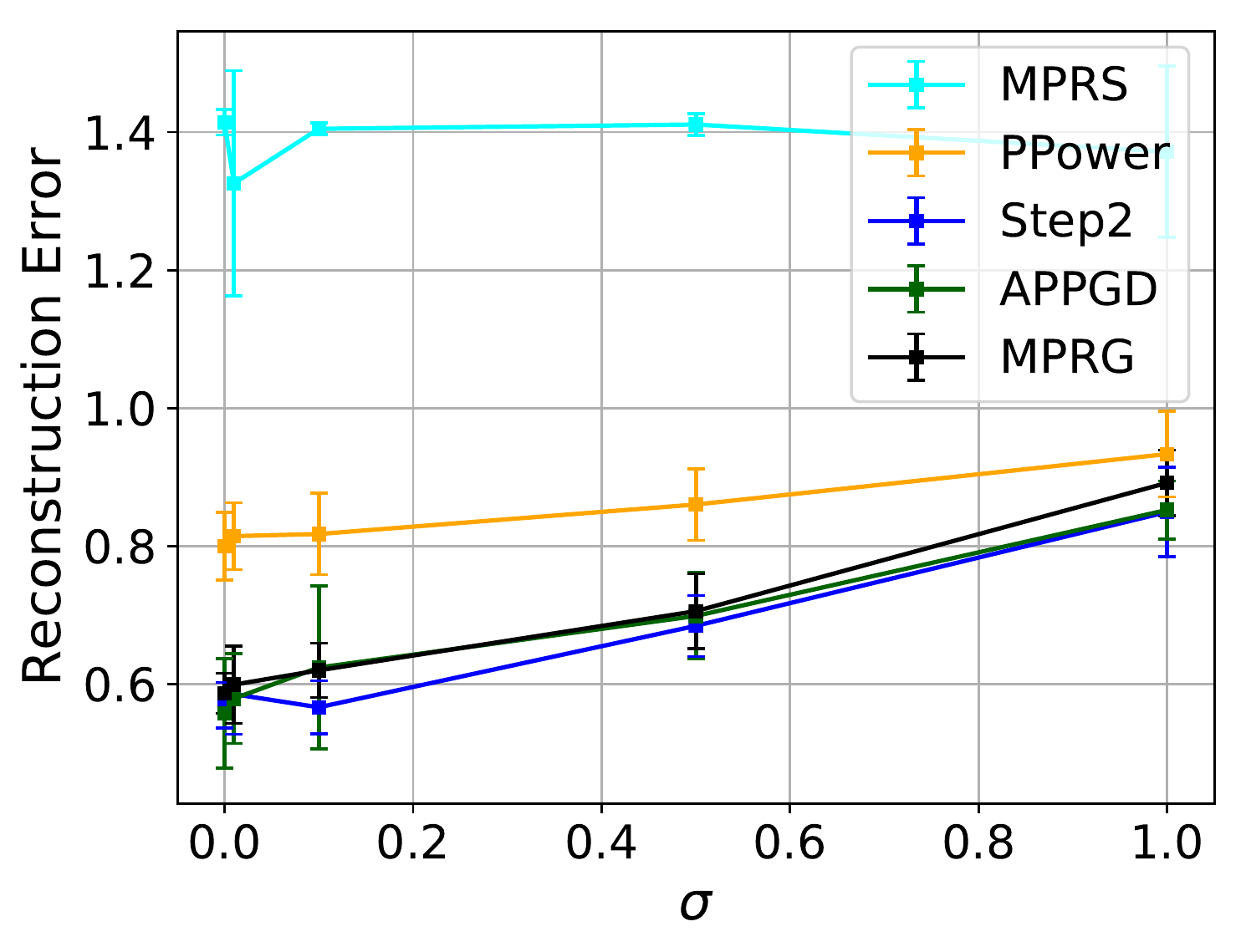} & \hspace{-0.5cm}
\includegraphics[height=0.35\textwidth]{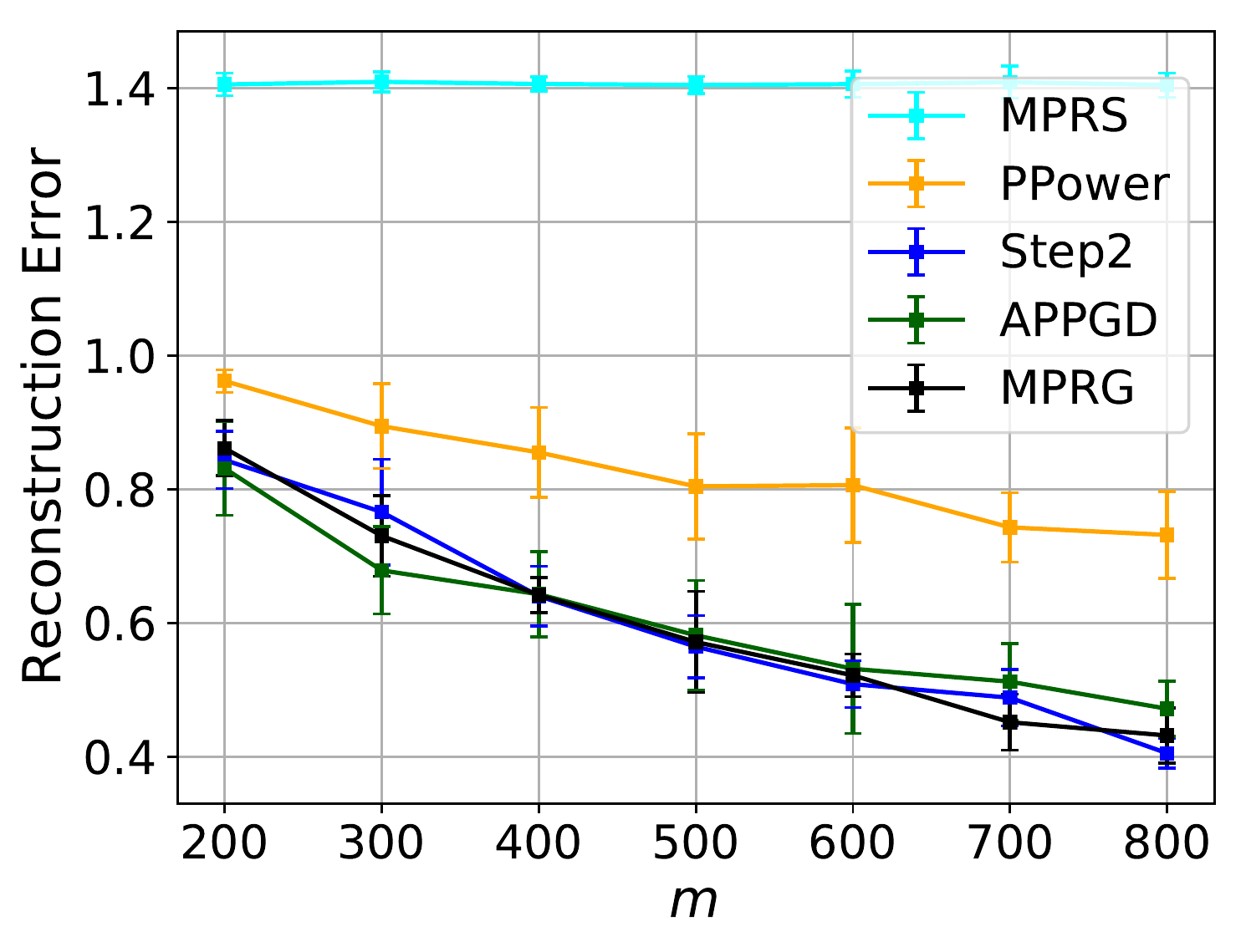} \\
{\small (a) ~\eqref{eq:noisy_magModel1} with fixed $m = 400$} & {\small (b) ~\eqref{eq:noisy_magModel2} with fixed $\sigma = 0.1$}
\end{tabular}
\caption{Quantitative comparisons of the performance for the measurement model~\eqref{eq:noisy_magModel1} and~\eqref{eq:noisy_magModel2}.} \label{fig:quant_MNIST_magOnly}
\end{center}
\end{figure} 

Next, we consider the following two measurement models:
\begin{align}
 & y_i = |\ba_i^T\bx| + 2\tanh(|\ba_i^T\bx|) + \eta_i,\label{eq:tanhModel} \\
 & y_i = 2(\ba_i^T\bx)^2 + 3\sin(|\ba_i^T\bx|) + \eta_i,\label{eq:sinModel}
\end{align}
where again $\eta_i$ are i.i.d. realizations of an $\calN(0,\sigma^2)$ random variable. For both models in~\eqref{eq:tanhModel} and~\eqref{eq:sinModel}, the corresponding link functions satisfy the condition~\eqref{eq:cond_MPR_g0g} for MPR~\cite{yang2019misspecified}. The numerical results are presented in Figures~\ref{fig:MNIST_imgs_twoModels} and~\ref{fig:quant_MNIST_twoModels}. We observe from these figures that for the measurement models~\eqref{eq:tanhModel} and~\eqref{eq:sinModel}, our method~\texttt{MPRG} achieves the best reconstructions. In particular, it outperforms~\texttt{APPGD} in terms of recovery quality and/or reconstruction error.

 \begin{figure}
\begin{center}
\begin{tabular}{cc}
 \includegraphics[height=0.22\textwidth]{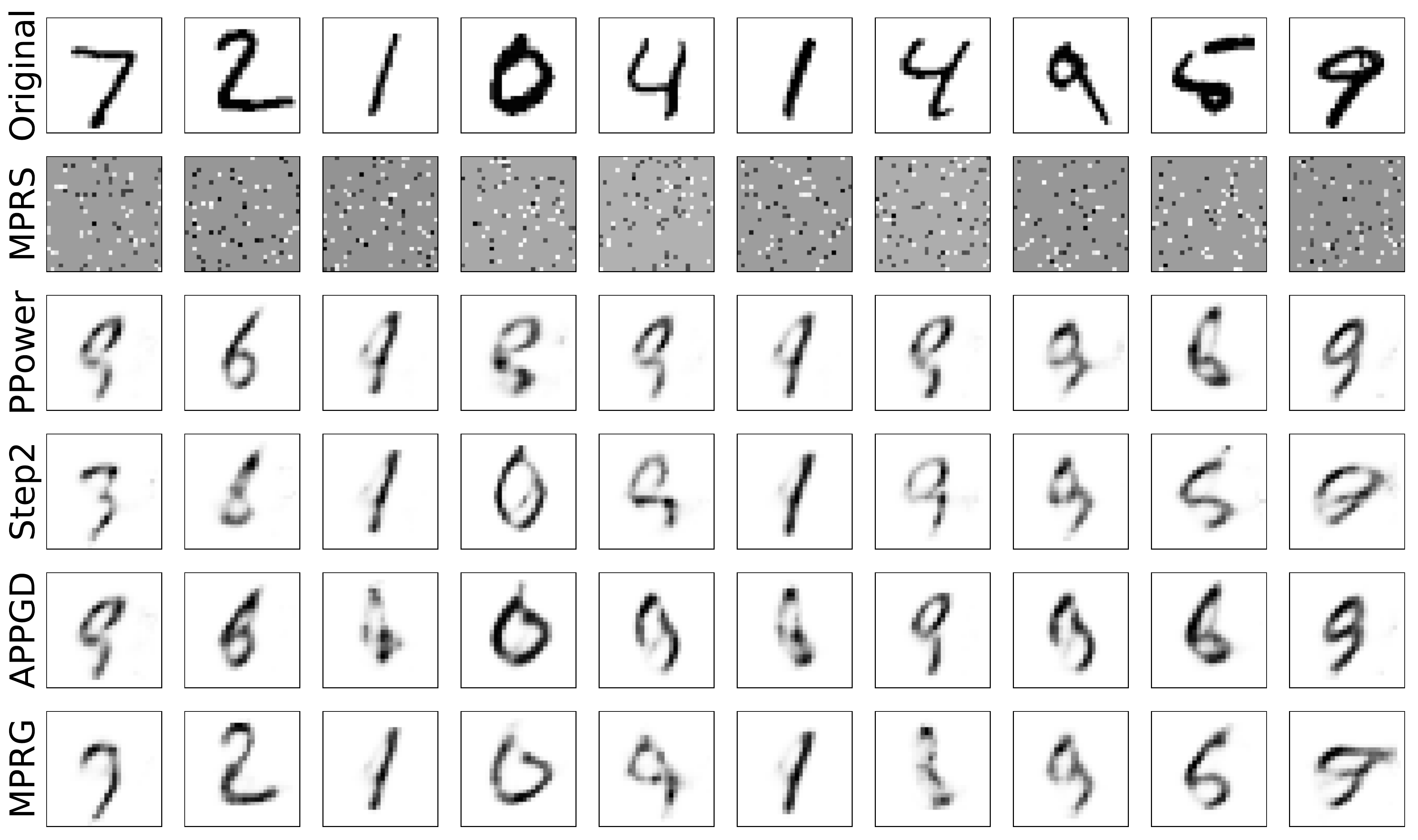} & \hspace{0.1cm}
\includegraphics[height=0.22\textwidth]{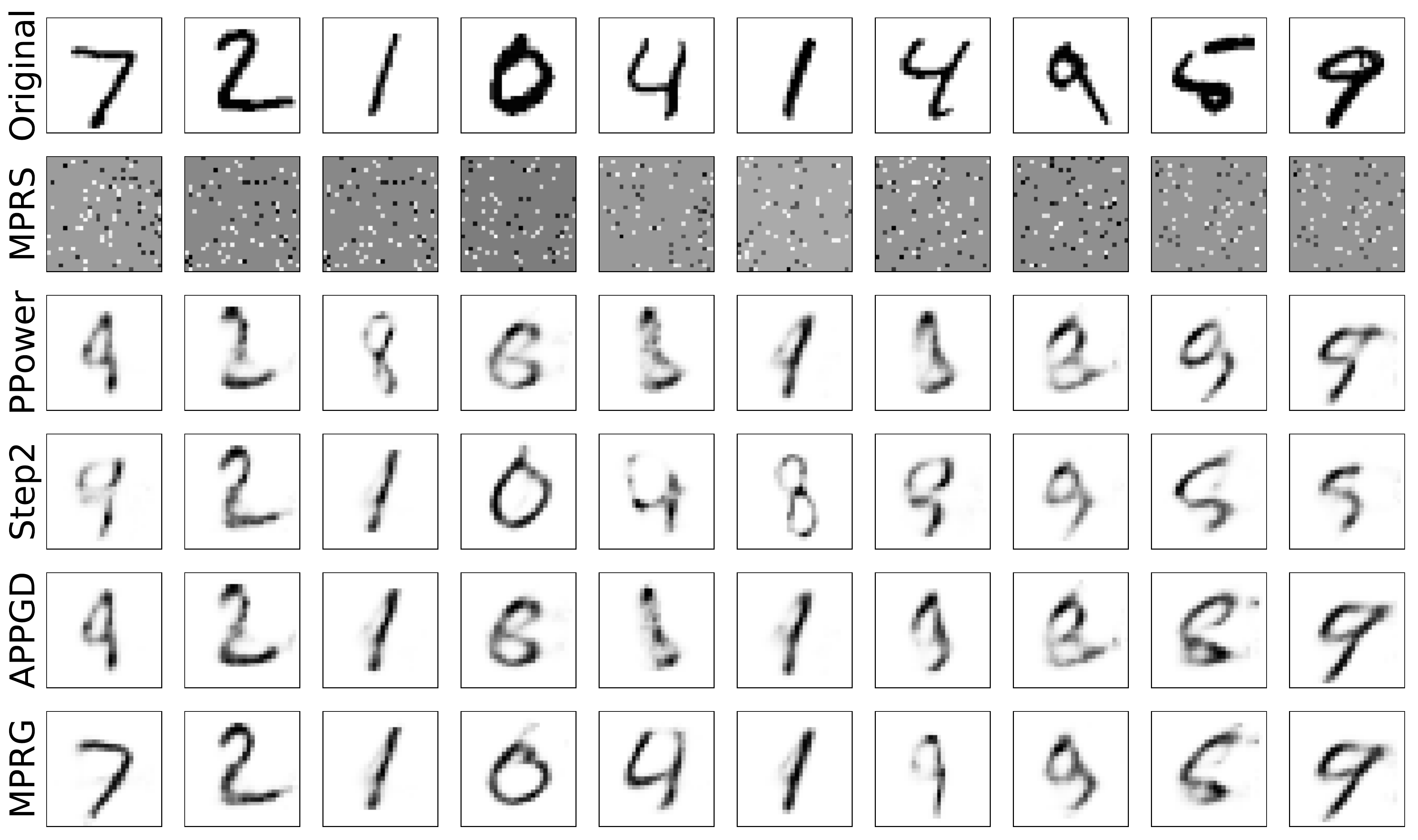} \\
{\small (a)~\eqref{eq:tanhModel} with $m = 200$ and $\sigma = 0.01$}  & {\small (b)~\eqref{eq:sinModel} with $m =300$ and $\sigma =0.5$}
\end{tabular}
\caption{Examples of reconstructed MNIST images for the models in~\eqref{eq:tanhModel} and~\eqref{eq:sinModel}.}
\label{fig:MNIST_imgs_twoModels}
\end{center}
\end{figure} 

 \begin{figure}
\begin{center}
\begin{tabular}{cc}
\includegraphics[height=0.35\textwidth]{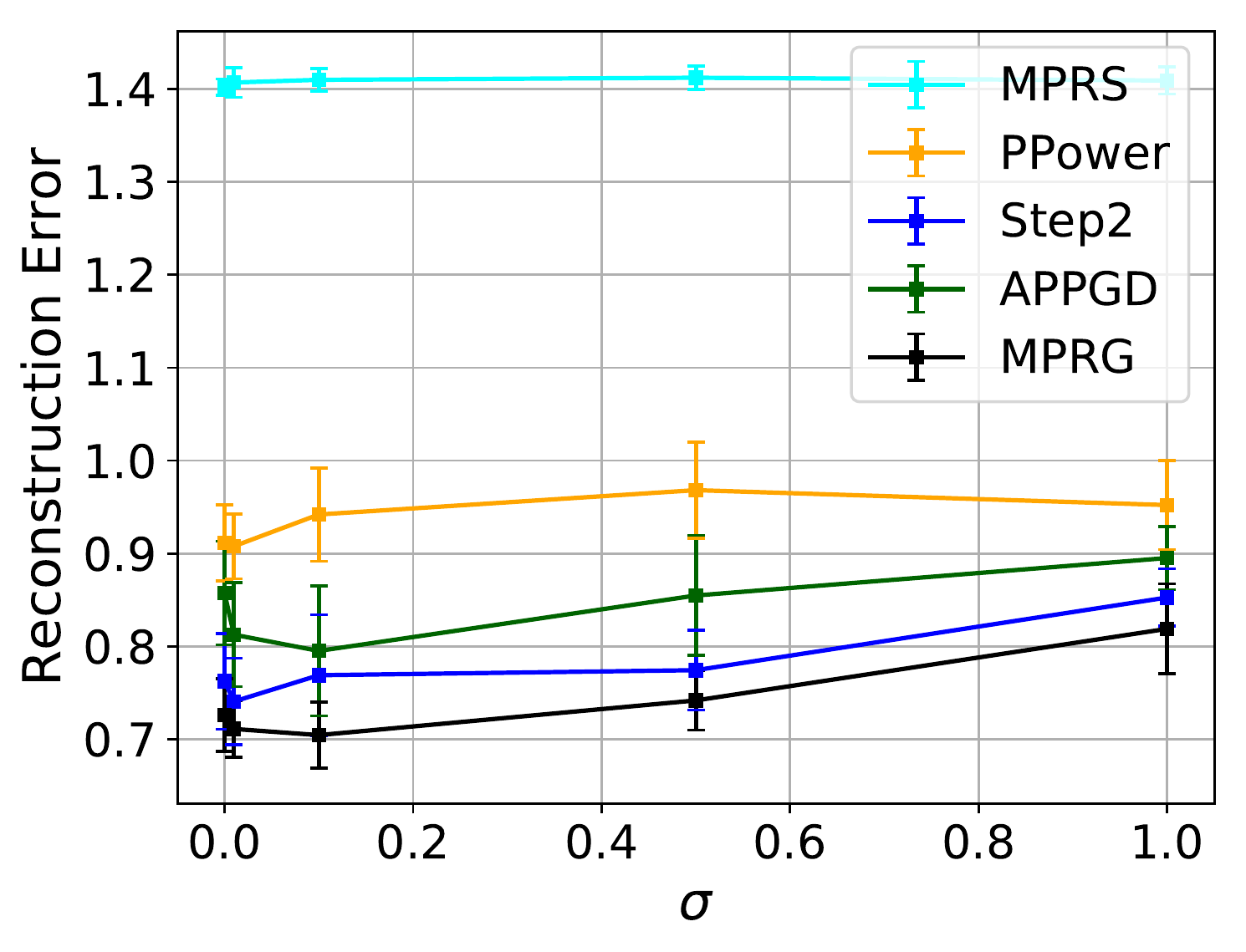} & \hspace{-0.5cm}
\includegraphics[height=0.35\textwidth]{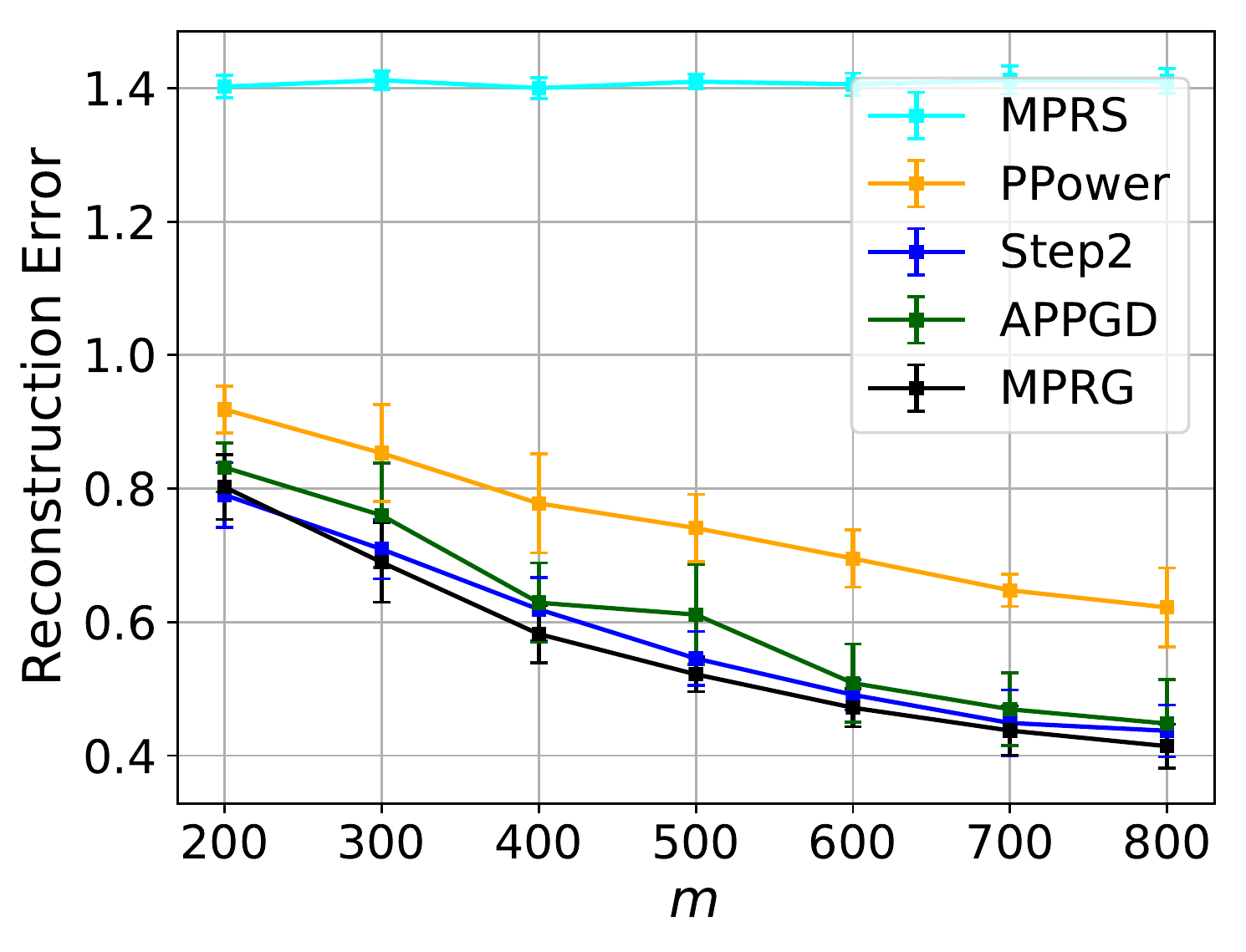} \\
{\small (a)~\eqref{eq:tanhModel} with fixed $m =200$} & {\small (b)~\eqref{eq:sinModel} with fixed $\sigma = 0.5$}
\end{tabular}
\caption{Quantitative comparisons of the performance for the measurement models in~\eqref{eq:tanhModel} and~\eqref{eq:sinModel}.} \label{fig:quant_MNIST_twoModels}
\end{center}
\end{figure} 

\section{Conclusion and Future Work}
\label{sec:conclusion}

We have proposed a two-step approach for phase retrieval under model misspecification and generative priors. We show that under suitable conditions, both steps of our approach obtain estimated vectors that achieve the near-optimal statistical rate of order $\sqrt{(k\log L)\cdot(\log m)/m}$, where $k$ is the latent dimension and $L$ is the Lipschitz constant of the generative model respectively, and $m$ refers to the number of samples. 

We assume accurate projections in our analysis and use a gradient-based method to approximate the projection step $\calP_G(\cdot)$ in our experiments. Although the exact projection assumption is commonly made in relevant works~\cite{shah2018solving,hyder2019alternating,jagatap2019algorithmic,peng2020solving,liu2022generative}, it is a very interesting future research direction to design provably-guaranteed efficient methods for the projection step. 

In addition, we focus on real Gaussian measurements. While we believe that based on the technical results in~\cite{netrapalli2015phase,candes2015phase} (which study complex Gaussian measurements), it is straightforward to extend our work to the complex case, the extension to more practical non-Gaussian measurement models such as sub-sampled Fourier measurements is a very interesting future direction. Another direction is to use different preprocessing functions to enhance the performance of our spectral initialization method~\cite{luo2019optimal,lu2020phase,maillard2022construction}. Moreover, if one has the access to the nonlinear link function $f$, the Bayes-optimal performances can characterized using message-passing algorithms~\cite{barbier2019optimal,maillard2020phase,aubin2020exact}. It would be interesting to connect or compare our results with the corresponding Bayes-optimal rate. 

\smallskip
{\bf Acknowledgment.} J. Liu was partially supported by NSFC Grant \#12288201 and the Fund of the Youth Innovation Promotion Association, CAS (2022002). We sincerely thank the five anonymous reviewers and area chair for their careful reading and helpful comments. We are extremely grateful to Dr. Jonathan Scarlett for proofreading the manuscript and giving valuable suggestions.

%\newpage 
\bibliographystyle{myIEEEtran}
\bibliography{writeups}

%%%%%%%%%%%%%%%%%%%%%%%%%%%%%%%%%%%%%%%%%%%%%%%%%%%%%%%%%%%%
\section*{Checklist}

%%% BEGIN INSTRUCTIONS %%%
%The checklist follows the references.  Please read the checklist guidelines carefully for information on how to answer these questions.  For each question, change the default \answerTODO{} to \answerYes{},
%\answerNo{}, or \answerNA{}.  You are strongly encouraged to include a {\bf justification to your answer}, either by referencing the appropriate section of your paper or providing a brief inline description.  For example:
%\begin{itemize}
%  \item Did you include the license to the code and datasets? \answerYes{See Section~\ref{gen_inst}.}
%  \item Did you include the license to the code and datasets? \answerNo{The code and the data are proprietary.}
%  \item Did you include the license to the code and datasets? \answerNA{}
%\end{itemize}
%Please do not modify the questions and only use the provided macros for your answers.  Note that the Checklist section does not count towards the page limit.  In your paper, please delete this instructions block and only keep the Checklist section heading above along with the questions/answers below.
%%% END INSTRUCTIONS %%%

\begin{enumerate}

\item For all authors...
\begin{enumerate}
  \item Do the main claims made in the abstract and introduction accurately reflect the paper's contributions and scope?
    \answerYes{}
  \item Did you describe the limitations of your work?
    \answerYes{} The limitations are relevant to our future work, described in Section~\ref{sec:conclusion}.
  \item Did you discuss any potential negative societal impacts of your work?
     \answerNA{}
  \item Have you read the ethics review guidelines and ensured that your paper conforms to them?
    \answerYes{}
\end{enumerate}

\item If you are including theoretical results...
\begin{enumerate}
  \item Did you state the full set of assumptions of all theoretical results?
    \answerYes{}
        \item Did you include complete proofs of all theoretical results?
   \answerYes{}
\end{enumerate}

\item If you ran experiments...
\begin{enumerate}
  \item Did you include the code, data, and instructions needed to reproduce the main experimental results (either in the supplemental material or as a URL)?
    \answerYes{} The code is included in the supplementary material. 
  \item Did you specify all the training details (e.g., data splits, hyperparameters, how they were chosen)?
     \answerYes{}
        \item Did you report error bars (e.g., with respect to the random seed after running experiments multiple times)?
    \answerYes{}
        \item Did you include the total amount of compute and the type of resources used (e.g., type of GPUs, internal cluster, or cloud provider)?
    \answerYes{}
\end{enumerate}

\item If you are using existing assets (e.g., code, data, models) or curating/releasing new assets...
\begin{enumerate}
  \item If your work uses existing assets, did you cite the creators?
    \answerNA{}
  \item Did you mention the license of the assets?
    \answerNA{}
  \item Did you include any new assets either in the supplemental material or as a URL?
    \answerNA{}
  \item Did you discuss whether and how consent was obtained from people whose data you're using/curating?
    \answerNA{}
  \item Did you discuss whether the data you are using/curating contains personally identifiable information or offensive content?
    \answerNA{}
\end{enumerate}

\item If you used crowdsourcing or conducted research with human subjects...
\begin{enumerate}
  \item Did you include the full text of instructions given to participants and screenshots, if applicable?
    \answerNA{}
  \item Did you describe any potential participant risks, with links to Institutional Review Board (IRB) approvals, if applicable?
    \answerNA{}
  \item Did you include the estimated hourly wage paid to participants and the total amount spent on participant compensation?
    \answerNA{}
\end{enumerate}

\end{enumerate}

%%%%%%%%%%%%%%%%%%%%%%%%%%%%%%%%%%%%%%%%%%%%%%%%%%%%%%%%%%%%

\newpage
\appendix

{\centering
    {\huge \bf Supplementary Material}
    
    {\Large \bf Misspecified Phase Retrieval with Generative Priors \\
    (NeurIPS 2022)  
    
     \large \normalfont  Zhaoqiang Liu, Xinshao Wang, and Jiulong Liu  \par } 
}

\section{Auxiliary Results}
\label{sec:aux}

In this section, we first provide some useful auxiliary results that are general, and then some that are specific to our setup.

\subsection{General Auxiliary Results}

First, we state the following standard definitions for a sub-Gaussian random variable and the associated sub-Gaussian norm.
\begin{definition} \label{def:subg}
 A random variable $X$ is said to be sub-Gaussian if there exists a positive constant $C$ such that $\left(\mathbb{E}\left[|X|^{p}\right]\right)^{1/p} \leq C  \sqrt{p}$ for all $p\geq 1$. The sub-Gaussian norm of a sub-Gaussian random variable $X$ is defined as $\|X\|_{\psi_2}:=\sup_{p\ge 1} p^{-1/2}\left(\mathbb{E}\left[|X|^{p}\right]\right)^{1/p}$. 
\end{definition}

Recall that the definitions of a sub-exponential random variable and the associated sub-exponential norm have been provided in Definition~\ref{def:subExp} in the main document. The following lemma states that the product of two sub-Gaussian random variables is sub-exponential.
\begin{lemma}{\em (\hspace{1sp}\cite[Lemma~2.7.7]{vershynin2018high})}\label{lem:prod_subGs}
  Let $X$ and $Y$ be sub-Gaussian random variables (not necessarily independent). Then $XY$ is sub-exponential, and satisfies
  \begin{equation}
   \|XY\|_{\psi_1} \le \|X\|_{\psi_2} \cdot \|Y\|_{\psi_2}.
  \end{equation}
\end{lemma}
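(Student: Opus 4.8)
The plan is to follow the standard argument (as in \cite{vershynin2018high}): reduce to a normalized case, apply a Young-type pointwise inequality to dominate $|XY|$ by a combination of $X^2$ and $Y^2$, and then use that the $\psi_1$-norm of a square is controlled by the square of the $\psi_2$-norm. First I would dispose of the degenerate case: if $\|X\|_{\psi_2}=0$ or $\|Y\|_{\psi_2}=0$, then the corresponding variable is almost surely $0$, hence $XY=0$ a.s.\ and the claimed bound is trivial. Otherwise, since both $\|\cdot\|_{\psi_2}$ and $\|\cdot\|_{\psi_1}$ are positively homogeneous in their argument, I may rescale $X$ and $Y$ and assume $\|X\|_{\psi_2}=\|Y\|_{\psi_2}=1$; it then suffices to show $\|XY\|_{\psi_1}\le 1$.

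Next comes the key pointwise step: by the arithmetic--geometric mean (Young) inequality, $|XY|\le \tfrac12 X^2+\tfrac12 Y^2$ surely. The functional $\|\cdot\|_{\psi_1}=\sup_{p\ge 1}p^{-1}(\bbE|\cdot|^p)^{1/p}$ of Definition~\ref{def:subExp} is subadditive (this follows by applying Minkowski's inequality inside the supremum over $p$) and monotone under pointwise domination of absolute values, so $\|XY\|_{\psi_1}\le \tfrac12\|X^2\|_{\psi_1}+\tfrac12\|Y^2\|_{\psi_1}$. It remains to bound $\|X^2\|_{\psi_1}$ by $\|X\|_{\psi_2}^2$: writing $p^{-1}(\bbE|X^2|^p)^{1/p}=p^{-1}\big((\bbE|X|^{2p})^{1/(2p)}\big)^2$ and invoking $(\bbE|X|^{2p})^{1/(2p)}\le \sqrt{2p}\,\|X\|_{\psi_2}$ (valid since $2p\ge 1$, by Definition~\ref{def:subg}) controls this expression, and symmetrically for $Y$; combining the two estimates gives $\|XY\|_{\psi_1}\le \|X\|_{\psi_2}\|Y\|_{\psi_2}$, up to an absolute constant, and the constant is exactly $1$ with the Orlicz-norm normalization of \cite{vershynin2018high}. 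Equivalently, one may bypass Young's inequality and argue directly via Cauchy--Schwarz: $(\bbE|XY|^p)^{1/p}\le (\bbE|X|^{2p})^{1/(2p)}(\bbE|Y|^{2p})^{1/(2p)}\le 2p\,\|X\|_{\psi_2}\|Y\|_{\psi_2}$, whence $\|XY\|_{\psi_1}\le 2\|X\|_{\psi_2}\|Y\|_{\psi_2}$.

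The only genuinely subtle point — and it is minor here, since the statement is quoted verbatim from \cite{vershynin2018high} and only sub-exponentiality of $XY$ (not the sharp constant) is used in the sequel — is the precise constant in the passage from $\|X^2\|_{\psi_1}$ to $\|X\|_{\psi_2}^2$, which depends on which of the several equivalent characterizations of the sub-Gaussian and sub-exponential norms one adopts: the clean identity $\|X^2\|_{\psi_1}=\|X\|_{\psi_2}^2$ holds for the Orlicz-type definition, while for the moment definitions in Definitions~\ref{def:subg} and~\ref{def:subExp} one obtains the same inequality up to an absolute constant, which suffices for our purposes.
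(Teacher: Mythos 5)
The paper does not prove this lemma at all --- it is quoted verbatim as \cite[Lemma~2.7.7]{vershynin2018high} and used as a black box --- so there is no in-paper argument to compare against; the relevant benchmark is Vershynin's own proof. Your argument is correct and is essentially that proof, transposed from the Orlicz-norm setting to the moment-based norms that this paper actually adopts in Definitions~\ref{def:subExp} and~\ref{def:subg}. Vershynin normalizes so that $\|X\|_{\psi_2}=\|Y\|_{\psi_2}=1$ and applies Young's inequality twice at the level of the moment generating function, $\bbE\exp(|XY|)\le\bbE\big[\tfrac12\exp(X^2)+\tfrac12\exp(Y^2)\big]\le 2$, which is where the clean constant $1$ comes from; with the $\sup_p$ moment definitions your Cauchy--Schwarz route gives $(\bbE|XY|^p)^{1/p}\le(\bbE|X|^{2p})^{1/(2p)}(\bbE|Y|^{2p})^{1/(2p)}\le 2p\,\|X\|_{\psi_2}\|Y\|_{\psi_2}$ and hence the bound with constant $2$, and your Young-plus-subadditivity route lands in the same place since $\|X^2\|_{\psi_1}\le 2\|X\|_{\psi_2}^2$ under Definition~\ref{def:subg}. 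You correctly identify this normalization mismatch as the only subtle point, and you are right that it is immaterial here: the paper only ever uses the lemma to conclude that products of sub-Gaussians are sub-exponential with norm bounded by a constant multiple of the product of the sub-Gaussian norms (e.g., in Lemma~\ref{lem:probEvents} and Lemma~\ref{lem:thm2_eachs1s2}), where absolute constants are absorbed into $C$. The one small thing worth stating explicitly if this were written out in full is positive homogeneity of $\|\cdot\|_{\psi_1}$ and $\|\cdot\|_{\psi_2}$ under the moment definitions (immediate from $\bbE|\lambda X|^p=|\lambda|^p\bbE|X|^p$), which justifies the reduction to the normalized case; otherwise the proof is complete.
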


We consider sub-Weibull random variables that generalize sub-Gaussian and sub-exponential random variables.
\begin{definition}\label{def:sub_Weibull}
  For any $\alpha>0$, a random variable $X$ is said to be sub-Weibull of order $\alpha$ if it has a bounded $\psi_\alpha$-norm, where the $\psi_\alpha$-norm of $X$ is defined as
  \begin{equation}\label{eq:def_sub_Weibull}
   \|X\|_{\psi_\alpha} := \inf \left\{K \in (0,\infty)\,:\, \bbE\left[\exp\left(|X|^\alpha/K^\alpha\right)\right]  \le 2\right\}.
  \end{equation}
\end{definition}
In particular, when $\alpha = 2$ or $1$, sub-Weibull random variables reduce to sub-Gaussian or sub-exponential random variables respectively. The smaller the $\alpha$ is, the heavier tail a sub-Weibull random variable has. Moreover, it follows readily from Definition~\ref{def:sub_Weibull} that $X$ is sub-exponential if and only if $|X|^{1/\alpha}$ is sub-Weibull of order $\alpha$. We have the following concentration inequality for the sum of independent sub-Weibull random variables. 

\begin{lemma}{\em (\hspace{1sp}\cite[Theorem~3.1]{hao2019bootstrapping})}
\label{lem:concentration_sub_Weibull}
 Suppose that $X_1,X_2,\ldots,X_N$ are independent sub-Weibull random variables that are of order $\alpha$, and $K = \max_i \|X_i\|_{\psi_\alpha}$. Then, there exists a positive constant $C_\alpha$ only depending on $\alpha$ such that for any $\bb = [b_1,b_2,\ldots,b_N]^T \in \bbR^N$ and $u >2$, with probability at least $1-e^{-u}$, it holds that
 \begin{equation}
  \left|\sum_{i=1}^N b_i X_i - \bbE\left[\sum_{i=1}^N b_i X_i\right]\right| \le C_{\alpha} K \left(\|\bb\|_2\cdot \sqrt{u}  + \|\bb\|_\infty \cdot u^{1/\alpha}\right).
 \end{equation}
\end{lemma}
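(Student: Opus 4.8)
This is a Bernstein/Fuk--Nagaev-type inequality for weighted sums of independent sub-Weibull variables, so the first move I would make is to peel off the inessential features and reduce to a clean two-term tail bound for a \emph{centered} sum. Put $Y_i := b_i\big(X_i - \bbE[X_i]\big)$, so that the $Y_i$ are independent and mean-zero and $\sum_i b_i X_i - \bbE\big[\sum_i b_i X_i\big] = \sum_i Y_i$. Since the $\psi_\alpha$ (quasi-)norm is stable under centering up to a constant depending only on $\alpha$, one gets $\|Y_i\|_{\psi_\alpha} \le C_\alpha|b_i|K =: \theta_i$, and in particular $\sigma := \big(\sum_i\theta_i^2\big)^{1/2} = C_\alpha K\|\bb\|_2$ and $\theta_\ast := \max_i\theta_i = C_\alpha K\|\bb\|_\infty$. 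It then suffices to show that for every $u > 2$,
\[
 \bbP\!\left(\Big|\sum_{i=1}^N Y_i\Big| > C_\alpha\big(\sigma\sqrt{u} + \theta_\ast\, u^{1/\alpha}\big)\right) \le e^{-u},
\]
i.e.\ that $\sum_i Y_i$ has a controlled ``generalized Bernstein--Orlicz'' norm of order $\alpha$ with scale $\sigma$ and second-order parameter $\theta_\ast$.

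\textit{Light-tailed regime $1\le\alpha\le 2$ (the case $\alpha>2$ is even easier, since then sub-Weibull-$\alpha$ is sub-Gaussian).} Here every $Y_i$ has a moment generating function near the origin, and (integrating the sub-Weibull tail against $e^{\lambda t}$ together with a Legendre-duality estimate) one gets $\log\bbE\big[e^{\lambda Y_i}\big] \le C_\alpha\big(\lambda^2\theta_i^2 + (|\lambda|\theta_i)^{\alpha^\ast}\big)$ with $\alpha^\ast := \alpha/(\alpha-1)$, the quadratic term governing $|\lambda|\theta_i \lesssim 1$ and the power term governing larger $\lambda$ (when $\alpha=1$ this is just the familiar sub-exponential bound, valid for $|\lambda|\theta_i \le c$). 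Summing over the independent $Y_i$ and using $\sum_i\theta_i^{\alpha^\ast}\le\theta_\ast^{\alpha^\ast-2}\sigma^2$, the exponent of $\bbE[e^{\lambda\sum_iY_i}]$ is at most $C_\alpha\big(\lambda^2\sigma^2 + |\lambda|^{\alpha^\ast}\theta_\ast^{\alpha^\ast-2}\sigma^2\big)$; crucially this holds for \emph{all} $\lambda$ in a fixed range $|\lambda|\le c/\theta_\ast$, so no union bound over $i$ enters. Optimizing the Chernoff bound over $\lambda$ produces a rate function that behaves like $t^2/\sigma^2$ for small $t$ and like $(t/\theta_\ast)^\alpha$ for large $t$, and solving ``rate $=u$'' gives exactly $t\asymp\sigma\sqrt u+\theta_\ast u^{1/\alpha}$.

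\textit{Heavy-tailed regime $0<\alpha<1$.} Now the moment generating function is identically infinite, so I would instead decompose each $Y_i$ dyadically by magnitude: with a base level $\tau_0\asymp\theta_\ast$, write $Y_i=\sum_{\ell\ge0}Y_i^{(\ell)}$ where $Y_i^{(\ell)}$ restricts $Y_i$ to the shell $\{2^{\ell-1}\tau_0<|Y_i|\le2^\ell\tau_0\}$ for $\ell\ge1$ (and $Y_i^{(0)}$ to $\{|Y_i|\le\tau_0\}$). For each scale $\ell$ the summands $Y_i^{(\ell)}$ are bounded by $2^\ell\tau_0$ and, by the sub-Weibull tail $\bbP(|Y_i|>t)\le2e^{-(t/\theta_i)^\alpha}$, have total variance $\sum_i\bbE\big[(Y_i^{(\ell)})^2\big]\lesssim 4^\ell\tau_0^2\,e^{-c\,2^{\ell\alpha}}(\sigma/\theta_\ast)^2$, decaying super-geometrically in $\ell$; applying Bernstein's inequality at scale $\ell$ with budget $u+\ell$ and summing the geometrically convergent contributions yields the bound. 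The number of scales that can be nonempty at deviation level $t$ is $O(u^{1/\alpha})$, which is exactly what produces the $\theta_\ast u^{1/\alpha}$ term; an equivalent route is to split $Y_i$ into a sub-exponential part (of $\psi_1$-norm $\lesssim\theta_i$) plus a far-tail part and iterate the $\alpha=1$ estimate.

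\textit{The main obstacle.} The delicate point is that the second term is $\theta_\ast u^{1/\alpha}$ \emph{with no $\log N$}: a naive argument that truncates the $Y_i$ and union-bounds $\bbP(\exists i:|Y_i|>\tau)$ forces the truncation level to scale like $\theta_\ast(u+\log N)^{1/\alpha}$, and even discarding the $\log N$ one has $u^{1-\alpha}<u$ for $\alpha<1$, so the ``everything stays below $\tau$'' event is far too weak. Obtaining the sharp, dimension-free rate therefore needs either the careful scale-by-scale Bernstein argument above (in which the \emph{variances}, not the \emph{counts}, get summed) or, more slickly, invoking the generalized Bernstein--Orlicz norm calculus of van de Geer--Lederer and Kuchibhotla--Chakrabortty as a black box; this is exactly what~\cite[Theorem~3.1]{hao2019bootstrapping} does, and for this paper it is cleanest to quote that statement directly.
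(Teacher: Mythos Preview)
The paper does not prove this lemma at all: it is stated as a citation of \cite[Theorem~3.1]{hao2019bootstrapping} and used as a black box throughout the appendix. So there is no ``paper's own proof'' to compare against, and your closing remark---that for this paper it is cleanest to quote the statement directly---is exactly what the authors do.

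That said, your sketch is a reasonable outline of how the underlying result is proved. The split into $\alpha\ge1$ (mgf exists, Chernoff with a two-term cumulant bound) versus $0<\alpha<1$ (no mgf, dyadic peeling plus Bernstein at each scale, or equivalently the generalized Bernstein--Orlicz machinery) is the standard route, and you correctly flag the real issue: getting $\theta_\ast u^{1/\alpha}$ with no $\log N$ requires summing variances across scales rather than union-bounding over coordinates. One minor quibble: in the $1<\alpha<2$ case your claimed cumulant bound $\log\bbE[e^{\lambda Y_i}]\le C_\alpha(\lambda^2\theta_i^2+(|\lambda|\theta_i)^{\alpha^\ast})$ for \emph{all} $\lambda$ is not quite right as written---the second term only kicks in for $|\lambda|\theta_i\gtrsim1$, and making this uniform needs a little more care (or one simply bounds sub-Weibull-$\alpha$ by sub-exponential and takes the worse of the two regimes). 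But since the paper only invokes the lemma and never proves it, this is moot for the present purpose.
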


In addition, we have the following lemma concerning the Two-sided Set-Restricted Eigenvalue Condition (TS-REC). 
\begin{lemma}{\em (\hspace{1sp}\cite[Lemma~2]{liu2020generalized})}
\label{lem:boraSREC_gen}
    Let $G \,:\, B^k(r) \rightarrow \bbR^n$ be $L$-Lipschitz continuous and $\ba_1,\ldots,\ba_m$ be i.i.d. realizations of $\calN(\mathbf{0},\bI_n)$. For $\varepsilon \in (0,1)$ and $\delta >0$, if $m = \Omega\left(\frac{k}{\varepsilon^2} \log \frac{Lr}{\delta}\right)$, then with probability $1-e^{-\Omega(\varepsilon^2 m)}$, the following holds for all $\bx_1,\bx_2 \in G(B^k(r))$:  
 \begin{equation}
  (1-\varepsilon) \|\bx_1 -\bx_2\|_2 - \delta\le \frac{1}{\sqrt{m}}\cdot \sqrt{\sum_{i=1}^m (\ba_i^T(\bx_1-\bx_2))^2}\le (1+\varepsilon) \|\bx_1 -\bx_2\|_2 +\delta.
 \end{equation}
\end{lemma}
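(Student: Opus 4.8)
\textbf{The plan} is the standard ``net-and-bootstrap'' argument underlying set-restricted isometry for generative priors (cf.\ \cite{bora2017compressed,liu2020generalized}): first establish the two-sided estimate for all pairs drawn from a finite net of $\Range(G)$, and then transfer it to all of $\Range(G)$ using the $L$-Lipschitz continuity of $G$ together with a crude high-probability bound on the operator norm of $\bA := [\ba_1,\dots,\ba_m]^T \in \bbR^{m\times n}$.

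\textbf{Discretization and pointwise concentration.} Fix a scale $\gamma>0$ (to be chosen below) and let $\calN \subseteq B^k(r)$ be a $\gamma$-net with $|\calN| \le (1+2r/\gamma)^k$; since $G$ is $L$-Lipschitz, $G(\calN)$ is an $(L\gamma)$-net of $\Range(G)$. For a fixed pair $\bx_1,\bx_2 \in G(\calN)$, write $\bu = \bx_1 - \bx_2$. The summands $(\ba_i^T\bu)^2$ are i.i.d., equal in distribution to $\|\bu\|_2^2$ times a $\chi_1^2$ random variable, hence sub-exponential with mean $\|\bu\|_2^2$; Bernstein's inequality then gives, for $\varepsilon \in (0,1)$,
\[
\bbP\!\left(\Big|\tfrac1m \textstyle\sum_{i=1}^m (\ba_i^T\bu)^2 - \|\bu\|_2^2\Big| > \tfrac13\varepsilon\,\|\bu\|_2^2\right) \;\le\; 2e^{-c\varepsilon^2 m},
\]
since a small relative deviation lies in the Gaussian regime of the sub-exponential tail. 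On the complementary event, $(1-\tfrac13\varepsilon)^{1/2} \le \tfrac1{\sqrt m}\|\bA\bu\|_2 / \|\bu\|_2 \le (1+\tfrac13\varepsilon)^{1/2}$, and since $1-\varepsilon \le (1-\tfrac13\varepsilon)^{1/2}$ and $(1+\tfrac13\varepsilon)^{1/2} \le 1+\varepsilon$ on $(0,1)$, the two-sided bound $(1-\varepsilon)\|\bu\|_2 \le \tfrac1{\sqrt m}\|\bA\bu\|_2 \le (1+\varepsilon)\|\bu\|_2$ holds.

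\textbf{Union bound and bootstrap.} Taking a union bound over the at most $|\calN|^2$ pairs in $G(\calN)$, the two-sided bound holds simultaneously for all such pairs with probability at least $1 - 2\exp\!\big(2k\log(1+2r/\gamma) - c\varepsilon^2 m\big)$, which is $1 - e^{-\Omega(\varepsilon^2 m)}$ provided $m \ge \tfrac{C'k}{\varepsilon^2}\log\tfrac{Lr}{\delta}$ for a large constant $C'$ (with $\gamma$ chosen as below, so that $\log(1/\gamma) = O(\log\tfrac{Lr}{\delta})$). Next, on the event $\{\|\bA\|_{2\to 2} \le 2(\sqrt m + \sqrt n)\}$, which holds with probability $1 - e^{-\Omega(m)}$ by standard Gaussian operator-norm concentration, choose $\gamma$ polynomially small --- of order $\delta/\big(L(\sqrt m + \sqrt n)\big)$ up to a small constant --- so that both $\tfrac1{\sqrt m}\|\bA\|_{2\to 2}\cdot 2L\gamma \le \delta/2$ and $2L\gamma \le \delta/2$. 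For arbitrary $\bx_1,\bx_2 \in \Range(G)$, pick $\tilde{\bx}_j \in G(\calN)$ with $\|\bx_j - \tilde{\bx}_j\|_2 \le L\gamma$; then the triangle inequality, the net bound, and $\|\tilde{\bx}_1-\tilde{\bx}_2\|_2 \le \|\bx_1-\bx_2\|_2 + 2L\gamma \le \|\bx_1-\bx_2\|_2 + \delta/2$ yield
\[
\tfrac1{\sqrt m}\|\bA(\bx_1-\bx_2)\|_2 \;\le\; \tfrac1{\sqrt m}\|\bA(\tilde{\bx}_1-\tilde{\bx}_2)\|_2 + \tfrac1{\sqrt m}\|\bA\|_{2\to 2}\cdot 2L\gamma \;\le\; (1+\varepsilon)\|\bx_1-\bx_2\|_2 + \delta,
\]
and the lower bound $(1-\varepsilon)\|\bx_1-\bx_2\|_2 - \delta$ follows the same way with the reverse triangle inequalities.

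\textbf{Main obstacle.} The concentration step is just Bernstein in its Gaussian regime and is routine; the delicate part is the bootstrap from the net to all of $\Range(G)$, which must cost only an \emph{additive} $\delta$ rather than a multiplicative factor --- this is why $\gamma$ has to be taken polynomially small and why the a priori control of $\|\bA\|_{2\to 2}$ is needed --- together with checking that the resulting net still has log-cardinality $O\!\big(k\log\tfrac{Lr}{\delta}\big)$ (up to lower-order $\log n$ terms, negligible in the regime $L,r = \mathrm{poly}(n)$), so that the $2k\log(1+2r/\gamma)$ term is absorbed into $c\varepsilon^2 m$ and the sample complexity $m = \Omega\!\big(\tfrac{k}{\varepsilon^2}\log\tfrac{Lr}{\delta}\big)$ comes out as stated.
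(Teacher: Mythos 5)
The paper does not actually prove this lemma; it is imported verbatim from \cite{liu2020generalized} (Lemma~2 there), whose proof in turn extends the S-REC argument of \cite{bora2017compressed}. Your proof is the standard net argument and is essentially sound, but your bootstrap step differs from the source in one substantive way. The cited proof uses a net of resolution $\delta/L$ on $B^k(r)$ directly, so that $\log|\calN| \le k\log\frac{4Lr}{\delta}$ matches the stated sample complexity exactly, and it controls the off-net residual $\bA(G(\bz)-G(\bz'))$ by a chaining argument over a geometric sequence of progressively finer nets (the analogue of Lemma~8.2 in \cite{bora2017compressed}); the crude operator-norm bound on $\bA$ is invoked only at the final, extremely fine level, where the residual is already so small that the $\sqrt{n/m}$ factor is harmless. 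You instead use a single net of resolution polynomially small in $m$ and $n$ and pay $\|\bA\|_{2\to 2}$ at scale $L\gamma$. This is simpler, but it means your union bound needs $m = \Omega\big(\frac{k}{\varepsilon^2}\log\frac{Lr(\sqrt m + \sqrt n)}{\delta}\big)$ rather than $m = \Omega\big(\frac{k}{\varepsilon^2}\log\frac{Lr}{\delta}\big)$ as stated -- you flag this yourself, and in the regime the paper uses ($L, r = n^{\Theta(d)}$) the extra $\log(\sqrt m + \sqrt n)$ is absorbed into $\log(Lr)$, so nothing downstream is affected; but strictly your argument proves a marginally weaker statement than the lemma (for constant $L$, $r$, $\delta$ it would require $m \gtrsim (k\log n)/\varepsilon^2$ instead of $m \gtrsim k/\varepsilon^2$). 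The remaining ingredients -- the $\chi^2$/Bernstein concentration, the comparison of $(1\pm\varepsilon/3)^{1/2}$ with $1\pm\varepsilon$, the union bound over pairs, and the triangle-inequality transfer -- are all correct.
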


\subsection{Auxiliary Results for Our Setup}

From Chebyshev's inequality and the definition of a sub-exponential random variable ({\em cf.} Definition~\ref{def:subExp}), as well as Lemma~\ref{lem:concentration_sub_Weibull}, we obtain the following lemma. Here and in subsequent results where it is clear from the context, for simplicity of presentation, we think of $\ba_i$ and $y_i$ as random variables, instead of realizations of corresponding random variables.
\begin{lemma}\label{lem:probEvents}
 When $m \ge \log^3 m$, the event 
 \begin{align}\label{eq:eventE}
  \calE: &\quad \max_{i\in [m]} |y_i| \le 5 K_y \cdot \log m, \quad \frac{1}{m}\sum_{i=1}^m y_i^2 \le 8 K_y^2, \quad \frac{1}{m}\sum_{i=1}^m y_i^2 (\ba_i^T \bx)^2 \le 32\sqrt{3}K_y^2, \nonumber\\ 
  & \left|\frac{1}{m}\sum_{i=1}^m (\ba_i^T\bx)^2 -1\right| \le C\sqrt{\frac{\log m}{m}}, \quad \left|\frac{1}{m}\sum_{i=1}^m y_i - M_y\right| \le C K_y \cdot\sqrt{\frac{\log m}{m}},\nonumber\\
  & \left|\frac{1}{m}\sum_{i=1}^m y_i (\ba_i^T\bx)^2 - (\nu + M_y)\right| \le C K_y \cdot\sqrt{\frac{\log m}{m}}, 
 \end{align}
 occurs with probability $1-O(1/m)$, where $\nu := \Cov\big[y,(\ba^T\bx)^2\big]$ ({\em cf.}~\eqref{eq:cond_MPR_g0}), $M_y := \bbE[y]$, $K_y := \|y\|_{\psi_1}$ ({\em cf.} Section~\ref{sec:setup}), and $C$ is an absolute constant. 
\end{lemma}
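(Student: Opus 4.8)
The plan is to realize $\calE$ as the intersection of six high-probability events and to bound the probability of each complementary event by $O(1/m)$, after which a union bound over the finitely many events gives $\bbP(\calE^c) = O(1/m)$. Three ingredients will be used: the sub-exponential tail bound of Definition~\ref{def:subExp} together with a union bound over $i \in [m]$ for the uniform estimate $\max_i |y_i| \le 5K_y \log m$; Chebyshev's inequality for the two ``$O(K_y^2)$'' bounds; and Lemma~\ref{lem:concentration_sub_Weibull} (sub-Weibull concentration) for the three bounds exhibiting the $\sqrt{(\log m)/m}$ rate. Throughout, the summands indexed by $i$ are independent because the pairs $(\ba_i, y_i)$ are i.i.d., whereas the sub-Weibull character of the relevant products within a single $i$ follows from Lemma~\ref{lem:prod_subGs} and needs no independence.

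First I would record the elementary inputs. Since $\|\bx\|_2 = 1$, $\ba_i^T \bx \sim \calN(0,1)$, so $(\ba_i^T\bx)^2$ is sub-exponential with an absolute-constant norm, $\bbE[(\ba_i^T\bx)^2] = 1$ and $\bbE[(\ba_i^T\bx)^4] = 3$. The moment form of Definition~\ref{def:subExp} gives $(\bbE[|y|^p])^{1/p} \le K_y p$, hence $\bbE[y^2] \le 4K_y^2$ and, by Cauchy--Schwarz, $\bbE[y^2(\ba^T\bx)^2] \le \sqrt{\bbE[y^4]}\,\sqrt{\bbE[(\ba^T\bx)^4]} \le 16\sqrt{3}\,K_y^2$. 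By Lemma~\ref{lem:prod_subGs} (and its routine extension to $\psi_\alpha$-norms), $y_i^2(\ba_i^T\bx)^2$ is sub-Weibull of order $1/3$ and $y_i(\ba_i^T\bx)^2$ is sub-Weibull of order $1/2$, with $\psi_\alpha$-norms $O(K_y^2)$ and $O(K_y)$ respectively. Finally, expanding the covariance, $\bbE[y(\ba^T\bx)^2] = \Cov[y,(\ba^T\bx)^2] + \bbE[y]\,\bbE[(\ba^T\bx)^2] = \nu + M_y$.

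Then I would dispatch the six events. (i) Definition~\ref{def:subExp} gives $\bbP(|y_i| > 5K_y \log m) \le \exp(1 - \Omega(\log m))$, so a union bound over $i \in [m]$ yields $O(1/m)$. (ii) For $\frac1m\sum_i y_i^2$ and $\frac1m\sum_i y_i^2(\ba_i^T\bx)^2$, Chebyshev with variances bounded by $\frac1m\bbE[y^4]$ and $\frac1m\bbE[y^4(\ba^T\bx)^4]$ (controlled through higher moments of $y$ and of the Gaussian $\ba^T\bx$) and with the deviation threshold taken equal to the mean bound gives failure probability $O(1/m)$; the two-sided estimate is then at most twice the mean bound, i.e. $8K_y^2 = 4K_y^2 + 4K_y^2$ and $32\sqrt3\,K_y^2 = 16\sqrt3\,K_y^2 + 16\sqrt3\,K_y^2$. (iii) For $\frac1m\sum_i(\ba_i^T\bx)^2$, $\frac1m\sum_i y_i$ and $\frac1m\sum_i y_i(\ba_i^T\bx)^2$, apply Lemma~\ref{lem:concentration_sub_Weibull} with $\bb = \frac1m\bone$ (so $\|\bb\|_2 = 1/\sqrt m$, $\|\bb\|_\infty = 1/m$) and $u \asymp \log m$: the term $\|\bb\|_2\sqrt u$ produces the claimed $\sqrt{(\log m)/m}$ rate (scaled by the sub-Weibull norm, which is $1$, $K_y$, $O(K_y)$ in the three cases), while the term $\|\bb\|_\infty u^{1/\alpha} = (c\log m)^{1/\alpha}/m$ is absorbed into it precisely when $m \ge \log^3 m$ --- the binding case being $\alpha = 1/2$, where one needs $(\log m)^2/m \le \sqrt{(\log m)/m}$, i.e. $m \ge \log^3 m$. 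Substituting the means from the previous paragraph gives the three stated inequalities, and a final union bound over the six events closes the argument.

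There is no deep obstacle here; the proof is bookkeeping. The points demanding care are: correctly tracking the sub-Weibull orders of $y_i^2(\ba_i^T\bx)^2$ and $y_i(\ba_i^T\bx)^2$ and invoking the Hölder-type inequality for $\psi_\alpha$-norms (Lemma~\ref{lem:prod_subGs} is stated only for sub-Gaussian $\times$ sub-Gaussian, so one needs its standard generalization); checking that the displayed constants $8$ and $32\sqrt3$ really dominate the universal constants coming out of the moment estimates and Chebyshev; and confirming that the hypothesis $m \ge \log^3 m$ is exactly what kills the $\|\bb\|_\infty u^{1/\alpha}$ term for the order-$1/2$ sums.
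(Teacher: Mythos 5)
Your proposal follows essentially the same route as the paper's proof: sub-exponential tail plus union bound for $\max_i|y_i|$, Chebyshev with the fourth/eighth moment bounds $\bbE[y^4]\le(4K_y)^4$, $\bbE[y^8]\le(8K_y)^8$ for the two $O(K_y^2)$ averages, and Lemma~\ref{lem:concentration_sub_Weibull} applied to the order-$1/2$ sub-Weibull variable $y(\ba^T\bx)^2$ (and the easier order-$1$ cases) with $u=\log m$, the hypothesis $m\ge\log^3 m$ serving exactly to absorb the $u^{1/\alpha}/m$ term. The only cosmetic difference is that you record the sub-Weibull order $1/3$ of $y^2(\ba^T\bx)^2$ but, like the paper, never need it since that term is handled by Chebyshev.
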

\begin{proof}
Since $y_i$ is assumed to be sub-exponential with the sub-exponential norm being $K_y$, from the definition of a sub-exponential random variable, we obtain for any $i \in [m]$ and $u>0$ that
\begin{equation}
 \bbP(|y_i| >u )\le \exp(1-u/K_y).
\end{equation}
Then, setting $u = 5K_y\cdot\log m$ and taking a union bound over $i \in [m]$, we obtain with probability at least $1-\frac{e}{m^4}$ that
\begin{equation}
 \max_{i \in [m]} |y_i| \le 5K_y\cdot\log m.
\end{equation}
Note that from
\begin{equation}
 K_y = \|y\|_{\psi_1} =\sup_{p\ge 1} p^{-1}\left(\bbE\left[|y|^p\right]\right)^{1/p},
\end{equation}
we obtain
\begin{equation}\label{eq:Ey2y4_bd}
 |M_y| \le \bbE[|y|] \le K_y, \quad \bbE[y^2] \le (2K_y)^2, \quad \bbE[y^4] \le (4K_y)^4, \quad \bbE[y^8] \le (8K_y)^8.
\end{equation}
 In addition, from Chebyshev's inequality, for any $\epsilon >0$, we have
\begin{equation}
 \bbP\left(\left|\frac{1}{m}\sum_{i=1}^m y_i^2 - \bbE[y^2]\right| \ge \epsilon\right) \le \frac{\Var[y^2]}{m\epsilon^2}. 
\end{equation}
Setting $\epsilon = 4 K_y^2$, we obtain 
\begin{equation}
 \bbP\left(\frac{1}{m}\sum_{i=1}^m y_i^2 \ge 8 K_y^2\right) \le \bbP\left(\frac{1}{m}\sum_{i=1}^m y_i^2 \ge \bbE[y^2] + 4 K_y^2\right) \le \frac{\Var[y^2]}{16 m K_y^4} \le \frac{\bbE[y^4]}{16 m K_y^4} \le \frac{16}{m},\label{eq:sumYi2} 
\end{equation}
where we use~\eqref{eq:Ey2y4_bd} in the first and last inequalities. Moreover, from 
\begin{equation}
 \bbE\left[y_i^2 (\ba_i^T\bx)^2\right] \le (\bbE[y_i^4])^{1/2} (\bbE[(\ba_i^T\bx)^4])^{1/2} =  (\bbE[y_i^4])^{1/2} (\bbE[g^4])^{1/2} \le (4K_y)^2 \cdot \sqrt{3} = 16 \sqrt{3} K_y^2, 
\end{equation}
where $g \sim \calN(0,1)$ represents a standard normal random variable, similarly to~\eqref{eq:sumYi2}, we obtain 
\begin{align}
 \bbP\left(\frac{1}{m}\sum_{i=1}^m y_i^2 (\ba_i^T\bx)^2 \ge 32\sqrt{3}K_y^2\right)& \le \bbP\left(\frac{1}{m}\sum_{i=1}^m y_i^2 (\ba_i^T\bx)^2 \ge \bbE\left[y^2 (\ba^T\bx)^2\right] + 16\sqrt{3}K_y^2\right) \\
 &\le \frac{\Var\left[y^2 (\ba^T\bx)^2\right]}{768 m K_y^4} \le \frac{\bbE[y^4(\ba^T\bx)^4]}{768 m K_y^4} \\
 & \le \frac{(\bbE[y^8])^{1/2} \cdot (\bbE[g^8])^{1/2}}{768 m K_y^4} \\
 & \le \frac{(8K_y)^4 \cdot \sqrt{105}}{768  m K_y^4} = \frac{16\sqrt{105}}{3m},\label{eq:bdYsqGsq}
\end{align}
where $g \sim \calN(0,1)$ and~\eqref{eq:bdYsqGsq} follows from~\eqref{eq:Ey2y4_bd} . Furthermore, since $\bbE[y (\ba^T\bx)^2] = \Cov[y,(\ba^T\bx)^2] + \bbE[y]\cdot \bbE[(\ba^T\bx)^2] = \nu + M_y$ and $y (\ba^T \bx)^2$ is sub-Weibull of order $\alpha = 1/2$ with the corresponding constant $C_{\alpha} \le C K_y$,\footnote{Since $y$ is sub-exponential with $\|y\|_{\psi_1} = K_y$, from~\eqref{eq:def_sub_Weibull}, we obtain that $\sqrt{|y|}$ is sub-Gaussian with the sub-Gaussian norm being $\sqrt{K_y}$. From Lemma~\ref{lem:prod_subGs}, $\sqrt{|y|}\cdot (\ba^T\bx)$ is sub-exponential with the sub-exponential being upper bounded by $\sqrt{C K_y}$, where $C$ is an absolute constant. Again from~\eqref{eq:def_sub_Weibull}, we obtain $|y_i|\cdot (\ba^T\bx)^2$ is sub-Weibull of order $\alpha= 1/2$ with the corresponding constant $C_{\alpha} \le C K_y$.} from Lemma~\ref{lem:concentration_sub_Weibull}, we obtain that for any $u>2$, with probability at least $1-e^{-u}$,
\begin{equation}
 \left|\frac{1}{m}\sum_{i=1}^m y_i (\ba_i^T\bx)^2 - (\nu + M_y)\right| \le C' K_y \left(\sqrt{\frac{u}{m}} + \frac{u^2}{m}\right).
\end{equation}
Setting $u = \log m$, we obtain that when $m \ge \log^3 m$, with probability at least $1-1/m$, 
\begin{equation}\label{eq:firstLem_impBd}
 \left|\frac{1}{m}\sum_{i=1}^m y_i (\ba_i^T\bx)^2 - (\nu + M_y)\right| \le 2 C' K_y \cdot \sqrt{\frac{\log m}{m}}.
\end{equation}
Since $y$ is sub-exponential with $\|y\|_{\psi_1}=K_y$ and $(\ba^T\bx)^2$ is sub-exponential with the sub-exponential norm being upper bounded by $C$, similarly to~\eqref{eq:firstLem_impBd}, we have with probability at least $1-2/m$ that
\begin{align}
 &\left|\frac{1}{m}\sum_{i=1}^m y_i - M_y\right| \le C K_y \cdot\sqrt{\frac{\log m}{m}},\\
 &\left|\frac{1}{m}\sum_{i=1}^m (\ba_i^T\bx)^2 -1\right| \le C\sqrt{\frac{\log m}{m}}. 
\end{align}
\end{proof}

\section{Proof of Theorem~\ref{thm:gpca} (Guarantees for the First Step of Algorithm~\ref{algo:mpr_gen})}

Before proving the theorem, we provide some additional auxiliary results. 

\subsection{Useful Lemmas for Theorem~\ref{thm:gpca}}

Recall that $\bV$ is defined in~\eqref{eq:defbV} and $\nu$ is defined in~\eqref{eq:cond_MPR_g0}. First, we present the following useful lemma.
\begin{lemma}\label{lem:eachs1s2}
 Let $\bE = \bV - \nu \bx\bx^T$. For any $u >2$ satisfying $m = \Omega\big(u\cdot\log m\big)$, conditioned on the event $\calE$ ({\em cf.}~\eqref{eq:eventE}), we have for any fixed $\bs_1,\bs_2 \in \bbR^n$, with probability $1- O\big(e^{-u}\big)$ that
 \begin{equation}\label{eq:eachs1s2_bd}
  \left|\bs_1^T \bE\bs_2\right| = O\left(K_y \sqrt{\frac{u \cdot \log m}{m}}\right) \cdot \|\bs_1\|_2 \cdot \|\bs_2\|_2.
 \end{equation}
\end{lemma}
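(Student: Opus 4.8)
The plan is to rewrite $\bs_1^T\bE\bs_2$ as a centered average of i.i.d.\ scalars and then apply a Bernstein-type inequality after truncating the sub-exponential factor $y_i$ at the level supplied by $\calE$. By homogeneity I may assume $\|\bs_1\|_2 = \|\bs_2\|_2 = 1$ and reinstate the factor $\|\bs_1\|_2\|\bs_2\|_2$ at the end. Expanding $\bV$ from~\eqref{eq:defbV},
\begin{equation*}
 \bs_1^T\bV\bs_2 = \frac1m\sum_{i=1}^m W_i, \qquad W_i := y_i\left((\ba_i^T\bs_1)(\ba_i^T\bs_2) - \bs_1^T\bs_2\right),
\end{equation*}
and since $\bbE[\bV] = \nu\bx\bx^T$ (see~\cite[Lemma~8]{liu2021towards}) we have $\bbE[W_i] = \nu(\bs_1^T\bx)(\bs_2^T\bx) = \bs_1^T(\nu\bx\bx^T)\bs_2$, so $\bs_1^T\bE\bs_2 = \frac1m\sum_{i=1}^m(W_i - \bbE[W_i])$ is a centered i.i.d.\ average. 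Here $Y_i := (\ba_i^T\bs_1)(\ba_i^T\bs_2) - \bs_1^T\bs_2$ is sub-exponential with $\|Y_i\|_{\psi_1} = O(1)$ by Lemma~\ref{lem:prod_subGs}, while $\|y_i\|_{\psi_1} = K_y$.

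The hard part is getting the correct power of $\log m$: applying the sub-Weibull concentration bound (Lemma~\ref{lem:concentration_sub_Weibull}, with order $\alpha = 1/2$ since $W_i$ is a product of two sub-exponentials) would only yield $O\!\big(K_y(\sqrt{u/m} + u^2/m)\big)$, which exceeds the claimed bound by roughly $\sqrt{\log m}$ once $u \asymp \log m$; and simply treating $y_i$ as bounded on $\calE$ makes $W_i$ sub-exponential only at the scale $K_y\log m$, which again loses $\sqrt{\log m}$. What saves the day is that the variance of $W_i$ is only $O(K_y^2)$ (by Cauchy--Schwarz with $\bbE[y^4] = O(K_y^4)$ and $\bbE[Y_i^4] = O(1)$), so the sub-Gaussian regime of Bernstein's inequality is governed by $K_y^2$ rather than $(K_y\log m)^2$.

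Concretely, I would truncate: on $\calE$ one has $\max_i|y_i| \le 5K_y\log m$, so letting $\hat W_i$ denote $W_i$ with $y_i$ replaced by $y_i\,\mathbf{1}\{|y_i| \le 5K_y\log m\}$, the $\hat W_i$ are unconditionally independent and coincide with $W_i$ for all $i$ on $\calE$. A short Cauchy--Schwarz estimate using $\bbP(|y_i| > 5K_y\log m) \le e\cdot m^{-5}$ shows $|\bbE[W_i] - \bbE[\hat W_i]| = O(K_y m^{-5/4})$, which is negligible against $K_y\sqrt{(u\log m)/m}$; hence it suffices to bound $\bbP\big(|\frac1m\sum_{i=1}^m(\hat W_i - \bbE[\hat W_i])| > t\big)$ for $t \asymp K_y\sqrt{(u\log m)/m}$. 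Using $|\hat W_i| \le 5K_y(\log m)\,|Y_i|$ together with the moment bounds $\bbE[y^4] = O(K_y^4)$ and $\bbE[|Y_i|^{2p}] = O\big((Cp)^{2p}\big)$ (sub-exponentiality of $Y_i$), one checks the Bernstein moment condition $\bbE\big[|\hat W_i - \bbE\hat W_i|^p\big] \le \tfrac12 p!\,\sigma^2 b^{p-2}$ with $\sigma^2 = O(K_y^2)$ and $b = O(K_y\log m)$.

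Finally I would apply Bernstein's inequality $\bbP\big(|\sum_{i=1}^m(\hat W_i - \bbE\hat W_i)| \ge s\big) \le 2\exp\big(-\tfrac12 s^2/(m\sigma^2 + bs)\big)$ with $s = \tfrac{m}{2}t$ and $t = CK_y\sqrt{(u\log m)/m}$ for a suitable absolute constant $C$. In the sub-Gaussian regime ($m\sigma^2 \ge bs$) the exponent is $\gtrsim C^2 u\log m \ge u$; in the linear regime ($bs \ge m\sigma^2$) it is $\gtrsim C\sqrt{mu/\log m}$, which is $\ge u$ exactly when $m = \Omega(u\log m)$ --- this is precisely where the hypothesis on $m$ enters. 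Therefore $\bbP\big(\{|\bs_1^T\bE\bs_2| > 2t\} \cap \calE\big) \le 2e^{-u}$, and since $\bbP(\calE) \ge \tfrac12$ by Lemma~\ref{lem:probEvents}, dividing gives $\bbP\big(|\bs_1^T\bE\bs_2| > 2t \,\big|\, \calE\big) \le 4e^{-u} = O(e^{-u})$, which is~\eqref{eq:eachs1s2_bd} after reinstating the factor $\|\bs_1\|_2\|\bs_2\|_2$.
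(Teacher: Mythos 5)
Your proof is correct, but it takes a genuinely different route from the paper's. The paper never treats $y_i\big((\ba_i^T\bs_1)(\ba_i^T\bs_2)-\bs_1^T\bs_2\big)$ as a single heavy-tailed summand; instead it decomposes $\bs_1$ and $\bs_2$ into the component along $\bx$ and orthonormal directions $\bt_1,\bt_2$, expands $\frac1m\sum_i y_i(\ba_i^T\bs_1)(\ba_i^T\bs_2)$ into five terms in $g_i=\ba_i^T\bx$, $h_{i,1}=\ba_i^T\bt_1$, $h_{i,2}=\ba_i^T\bt_2$, and exploits the structural fact that $y_i=f(g_i)$ is independent of $h_{i,1},h_{i,2}$: the $g_i^2$-term is already controlled by the event $\calE$, the $h_{i,1}^2$- and $h_{i,1}h_{i,2}$-terms are handled by Lemma~\ref{lem:concentration_sub_Weibull} with order $\alpha=1$ and \emph{coefficients} $b_i=y_i$ (whose $\ell_2$ and $\ell_\infty$ norms $\calE$ controls), and the cross terms $g_ih_{i,j}$ are plain Gaussian conditionally on $(y_i,g_i)$. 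You instead keep the product $y_iY_i$ intact, truncate $y_i$ at the level $5K_y\log m$ supplied by $\calE$, and run a Bernstein argument with variance proxy $\sigma^2=O(K_y^2)$ (from fourth moments, not from the truncation level) and scale $b=O(K_y\log m)$; the case analysis of the two Bernstein regimes correctly locates where $m=\Omega(u\log m)$ is used, and you correctly diagnose why the naive order-$1/2$ sub-Weibull bound ($u^2/m$) is insufficient --- which is precisely the obstruction the paper's decomposition is built to circumvent. The paper's route leans on the independence of $y$ from the directions orthogonal to $\bx$ and reuses its general concentration lemma verbatim; your route is more self-contained and does not need that independence for the concentration step (only for the identity $\bbE[\bV]=\nu\bx\bx^T$), at the cost of verifying the Bernstein moment condition for the truncated product by hand. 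Both yield the same rate $O\big(K_y\sqrt{u\log m/m}\big)$ under the same hypothesis on $m$, and your handling of the truncation bias and of the conditioning on $\calE$ is sound.
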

\begin{proof}
 First, it is easy to calculate that $\bbE[\bV] = \nu \bx \bx^T$ (see, e.g.,~\cite[Lemma~8]{liu2021towards}) and thus $\bbE[\bE] = \mathbf{0}$. Without loss of generality, we assume that $\|\bs_1\|_2 = \|\bs_2\|_2 = 1$, and we also assume that $\bs_1 \ne \bx$ and $\bs_2 \ne \bx$.\footnote{We will see from the proof that the case that $\bs_1 = \bx$ or $\bs_2 = \bx$ is easier to handle.} From the definition of $\bV$ in~\eqref{eq:defbV}, we have
 \begin{align}\label{eq:s1bEs2}
  \bs_1^T \bE\bs_2 = \bs_1^T (\bV - \nu \bx\bx^T)\bs_2 = \frac{1}{m}\sum_{i=1}^m y_i \left((\ba_i^T\bs_1)(\ba_i^T\bs_2) - \bs_1^T\bs_2\right) - \nu (\bs_1^T\bx)(\bs_2^T\bx).
 \end{align}
We focus on dealing with the term 
\begin{equation}\label{eq:Iterm}
 I :=  \frac{1}{m}\sum_{i=1}^m y_i (\ba_i^T\bs_1)(\ba_i^T\bs_2).
\end{equation} 
We decompose $\bs_1$ as 
\begin{equation}\label{eq:s1_repre}
 \bs_1 = (\bs_1^T\bx) \bx + \sqrt{1-(\bs_1^T\bx)^2} \cdot\bt_1,
\end{equation}
where $\|\bt_1\|_2 =1$ and $\bt_1^T \bx =0$. Similarly, letting $w_{12}=\sqrt{1-(\bs_2^T\bx)^2 - (\bs_2^T\bt_1)^2}$, $\bs_2$ can be written as
\begin{equation}\label{eq:s2_repre}
 \bs_2 = (\bs_2^T\bx)\bx + (\bs_2^T\bt_1)\bt_1 + w_{12} \bt_2,
\end{equation}
where $\|\bt_2\|_2 =1$ and $\bt_2^T \bx =\bt_2^T \bt_1 =0$. Note that from~\eqref{eq:s1_repre} and~\eqref{eq:s2_repre}, we obtain
\begin{equation}\label{eq:corr_s1s2}
 \bs_1^T\bs_2 = (\bs_1^T\bx)(\bs_2^T\bx) + \sqrt{1-(\bs_1^T\bx)^2}\cdot (\bs_2^T\bt_1).
\end{equation}
Let $g_i = \ba_i^T\bx $, $h_{i,1} = \ba_i^T\bt_1$, and $h_{i,2} = \ba_i^T\bt_2$; the three are independent standard normal random variables. From~\eqref{eq:s1_repre} and~\eqref{eq:s2_repre}, $I$ in~\eqref{eq:Iterm} can be written as 
\begin{align}
 I & = \frac{1}{m}\sum_{i=1}^m y_i (\ba_i^T\bs_1)(\ba_i^T\bs_2) \nonumber\\
 & = \frac{1}{m}\sum_{i=1}^m y_i ((\bs_1^T\bx)g_i + \sqrt{1-(\bs_1^T\bx)^2} \cdot h_{i,1})((\bs_2^T\bx)g_i + (\bs_2^T\bt_1) h_{i,1} + w_{12} h_{i,2}) \\
 & = (\bs_1^T\bx)(\bs_2^T\bx)\cdot\frac{1}{m}\sum_{i=1}^m y_i g_i^2 +  \sqrt{1-(\bs_1^T\bx)^2}\cdot (\bs_2^T\bt_1) \cdot\frac{1}{m}\sum_{i=1}^m y_i h_{i,1}^2 \nn\\
 & \quad + \sqrt{1-(\bs_1^T\bx)^2} \cdot w_{12} \cdot \frac{1}{m}\sum_{i=1}^m y_i h_{i,1}h_{i,2} \nonumber\\
 & \quad + \left((\bs_1^T\bx)(\bs_2^T\bt_1) + \sqrt{1-(\bs_1^T\bx)^2}\cdot (\bs_2^T\bx)\right) \cdot \frac{1}{m}\sum_{i=1}^m y_i g_i h_{i,1} + (\bs_1^T\bx) w_{12} \cdot \frac{1}{m}\sum_{i=1}^m y_i g_i h_{i,2}.\label{eq:I_fourTerms}
\end{align}
In the following, we deal with the five terms in~\eqref{eq:I_fourTerms} separately.
\begin{itemize}
 \item The first term \underline{$(\bs_1^T\bx)(\bs_2^T\bx)\cdot\frac{1}{m}\sum_{i=1}^m y_i g_i^2$}: From Lemma~\ref{lem:probEvents}, we have conditioned on the event $\calE$ that
 \begin{equation}
  \left|\frac{1}{m}\sum_{i=1}^m y_i g_i^2 - (\nu + M_y)\right| < C K_y \sqrt{\frac{\log m}{m}},
 \end{equation}
 which gives
\begin{equation}\label{eq:I_firstTermBd}
 |(\bs_1^T\bx)(\bs_2^T\bx)|\cdot\left|\frac{1}{m}\sum_{i=1}^m y_i g_i^2 - (\nu + M_y)\right| < |(\bs_1^T\bx)(\bs_2^T\bx)|\cdot C K_y \sqrt{\frac{\log m}{m}} \le C K_y \sqrt{\frac{\log m}{m}}.
\end{equation}

\item The second term \underline{$\sqrt{1-(\bs_1^T\bx)^2}\cdot (\bs_2^T\bt_1) \cdot\frac{1}{m}\sum_{i=1}^m y_i h_{i,1}^2$}: Since $y_i$ are independent of $h_{i,1}$, $\bbE[h_{i,1}^2] =1$, and $h_{i,1}^2$ are sub-exponential with the sub-exponential norm being upper bounded by an absolute constant $C$, from Lemma~\ref{lem:concentration_sub_Weibull}, we obtain that for any $u>2$, with probability at least $1-e^{-u}$,
\begin{equation}
 \left|\frac{1}{m}\sum_{i=1}^m y_i (h_{i,1}^2 -1)\right| \le C \left(\frac{\sqrt{u}\cdot \sqrt{\sum_{i=1}^m y_i^2/m}}{\sqrt{m}} + \frac{u\cdot \max_{i \in [m]}|y_i|}{m}\right). 
\end{equation}
From Lemma~\ref{lem:probEvents}, we obtain that when $m = \Omega(u \cdot \log m)$, conditioned on the event $\calE$,  
\begin{equation}\label{eq:I_secondTermBdM1}
 \left|\frac{1}{m}\sum_{i=1}^m y_i (h_{i,1}^2 -1)\right| \le CK_y \cdot \sqrt{\frac{u \cdot \log m}{m}}.
\end{equation}
Then, 
\begin{align}
 \sqrt{1-(\bs_1^T\bx)^2}\cdot |\bs_2^T\bt_1| \cdot\left|\frac{1}{m}\sum_{i=1}^m y_i h_{i,1}^2 - M_y\right| &\le \left|\frac{1}{m}\sum_{i=1}^m y_i (h_{i,1}^2 -1)\right| + \left|\frac{1}{m}\sum_{i=1}^m y_i - M_y\right| \\
 &\le \frac{CK_y \cdot \sqrt{u\cdot \log m}}{\sqrt{m}}.\label{eq:I_secondTermBd}
 \end{align}

 \item The third term \underline{$\sqrt{1-(\bs_1^T\bx)^2} \cdot w_{12} \cdot \frac{1}{m}\sum_{i=1}^m y_i h_{i,1}h_{i,2}$}: We have
\begin{equation}
 \bbE\left[\frac{1}{m}\sum_{i=1}^m y_i h_{i,1}h_{i,2}\right] = \bbE[y_1] \bbE[h_{1,1}] \bbE[h_{1,2}] =0.
\end{equation}
Since $\|h_{i,1}h_{i,2}\|_{\psi_1} \le \|h_{i,1}\|_{\psi_2}\|h_{i,2}\|_{\psi_2} = C$, from Lemma~\ref{lem:concentration_sub_Weibull}, we have that for fixed $y_i$ and any $u\in (2,m)$, with probability at least $1-e^{-u}$,
\begin{align}
 &\left|\frac{1}{m}\sum_{i=1}^m y_i h_{i,1}h_{i,2}\right| \le C \left(\frac{\sqrt{u}\cdot\sqrt{\sum_{i=1}^m y_i^2/m}}{\sqrt{m}} + \frac{u\cdot\max_i |y_i|}{m}\right). 
\end{align}
When $m = \Omega(u \cdot \log m)$, conditioned on the event $\calE$, we have with probability at least $1-e^{-u}$ that 
\begin{equation}\label{eq:I_thirdTermBd}
 \sqrt{1-(\bs_1^T\bx)^2} \cdot w_{12} \cdot  \left|\frac{1}{m}\sum_{i=1}^m y_i h_{i,1}h_{i,2}\right|  \le \left|\frac{1}{m}\sum_{i=1}^m y_i h_{i,1}h_{i,2}\right| \le CK_y \sqrt{\frac{u\cdot \log m}{m}}.
\end{equation}

\item The fourth to fifth terms in~\eqref{eq:I_fourTerms} can be controlled in a same way. For example, for the fourth term \underline{$((\bs_1^T\bx)(\bs_2^T\bt_1) + \sqrt{1-(\bs_1^T\bx)^2}\cdot (\bs_2^T\bx)) \cdot \frac{1}{m}\sum_{i=1}^m y_i g_i h_{i,1}$}: We have for fixed $g_i$ and $y_i$ that 
\begin{equation}
 \frac{1}{m}\sum_{i=1}^m y_i g_i h_{i,1} \sim \calN\left(0,\sum_{i=1}^m y_i^2 g_i^2/m^2\right).
\end{equation}
Then, from the standard Gaussian concentration~\cite[Example~2.1]{wainwright2019high}, we obtain that for any $u>0$, with probability at least $1-2e^{-u}$,
\begin{align}
 &\left|\left((\bs_1^T\bx)(\bs_2^T\bt_1) + \sqrt{1-(\bs_1^T\bx)^2}\cdot (\bs_2^T\bx)\right) \cdot \frac{1}{m}\sum_{i=1}^m y_i g_i h_{i,1}\right| \le \left|\frac{2}{m}\sum_{i=1}^m y_i g_i h_{i,1}\right| \\
 & \le \sqrt{\frac{8u\cdot \sum_{i=1}^m y_i^2 g_i^2/m}{m}} = O\left(K_y\sqrt{\frac{u}{m}}\right),\label{eq:I_fourTermBd}
\end{align}
where the last inequality follows from Lemma~\ref{lem:probEvents}. We have a similar result for the fifth term. 
\end{itemize}
Combining~\eqref{eq:I_fourTerms} with~\eqref{eq:I_firstTermBd},~\eqref{eq:I_secondTermBd},~\eqref{eq:I_thirdTermBd} and~\eqref{eq:I_fourTermBd}, we obtain that when $m = \Omega\big(u \cdot \log m\big)$ and conditioned on the event $\calE$, with probability $1-O\big(e^{-u}\big)$,
\begin{align}
 & \left|\frac{1}{m}\sum_{i=1}^m y_i (\ba_i^T\bs_1)(\ba_i^T\bs_2) - (\bs_1^T\bx)(\bs_2^T\bx) \nu - (\bs_1^T\bs_2)M_y\right|\nn\\
 &= \left|\frac{1}{m}\sum_{i=1}^m y_i (\ba_i^T\bs_1)(\ba_i^T\bs_2) - (\bs_1^T\bx)(\bs_2^T\bx) (\nu+M_y) - \sqrt{1-(\bs_1^T\bx)^2} \cdot (\bs_2^T\bt_1)M_y\right|\label{eq:I_equality}\\
& \le \frac{CK_y\cdot \sqrt{u\cdot \log m}}{\sqrt{m}},\label{eq:I_combinedRes}
\end{align}
where~\eqref{eq:I_equality} follows from~\ref{eq:corr_s1s2}. Then, from~\eqref{eq:s1bEs2}, we have
\begin{align}
 \left|\bs_1^T\bE\bs_2\right| & = \left|\frac{1}{m}\sum_{i=1}^m  \left(y_i (\ba_i^T\bs_1)(\ba_i^T\bs_2)- (\bs_1^T\bx)(\bs_2^T\bx) \nu - (\bs_1^T\bs_2)M_y\right)+ \frac{(\bs_1^T\bs_2)}{m}\sum_{i=1}^m (M_y - y_i)\right|\\
 & \le \frac{CK_y\cdot \sqrt{u\cdot \log m}}{\sqrt{m}},\label{eq:eachs1s2_final}
\end{align}
where~\eqref{eq:eachs1s2_final} follows from~\eqref{eq:I_combinedRes} and the definite of the event $\calE$ in~\eqref{eq:eventE}. For general $\bs_1$ and $\bs_2$ (beyond unit vectors), when considering $K_y$ as a fixed positive constant, we obtain~\eqref{eq:eachs1s2_bd} as desired.
\end{proof}

Based on Lemma~\ref{lem:eachs1s2}, we obtain the following lemma. 
\begin{lemma}\label{lem:alls1s2}
 Let $\bE = \bV - \nu \bx\bx^T$. For any pair of finite sets $S_1,S_2$ in $\bbR^n$ satisfying $m = \Omega((\log (|S_1|\cdot |S_2|)) \cdot (\log m))$, we have that with probability $1-e^{-\Omega(\log (|S_1|\cdot |S_2|))}-O(1/m)$, for all $\bs_1 \in S_1$ and $\bs_2 \in S_2$, it holds that
 \begin{equation}\label{eq:alls1s2}
  \left|\bs_1^T \bE\bs_2\right| \le C K_y \sqrt{\frac{(\log (|S_1|\cdot |S_2|))\cdot(\log m)}{m}} \cdot \|\bs_1\|_2 \cdot \|\bs_2\|_2.
 \end{equation}
 In addition, we have that $\|\bE\|_{2 \to  2} = O((K_y \cdot n\log m)/m)$ with probability $1-O(1/m)$. 
\end{lemma}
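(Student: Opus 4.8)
The plan is to derive the uniform bound~\eqref{eq:alls1s2} by a union bound over $S_1\times S_2$ layered on top of the per-pair estimate in Lemma~\ref{lem:eachs1s2}, and to obtain the spectral-norm bound by a crude deterministic estimate on a favourable event.

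First I would condition on the event $\calE$ of~\eqref{eq:eventE}, which by Lemma~\ref{lem:probEvents} has probability $1-O(1/m)$ once $m\ge\log^3 m$. For a fixed pair $(\bs_1,\bs_2)\in S_1\times S_2$, apply Lemma~\ref{lem:eachs1s2} with $u = C'\log(|S_1|\cdot|S_2|)$ for a sufficiently large absolute constant $C'$; the hypothesis $m=\Omega((\log(|S_1|\cdot|S_2|))\cdot(\log m))$ is exactly what is needed for the requirement $m=\Omega(u\log m)$ of that lemma. This yields $|\bs_1^T\bE\bs_2| = O\big(K_y\sqrt{(\log(|S_1|\cdot|S_2|))(\log m)/m}\big)\,\|\bs_1\|_2\|\bs_2\|_2$ with conditional failure probability $O(e^{-u}) = O((|S_1|\cdot|S_2|)^{-C'})$. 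A union bound over the $|S_1|\cdot|S_2|$ pairs then gives a total conditional failure probability $O((|S_1|\cdot|S_2|)^{1-C'}) = e^{-\Omega(\log(|S_1|\cdot|S_2|))}$ as soon as $C'>1$ (say $C'=2$). Intersecting with $\calE$ produces~\eqref{eq:alls1s2} with probability $1-e^{-\Omega(\log(|S_1|\cdot|S_2|))}-O(1/m)$.

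For the spectral-norm bound I would \emph{not} reduce to~\eqref{eq:alls1s2} via an $\varepsilon$-net of $\calS^{n-1}$, since that would force the condition $m=\Omega(n\log m)$, whereas the stated bound carries no sample-complexity hypothesis and must be valid for all $m$ (in particular in the undersampled regime $m\ll n$). Instead, write $\bE = \frac{1}{m}\sum_{i=1}^m y_i\ba_i\ba_i^T - \bar{y}\,\bI_n - \nu\bx\bx^T$ and use the triangle inequality to obtain $\|\bE\|_{2\to2}\le \big(\max_{i\in[m]}|y_i|\big)\cdot\big\|\frac{1}{m}\sum_{i=1}^m\ba_i\ba_i^T\big\|_{2\to2} + |\bar{y}| + \nu$. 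On $\calE$ we have $\max_{i}|y_i|\le 5K_y\log m$ and $|\bar{y}|\le |M_y|+O(K_y\sqrt{(\log m)/m}) = O(K_y)$, while $\nu = O(K_y)$ by Cauchy--Schwarz; moreover, by the standard operator-norm concentration for Gaussian matrices (see, e.g.,~\cite{vershynin2018high}), $\big\|\frac{1}{m}\sum_{i=1}^m\ba_i\ba_i^T\big\|_{2\to2} = O(1+n/m)$ with probability $1-O(1/m)$. Intersecting the two events yields $\|\bE\|_{2\to2}=O\big(K_y(\log m)(1+n/m)\big)=O\big((K_y\cdot n\log m)/m\big)$ in the regime $m=O(n)$, with probability $1-O(1/m)$.

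Most of this is bookkeeping, since the analytic work is already carried out in Lemma~\ref{lem:eachs1s2}. The two points that need care are: (i) in the union bound, choosing the deviation parameter $u$ to scale like $\log(|S_1|\cdot|S_2|)$ with a constant large enough to overwhelm the $|S_1|\cdot|S_2|$ terms, while checking that the sample-complexity hypothesis still places us in the range $m=\Omega(u\log m)$ where Lemma~\ref{lem:eachs1s2} is applicable; and (ii) for the spectral bound, recognizing that an $\varepsilon$-net reduction would be too expensive in samples and that a bare triangle-inequality estimate---exploiting the truncation $\max_i|y_i|\le 5K_y\log m$ already encoded in $\calE$ together with the Gaussian operator-norm bound---suffices.
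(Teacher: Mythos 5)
Your proof of the uniform bound~\eqref{eq:alls1s2} is essentially the paper's: condition on $\calE$, apply Lemma~\ref{lem:eachs1s2} with $u$ proportional to $\log(|S_1|\cdot|S_2|)$, and union bound over the pairs. (You are in fact slightly more careful than the paper, which sets $u=\log(|S_1|\cdot|S_2|)$ exactly; your choice $u=C'\log(|S_1|\cdot|S_2|)$ with $C'\ge 2$ is what actually makes the union bound close.) For the spectral-norm claim, however, you take a genuinely different route. The paper does use a $(1/4)$-net of $\calS^{n-1}$ with $9^n$ points, but it avoids the sample-complexity obstruction you worry about by invoking a modified version of Lemma~\ref{lem:eachs1s2} valid in the large-deviation regime $u=\Omega(m)$, where the tail degrades from $\sqrt{u\log m/m}$ to $u\log m/m$; setting $u=Cn$ then yields $O((K_y\cdot n\log m)/m)$. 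Your argument instead bounds $\|\bE\|_{2\to2}$ directly via the decomposition $\bE=\frac{1}{m}\sum_i y_i\ba_i\ba_i^T-\bar{y}\bI_n-\nu\bx\bx^T$, the truncation $\max_i|y_i|\le 5K_y\log m$ on $\calE$, and the Gaussian operator-norm bound. This is more elementary and spares you from re-deriving tail bounds, but one step deserves to be made explicit: pulling $\max_i|y_i|$ out of $\big\|\frac{1}{m}\sum_i y_i\ba_i\ba_i^T\big\|_{2\to2}$ is \emph{not} the term-by-term triangle inequality (which would give the much weaker $\frac{1}{m}\sum_i|y_i|\|\ba_i\|_2^2=O(K_y n\log m)$); it relies on the PSD sandwich $-\max_i|y_i|\cdot\frac{1}{m}\sum_i\ba_i\ba_i^T\preceq\frac{1}{m}\sum_i y_i\ba_i\ba_i^T\preceq\max_i|y_i|\cdot\frac{1}{m}\sum_i\ba_i\ba_i^T$ together with the fact that $-\bB\preceq\bA\preceq\bB$ implies $\|\bA\|_{2\to2}\le\|\bB\|_{2\to2}$ for symmetric $\bA$. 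With that spelled out, your version is correct; note that both your argument and the paper's deliver the stated form $O((K_y\cdot n\log m)/m)$ only under the implicit assumption $n=\Omega(m)$, which you acknowledge and which is the relevant regime throughout the paper.
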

\begin{proof}
 Note that from Lemma~\ref{lem:probEvents}, the event $\calE$ occurs with probability $1-O(1/m)$. Setting $u = \log (|S_1|\cdot |S_2|)$ in Lemma~\ref{lem:eachs1s2}, and taking a union bound over all $\bs_1 \in S_1$ and $\bs_2 \in S_2$, we obtain~\eqref{eq:alls1s2}. In addition, according to~\cite[Lemma~5.4]{vershynin2010introduction}, we have
 \begin{equation}
  \|\bE\|_{2 \to  2} = \left\|\bV -\nu \bx \bx^T\right\|_{2 \to  2} = \sup_{\br \in \calS^{n-1}} \left|\br^T (\bV -\nu \bx \bx^T )\br\right| \le 2 \sup_{\br \in \calC_{1/4}}\left|\br^T (\bV -\nu \bx \bx^T )\br\right|,
 \end{equation}
where $\calC_{1/4}$ is a $(1/4)$-net of the unit sphere $\calS^{n-1}$. In addition, according to~\cite[Lemma~5.2]{vershynin2010introduction}, we have $\big|\calC_{1/4}\big| \le 9^n$. Similarly to Lemma~\ref{lem:eachs1s2},\footnote{More precisely, only~\eqref{eq:I_secondTermBdM1} (and thus~\eqref{eq:I_secondTermBd}) and~\eqref{eq:I_thirdTermBd} need to be modified accordingly.}  we obtain that for any $\br \in \calS^{n-1}$ and any $u> 2$ satisfying $u = \Omega(m)$, with probability at least $1-e^{-u}$,
\begin{equation}\label{eq:lem5ugm}
 \left|\br^T\bE\br\right| \le CK_y \cdot \frac{u \cdot \log m}{m}.
\end{equation}
Setting $u = Cn$ in~\eqref{eq:lem5ugm} and taking a union bound over all $\br \in \calC_{1/4}$, we obtain that with probability $1-e^{-\Omega(n)} - O(1/m)$, $\|\bE\|_{2 \to  2} = O((K_y \cdot n \log m)/m)$.
\end{proof}

In addition, we have the following lemma according to~\cite[Theorem~2]{liu2022generative}.\footnote{We consider the $\bar{\gamma} = 0$ case therein.}  
\begin{lemma}{\em (Adapted from~\cite[Theorem~2]{liu2022generative})} \label{lem:gpca_ppower}
 Suppose that the data matrix $\bV \in \bbR^{n\times n}$ can be written as $\bV =\bar{\bV} + \bE$ with $\bar{\bV}$ being a positive definite {\em rank-one} matrix and $\bE$ satisfying the following conditions: 1) For any two finite sets $S_1, S_2$ in $\bbR^n$ satisfying $m = \Omega((\log (|S_1|\cdot |S_2|)) \cdot (\log m))$, we have for all $\bs_1 \in S_1$ and $\bs_2 \in S_2$ that $\left|\bs_1^T \bE\bs_2\right| \le C K_y\sqrt{\frac{\log (|S_1|\cdot |S_2|)\cdot (\log m)}{m}} \cdot \|\bs_1\|_2 \cdot \|\bs_2\|_2$; 2) $\|\bE\|_{2 \to  2} = O((K_y \cdot n\log m)/m)$. Then, if there exists $t_0 \in \bbN$ such that $\bx^T\bw^{(t_0)} \ge c_0$ with $c_0$ being a sufficiently small positive constant and $m = \Omega((k\log (nLr)) \cdot (\log m))$ with a large enough implied constant, we have that after one projected power iteration in the first step of Algorithm~\ref{algo:mpr_gen} (beyond $t_0$),  
 \begin{equation}\label{eq:either1_lemma}
 \|\bw^{(t)} - \bx\|_2 \le \frac{CK_y}{c_0} \sqrt{\frac{(k\log (nLr))\cdot(\log m)}{m}},
\end{equation}
i.e., this equation holds for all $t > t_0$.
\end{lemma}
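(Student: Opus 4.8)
The plan is to obtain Lemma~\ref{lem:gpca_ppower} as a specialization of~\cite[Theorem~2]{liu2022generative} to the present matrix $\bV$, after checking that the hypotheses of that abstract result are met. First I would write $\bV = \bar{\bV} + \bE$ with $\bar{\bV} = \nu\bx\bx^T$ and $\bE = \bV - \nu\bx\bx^T$. Since $\bx \in \Range(G) \subseteq \calS^{n-1}$ and $\nu = \Cov\big[y,(\ba^T\bx)^2\big] > 0$ by the MPR assumption~\eqref{eq:cond_MPR_g0}, the matrix $\bar{\bV}$ is a positive-semidefinite rank-one matrix whose leading eigenvalue is $\nu$ with unit leading eigenvector $\bx$, which is exactly the ``signal'' structure required in the $\bar{\gamma} = 0$ regime of the cited theorem. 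The two conditions imposed on $\bE$ in the statement of Lemma~\ref{lem:gpca_ppower} --- the bilinear control $|\bs_1^T\bE\bs_2| \le CK_y\sqrt{(\log(|S_1|\cdot|S_2|))\cdot(\log m)/m}\,\|\bs_1\|_2\|\bs_2\|_2$ over arbitrary finite sets $S_1,S_2$, and the operator-norm bound $\|\bE\|_{2\to2} = O((K_y\cdot n\log m)/m)$ --- are precisely the noise hypotheses of~\cite[Theorem~2]{liu2022generative} (up to the fixed constant $K_y$, which we absorb into $O(\cdot)$), and they are supplied by Lemma~\ref{lem:alls1s2}. The warm-start hypothesis $\bx^T\bw^{(t_0)} \ge c_0$ and the sample complexity $m = \Omega((k\log(nLr))\cdot(\log m))$ are the remaining inputs the cited theorem asks for, so the conclusion~\eqref{eq:either1_lemma} for all $t > t_0$ follows by direct invocation.

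For completeness, and in case a self-contained argument is preferred over a black-box citation, I would reproduce the mechanism behind the cited theorem. Writing $\bV\bw^{(t)} = \nu(\bx^T\bw^{(t)})\bx + \bE\bw^{(t)}$, the first summand is a positive multiple of $\bx \in \Range(G)$ of magnitude at least $\nu c_0$ whenever $\bx^T\bw^{(t)} \ge c_0$, while $\bE\bw^{(t)}$ is a perturbation. Since the next iterate $\bw^{(t+1)} = \calP_G(\bV\bw^{(t)})$ lies in $\Range(G)$, one covers $\Range(G)$ by a $\delta$-net $S$ with $\log|S| = \Theta(k\log(Lr/\delta))$, applies condition~1) with $S_1 = S \cup \{\bx\}$ and $S_2$ a net containing $\bw^{(t)}$, and controls the residuals between $\bw^{(t)}$ (resp.\ $\bw^{(t+1)}$) and their net points using condition~2); taking $\delta$ polynomially small in $m$ and $n$ renders the residual negligible. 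This shows that $\calP_G$, applied to a vector lying within $O\big(K_y\sqrt{(k\log(nLr))\cdot(\log m)/m}\big)$ of a positive multiple $\ge \nu c_0$ of $\bx$, returns a point within $\frac{CK_y}{c_0}\sqrt{(k\log(nLr))\cdot(\log m)/m}$ of $\bx$, which is~\eqref{eq:either1_lemma}; one then checks that $\bx^T\bw^{(t+1)}$ does not drop below $c_0$ (indeed it is non-decreasing up to lower-order error, hence driven toward $1$), so the bound propagates to every $t > t_0$.

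The main obstacle is the bookkeeping that reconciles the perturbation rate appearing in condition~1), which carries an extra $\sqrt{\log m}$ factor coming from the sub-exponential tail of $y$ (via Lemma~\ref{lem:probEvents} and Lemma~\ref{lem:eachs1s2}), with the abstract noise hypothesis of~\cite[Theorem~2]{liu2022generative}: one must ensure that this $\sqrt{\log m}$ is threaded consistently through the chaining over the nets so that it appears once in the final statistical rate rather than being amplified, and that the operator-norm term from condition~2) is genuinely of lower order under the stated sample complexity. Everything else --- matching the net cardinalities $\log(|S_1|\cdot|S_2|) = \Theta(k\log(nLr))$ to the requirement $m = \Omega((\log(|S_1|\cdot|S_2|))\cdot(\log m))$, and verifying positivity of the signal part --- is routine once the rate is aligned, after which the lemma is immediate with $\bar{\gamma} = 0$, $\bar{\bV} = \nu\bx\bx^T$, and $\bE = \bV - \nu\bx\bx^T$.
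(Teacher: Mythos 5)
Your proposal matches the paper exactly: the paper gives no proof of this lemma at all, importing it verbatim from~\cite[Theorem~2]{liu2022generative} with $\bar{\gamma}=0$, $\bar{\bV}=\nu\bx\bx^T$, and $\bE=\bV-\nu\bx\bx^T$, with the two noise conditions supplied by Lemma~\ref{lem:alls1s2} --- which is precisely your first paragraph. Your additional sketch of the underlying mechanism (net over $\Range(G)$, projection of $\nu(\bx^T\bw^{(t)})\bx+\bE\bw^{(t)}$, and propagation of $\bx^T\bw^{(t)}\ge c_0$) is a correct outline of the cited theorem's argument and goes beyond what the paper provides.
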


\subsection{Proof of Theorem~\ref{thm:gpca}}

Combining the results of Lemmas~\ref{lem:alls1s2} and~\ref{lem:gpca_ppower}, we obtain the desired result of Theorem~\ref{thm:gpca}. 

\section{Proof of Theorem~\ref{thm:mpr_second_main} (Guarantees for the Second Step of Algorithm~\ref{algo:mpr_gen})}

Before presenting the proof of the theorem, we provide some useful lemmas. 

\subsection{Useful Lemmas for Theorem~\ref{thm:mpr_second_main}}
Recall that $\bar{y}$, $\hat{\nu}^{(t)}$ are defined in~\eqref{eq:def_bary} and \eqref{eq:def_hatnu} respectively and $M_y := \bbE[y]$ and $K_y := \|y\|_{\psi_1}$ ({\em cf.} Section~\ref{sec:setup}). First, we have the following lemma.
\begin{lemma}\label{lem:diff_nuhatnu}
 Conditioned on the event $\calE$ ({\em cf.}~\eqref{eq:eventE}), when $m =\Omega((k \log (nLr)) \cdot (\log m))$, for any $t \in \{0,1,\ldots,T_2-1\}$, we have with probability $1-e^{-\Omega(k \log (nLr))}$ that 
 \begin{equation}
 \left|\hat{\nu}^{(t)} - \nu\right| \le C K_y \sqrt{\frac{(k \log (nLr))\cdot (\log m)}{m}} + \left\|\bx^{(t)} - \bx\right\|_2^2 \cdot \nu.
 \end{equation}
\end{lemma}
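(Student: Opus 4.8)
The plan is to prove a \emph{uniform} version of the estimate: for every $\bs \in \Range(G)$, the quantity
$\bar\nu(\bs) := \frac1m\sum_{i=1}^m (y_i - \bar y)(\ba_i^T\bs)^2$, which equals $\hat\nu^{(t)}$ when $\bs = \bx^{(t)}$, satisfies the asserted bound with $\|\bx^{(t)}-\bx\|_2$ replaced by $\|\bs-\bx\|_2$. Uniformity over $\Range(G)$ is what makes the argument legitimate, since $\bx^{(t)}$ depends on the same data $\{(\ba_i,y_i)\}$; it also takes care of the quantifier over $t$ at once. First I would record the algebraic identity. With $\bE := \bV - \nu\bx\bx^T$ ($\bV$ as in \eqref{eq:defbV}) and using $\|\bs\|_2 = 1$, we have $\bs^T\bV\bs = \frac1m\sum_i y_i\big((\ba_i^T\bs)^2 - 1\big) = \frac1m\sum_i y_i(\ba_i^T\bs)^2 - \bar y$, hence
\begin{equation}
\bar\nu(\bs) - \nu \;=\; \bs^T\bE\bs \;+\; \nu\big((\bs^T\bx)^2 - 1\big) \;+\; \bar{y}\Big(1 - \tfrac{1}{m}\textstyle\sum_{i=1}^{m}(\ba_i^T\bs)^2\Big).
\end{equation}
The three summands are a stochastic spectral term, a purely deterministic bias term, and a cross term that the event $\calE$ keeps small.

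Next I would dispose of the two easy terms. Since $\bx,\bs\in\calS^{n-1}$ we have $\|\bs-\bx\|_2^2 = 2(1-\bs^T\bx)$ and $0\le 1+\bs^T\bx\le 2$, so $\big|\nu\big((\bs^T\bx)^2-1\big)\big| = \nu(1-\bs^T\bx)(1+\bs^T\bx) \le \nu\|\bs-\bx\|_2^2$, which is exactly the second term on the right-hand side of the Lemma. For the cross term, on $\calE$ we have $|\bar y| \le |M_y| + CK_y\sqrt{(\log m)/m} \le 2K_y$ by Lemma~\ref{lem:probEvents}; and by the same reasoning used below for $\bE$ applied to the $y_i\equiv 1$ instance $\tilde\bE := \frac1m\sum_i(\ba_i\ba_i^T - \bI_n)$ (equivalently, a restricted-isometry estimate over $\Range(G)$ in the spirit of Lemma~\ref{lem:boraSREC_gen}), one gets, with probability $1-e^{-\Omega(k\log(nLr))}$, that $\big|1 - \frac1m\sum_i(\ba_i^T\bs)^2\big| \le C\sqrt{(k\log(nLr))\cdot(\log m)/m}$ simultaneously for all $\bs\in\Range(G)$. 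Multiplying, this term is $O\big(K_y\sqrt{(k\log(nLr))\cdot(\log m)/m}\big)$.

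The substantive step is the uniform bound $\sup_{\bs\in\Range(G)}|\bs^T\bE\bs| \le CK_y\sqrt{(k\log(nLr))\cdot(\log m)/m}$, obtained by a covering argument over the generative range. Let $\calM$ be a $(\delta/L)$-net of $B^k(r)$, so $|\calM| \le (1+2Lr/\delta)^k$ and $G(\calM)$ is a $\delta$-net of $\Range(G)$; picking $\delta$ inverse-polynomial in the problem parameters keeps $\log|\calM| = O(k\log(nLr))$. Applying Lemma~\ref{lem:alls1s2} with $S_1 = S_2 = G(\calM)$ gives, conditioned on $\calE$ and with probability $1-e^{-\Omega(k\log(nLr))}$, the bound $|\bs_1^T\bE\bs_2| \le CK_y\sqrt{(k\log(nLr))\cdot(\log m)/m}\cdot\|\bs_1\|_2\|\bs_2\|_2$ for all net points. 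For an arbitrary $\bs\in\Range(G)$ choose $\bs'\in G(\calM)$ with $\|\bs-\bs'\|_2\le\delta$ and write $\bs^T\bE\bs - (\bs')^T\bE\bs' = \bs^T\bE(\bs-\bs') + (\bs-\bs')^T\bE\bs'$, so the discretization error is at most $2\delta\|\bE\|_{2\to 2}$; since $\|\bE\|_{2\to 2} = O(K_y\, n\log m/m)$ by the second assertion of Lemma~\ref{lem:alls1s2}, a small enough $\delta$ makes this negligible relative to the target rate. Combining this with the two easy terms and setting $\bs = \bx^{(t)}$ yields the Lemma.

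I expect the main obstacle to be precisely this last net argument: managing the interplay between the pointwise sub-Weibull concentration of Lemmas~\ref{lem:eachs1s2}--\ref{lem:alls1s2} (which controls only finitely many directions) and the operator-norm control of $\bE$ needed to absorb the discretization error, while checking that the net cardinality contributes only $O(k\log(nLr))$ to the exponent for a suitable $\delta$. Everything else is either elementary algebra or a direct appeal to Lemmas~\ref{lem:probEvents}, \ref{lem:boraSREC_gen}, \ref{lem:eachs1s2}, and \ref{lem:alls1s2}.
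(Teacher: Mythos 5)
Your proposal is correct and arrives at the same rate, but it is organized differently from the paper's proof, and the difference is worth noting. The paper also covers $G(B^k(r))$ with a $\delta$-net $G(M)$, writes $\bx^{(t)} = \bs^{(t)} + \be^{(t)}$, and absorbs the perturbation using $\max_{i}\|\ba_i\|_2 \le \sqrt{2n}$ and $\frac{1}{m}\sum_i |y_i - \bar{y}| \le 2\sqrt{2}K_y$; but for a fixed net point it then re-derives the concentration of $\frac{1}{m}\sum_i (y_i-\bar{y})(\ba_i^T\bs)^2$ essentially from scratch, splitting $y_i - \bar{y}$ into $(y_i - M_y) + (M_y - \bar{y})$ and redoing the orthogonal five-term decomposition of Lemma~\ref{lem:eachs1s2} with $y_i$ replaced by $y_i - M_y$ (see \eqref{eq:nuhatnu_threeTerms}--\eqref{eq:nuhatnu_threeTerms2_third}). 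Your identity $\bar{\nu}(\bs) - \nu = \bs^T\bE\bs + \nu\big((\bs^T\bx)^2-1\big) + \bar{y}\big(1-\frac{1}{m}\sum_i(\ba_i^T\bs)^2\big)$, valid because $\Range(G)\subseteq \calS^{n-1}$, short-circuits that: the stochastic term is exactly the quadratic form already controlled on finite sets by Lemma~\ref{lem:alls1s2}, the bias term yields the $\nu\|\bs-\bx\|_2^2$ contribution via $1-(\bs^T\bx)^2 \le \|\bs-\bx\|_2^2$ (the same elementary step the paper uses), and the discretization error is absorbed by $\|\bE\|_{2\to 2} = O(K_y\, n\log m/m)$ instead of by $\max_i\|\ba_i\|_2$. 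This buys genuine reuse of the Theorem~\ref{thm:gpca} machinery and a shorter argument; the paper's route buys only self-containedness. Two small points to tidy up: Lemma~\ref{lem:boraSREC_gen} as stated controls $\frac{1}{\sqrt{m}}\|A(\bx_1-\bx_2)\|_2$ for \emph{differences} of range points, so it does not literally deliver the uniform bound on $\big|1-\frac{1}{m}\sum_i(\ba_i^T\bs)^2\big|$ for a single $\bs\in\Range(G)$ --- rely on the net-plus-Bernstein argument you sketch (which works) rather than on that lemma; and you should fix $\delta$ concretely (the paper takes $\delta = \frac{1}{n\sqrt{m}}$ together with $n=\Omega(m)$) so that simultaneously $\log(Lr/\delta)=O(\log(nLr))$ and $2\delta\|\bE\|_{2\to 2} = O(\log m/m^{3/2})$ is negligible against the target rate.
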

\begin{proof}
For any $\delta \in (0,1)$, let $M$ be a $(\delta/L)$-net of $B^k(r)$. According to~\cite[Lemma~5.2]{vershynin2010introduction}, there exists such a net with the cardinality satisfies
\begin{equation}\label{eq:net_size_main}
 \log |M| \le k \log\frac{4Lr}{\delta}. 
\end{equation}
Due to the $L$-Lipschitz continuity of $G$, $G(M)$ is a $\delta$-net of $G(B^k(r))$. Then, since $\bx^{(t)} \in \Range(G) = G(B^k(r))$, it can be written as
\begin{equation}\label{eq:bxt_repr}
 \bx^{(t)} = \bs^{(t)} + \be^{(t)},
\end{equation}
where $\bs^{(t)} \in G(M)$ and $\|\be^{(t)}\|_2 \le\delta$. We obtain
\begin{align}
 \left|\hat{\nu}^{(t)} - \nu\right| & = \left|\frac{1}{m}\sum_{i=1}^m (y_i -\bar{y})\cdot \left(\ba_i^T\bx^{(t)}\right)^2 - \nu\right| \\
 & = \left|\frac{1}{m}\sum_{i=1}^m (y_i -\bar{y})\cdot \left(\ba_i^T\left(\bs^{(t)} + \be^{(t)}\right)\right)^2 - \nu\right|\\
 & \le \left|\frac{1}{m}\sum_{i=1}^m (y_i -\bar{y})\cdot \left(\ba_i^T\bs^{(t)}\right)^2 - \nu\right| + \left|\frac{1}{m}\sum_{i=1}^m (y_i -\bar{y})\cdot \left(\ba_i^T\be^{(t)}\right)^2\right| \nonumber\\
 & \quad + \left|\frac{2}{m}\sum_{i=1}^m (y_i -\bar{y})\cdot \left(\ba_i^T\bs^{(t)}\right) \cdot \left(\ba_i^T\be^{(t)}\right)\right|.\label{eq:diff_nuhatnu_fristThree}
\end{align}
From Lemma~\ref{lem:concentration_sub_Weibull} and by taking a union bound over $[m]$, we obtain with probability $1-m e^{-\Omega(n)}$ that
\begin{equation}\label{eq:eventAiMax}
 \max_{i \in [m]} \|\ba_i\|_2 \le \sqrt{2n}.
\end{equation}
In addition, we have 
\begin{align}
 \frac{1}{m}\sum_{i=1}^m |y_i -\bar{y}| &\le \sqrt{\frac{\sum_{i=1}^m (y_i - \bar{y})^2}{m}} \\
 & = \sqrt{\frac{\sum_{i=1}^m y_i^2  - m\bar{y}^2}{m}}\\
 & \le \sqrt{\frac{\sum_{i=1}^m y_i^2 }{m}}.
\end{align}
Then, from the definition of the event $\calE$ in Lemma~\ref{lem:probEvents}, we obtain
\begin{equation}\label{eq:sumAbsYiBd}
 \frac{1}{m}\sum_{i=1}^m |y_i -\bar{y}| \le \sqrt{\frac{\sum_{i=1}^m y_i^2 }{m}} \le 2\sqrt{2} K_y. 
\end{equation}
Therefore, since $\|\be^{(t)}\|_2 \le \delta$, conditioned on the event in~\eqref{eq:eventAiMax} and event $\calE$, we have 
\begin{align}
 \left|\frac{1}{m}\sum_{i=1}^m (y_i -\bar{y})\cdot \left(\ba_i^T\be^{(t)}\right)^2\right| & \le  \frac{1}{m}\sum_{i=1}^m |y_i -\bar{y}| \cdot 2n\delta^2\\
 & \le 2\sqrt{2} K_y \cdot 2n\delta^2.\label{eq:diff_nuhatnu_fristThree2}
\end{align}
In addition, since $\bs^{(t)} \in G(M) \subseteq \calS^{n-1}$, similarly to~\eqref{eq:diff_nuhatnu_fristThree2}, we obtain 
\begin{equation}
 \left|\frac{2}{m}\sum_{i=1}^m (y_i -\bar{y})\cdot \left(\ba_i^T\bs^{(t)}\right) \cdot \left(\ba_i^T\be^{(t)}\right)\right| \le 4\sqrt{2} K_y \cdot 2n \delta.\label{eq:diff_nuhatnu_fristThree3}
\end{equation}
Then, it remains to control the first term in~\eqref{eq:diff_nuhatnu_fristThree}, namely $\big|\frac{1}{m}\sum_{i=1}^m (y_i -\bar{y})\cdot \big(\ba_i^T\bs^{(t)}\big)^2 - \nu\big|$. For any fixed $\bs \in \calS^{n-1}$, we obtain
\begin{align}
 &\left|\frac{1}{m}\sum_{i=1}^m (y_i -\bar{y})\cdot \left(\ba_i^T\bs\right)^2 - \nu\right| \le \left|\frac{1}{m}\sum_{i=1}^{m}(y_i -M_y)\cdot\left(\ba_i^T\bs\right)^2 - \nu\right| + \left|\frac{1}{m}\sum_{i=1}^{m} (M_y - \bar{y})\cdot\left(\ba_i^T\bs\right)^2 \right| \\
  & \le \left|\frac{1}{m}\sum_{i=1}^{m} (y_i -M_y)\cdot(\ba_i^T\bs)^2 - \left(\bx^T\bs\right)^2\nu\right|  + \left|1-\left(\bx^T\bs\right)^2\right|\cdot\nu  + \left|\frac{|M_y - \bar{y}|}{m}\sum_{i=1}^{m}\left(\ba_i^T\bs\right)^2 \right|\\
  & \le \left|\frac{1}{m}\sum_{i=m}^{m} (y_i -M_y)\cdot(\ba_i^T\bs)^2 - \left(\bx^T\bs\right)^2\nu\right|  + \frac{|M_y - \bar{y}|}{m}\sum_{i=1}^{m} \left(\ba_i^T\bs\right)^2  +\|\bx-\bs\|_2^2 \cdot \nu, \label{eq:nuhatnu_threeTerms} 
\end{align}
where we use $\left|1-\left(\bx^T\bs\right)^2\right| = \big(1+\bx^T\bs\big)\cdot \big(1-\bx^T\bs\big)\le 2\big(1-\bx^T\bs\big) = \|\bx-\bs\|_2^2$ in~\eqref{eq:nuhatnu_threeTerms}. Since $\bs$ is a unit vector, from Lemma~\ref{lem:concentration_sub_Weibull}, we obtain with probability $1-e^{-\Omega(m)}$ that
\begin{equation}\label{eq:nuhatnu_second1}
 \frac{1}{m}\sum_{i=1}^m \left(\ba_i^T\bs\right)^2 \le 2. 
\end{equation}
In addition, conditioned on the event $\calE$, we have 
\begin{equation}\label{eq:nuhatnu_second2}
 |\bar{y}-M_y| \le C K_y \sqrt{\frac{\log m}{m}}.
\end{equation}
Then, we only need to focus on the first term of~\eqref{eq:nuhatnu_threeTerms}. We write $\bs \in \calS^{n-1}$ as 
\begin{equation}
 \bs = (\bs^T\bx) \bx + \sqrt{1-(\bs^T\bx)^2} \cdot \bt,
\end{equation}
where $\bx^T\bt = 0$ and $\|\bt\|_2 =1$. Hence,
\begin{equation}
 \left(\ba_i^T\bs\right)^2 = \left(\bx^T\bs\right)^2  \left(\ba_i^T\bx\right)^2 + \left(1- \left(\bx^T\bs\right)^2\right)\cdot \left(\ba_i^T\bt\right)^2 + 2 \left(\bx^T\bs\right) \cdot \sqrt{1-\left(\bx^T\bs\right)^2}\cdot \left(\ba_i^T\bx\right) \cdot \left(\ba_i^T\bt\right).
\end{equation}
Then, the first term of~\eqref{eq:nuhatnu_threeTerms} can be upper bounded as
\begin{align}
 & \left|\frac{1}{m}\sum_{i=1}^{m} (y_i -M_y)\cdot(\ba_i^T\bs)^2 - \left(\bx^T\bs\right)^2\nu\right|\nonumber\\
 & \le \frac{\left(\bx^T\bs\right)^2}{m}\cdot \left|\sum_{i=1}^{m} (y_i - M_y) (\ba_i^T\bx)^2 - \nu\right| + \frac{1- \left(\bx^T\bs\right)^2}{m}  \cdot\left|\sum_{i=1}^{m} (y_i - M_y) (\ba_i^T\bt)^2\right| \nonumber\\
 & \quad + \frac{2\left|\bx^T\bs\right|\cdot \sqrt{1- \left(\bx^T\bs\right)^2}}{m} \cdot \left|\sum_{i=1}^{m} (y_i - M_y) \cdot (\ba_i^T\bx)\cdot (\ba_i^T\bt) \right|.\label{eq:nuhatnu_threeTerms2} 
\end{align}
Conditioned on the event $\calE$, the first term in~\eqref{eq:nuhatnu_threeTerms2} can be upper bounded by 
\begin{equation}
 \frac{\left(\bx^T\bs\right)^2}{m} \left|\sum_{i=m+1}^{2m} (y_i - M_y) (\ba_i^T\bx)^2 - \nu\right| \le C K_y \sqrt{\frac{\log m}{m}}.\label{eq:nuhatnu_threeTerms2_first}  
\end{equation}
In addition, since $y = f(\ba^T\bx)$ is independent of $\ba^T\bt$, from Lemma~\ref{lem:concentration_sub_Weibull}, for any $u>2$ and fixed $y_i$, we have with probability at least $1-e^{-u}$ that 
\begin{align}
 &\frac{1- \left(\bx^T\bs\right)^2}{m}  \cdot\left|\sum_{i=1}^{m} (y_i - M_y) (\ba_i^T\bt)^2\right| \le \frac{1}{m} \left|\sum_{i=1}^{m} (y_i - M_y) ((\ba_i^T\bt)^2 -1)\right| + \frac{1}{m}\left|\sum_{i=1}^{m} (y_i - M_y)\right| \\
 & \le C \left(\frac{\sqrt{u}\cdot\sqrt{\sum_{i=1}^{m} (y_i-M_y)^2/m}}{\sqrt{m}}+\frac{u \cdot \max_{i}|y_i - M_y|}{m}\right) +\frac{1}{m}\left|\sum_{i=1}^{m} (y_i - M_y)\right|. 
\end{align}
Conditioned on the event $\calE$ and using the inequality $|M_y| \le K_y$ ({\em cf.}~\eqref{eq:Ey2y4_bd}), we obtain that when $m = \Omega(u \cdot \log m)$, with probability at least $1-e^{-u}$, it holds that 
\begin{equation}
 \frac{1- \left(\bx^T\bs\right)^2}{m}  \cdot\left|\sum_{i=1}^{m} (y_i - M_y) (\ba_i^T\bt)^2\right| \le C K_y \cdot \sqrt{\frac{u\cdot \log m}{m}}.\label{eq:nuhatnu_threeTerms2_second}
\end{equation}
Moreover, similarly to~\eqref{eq:I_fourTermBd}, we obtain that conditioned on the event $\calE$, with probability at least $1-2e^{-u}$, the third term in~\eqref{eq:nuhatnu_threeTerms2} is upper bounded as 
\begin{equation}
 \frac{2\left(\bx^T\bs\right)\cdot \sqrt{1- \left(\bx^T\bs\right)^2}}{m} \cdot \left|\sum_{i=1}^m (y_i - M_y) \cdot (\ba_i^T\bx)\cdot (\ba_i^T\bt) \right| = O\left(K_y\cdot \sqrt{\frac{u }{m}}\right).\label{eq:nuhatnu_threeTerms2_third} 
\end{equation}
Combining~\eqref{eq:nuhatnu_threeTerms},~\eqref{eq:nuhatnu_second1},~\eqref{eq:nuhatnu_second2},~\eqref{eq:nuhatnu_threeTerms2},~\eqref{eq:nuhatnu_threeTerms2_first},~\eqref{eq:nuhatnu_threeTerms2_second} and~\eqref{eq:nuhatnu_threeTerms2_third}, we obtain that conditioned on the event $\calE$, for any fixed $\bs \in\calS^{n-1}$ and $u >2$, when $m = \Omega(u \cdot \log m)$, with probability $1-e^{-\Omega(u)}$,
\begin{equation}\label{eq:nuhatnu_bigCombine}
 \left|\frac{1}{m}\sum_{i=1}^m (y_i -\bar{y})\cdot \left(\ba_i^T\bs\right)^2 - \nu\right| \le C K_y \cdot \sqrt{\frac{u\cdot \log m}{m}} + \|\bx- \bs\|_2^2 \cdot \nu. 
\end{equation}
Taking a union bound for all $\bs \in G(M)$ and setting $u= k\log \frac{Lr}{\delta}$, we obtain that when conditioned on the event $\calE$ and $m = \Omega\big(\big(k\log \frac{Lr}{\delta}\big) \cdot (\log m)\big)$, with probability $1-e^{-\Omega(k\log \frac{Lr}{\delta})}$, for {\em all} $\bs \in G(M)$, 
 \begin{equation}\label{eq:nuhatnu_bigCombine_unionB}
 \left|\frac{1}{m}\sum_{i=1}^m (y_i -\bar{y})\cdot \left(\ba_i^T\bs\right)^2 - \nu\right| \le C K_y \cdot \sqrt{\frac{\left(k\log \frac{Lr}{\delta}\right)\cdot(\log m)}{m}} + \|\bx- \bs\|_2^2 \cdot \nu,
\end{equation}
and this gives an upper bound for the first term of~\eqref{eq:diff_nuhatnu_fristThree} by substituting $\bs^{(t)}$ for $\bs$. Combining~\eqref{eq:diff_nuhatnu_fristThree},~\eqref{eq:diff_nuhatnu_fristThree2},~\eqref{eq:diff_nuhatnu_fristThree3} and~\eqref{eq:nuhatnu_bigCombine_unionB}, we obtain that when $m = \Omega\big(\big(k\log \frac{Lr}{\delta}\big) \cdot (\log m)\big)$, with probability $1-e^{-\Omega(k\log \frac{Lr}{\delta})}$, 
\begin{align}
 \left|\hat{\nu}^{(t)} -\nu\right| & \le 4\sqrt{2} K_y n\delta (2 +\delta) + C K_y \cdot \sqrt{\frac{\left(k\log \frac{Lr}{\delta}\right)\cdot(\log m)}{m}} + \|\bx- \bs^{(t)}\|_2^2 \cdot \nu \\
 & \le CK_y \left(n\delta + \sqrt{\frac{\left(k\log \frac{Lr}{\delta}\right)\cdot(\log m)}{m}}\right) + \left(\left\|\bx - \bx^{(t)}\right\|_2 +\delta\right)^2 \cdot \nu,\label{eq:nuhatnu_final}
\end{align}
where~\eqref{eq:nuhatnu_final} follows from~\eqref{eq:bxt_repr}. Setting $\delta = \frac{1}{n\sqrt{m}}$ and using $n = \Omega(m)$, we obtain
\begin{align}
 & CK_y \left(n\delta + \sqrt{\frac{\left(k\log \frac{Lr}{\delta}\right)\cdot(\log m)}{m}}\right) + \left(\left\|\bx - \bx^{(t)}\right\|_2 +\delta\right)^2 \cdot \nu \nn\\
 & \le CK_y \sqrt{\frac{\left(k\log \frac{Lr}{\delta}\right)\cdot(\log m)}{m}} + \left(\left\|\bx - \bx^{(t)}\right\|_2^2 + 2\delta \cdot \left\|\bx - \bx^{(t)}\right\|_2 + \delta^2\right) \cdot \nu\\
 & \le CK_y \sqrt{\frac{\left(k\log \frac{Lr}{\delta}\right)\cdot(\log m)}{m}} +\left\|\bx - \bx^{(t)}\right\|_2^2 \cdot \nu,\label{eq:nuhatnu_finalIneq}
\end{align}
where~\eqref{eq:nuhatnu_finalIneq} follows from $\big\|\bx - \bx^{(t)}\big\|_2 \le 2$ and $\nu>0$ is a fixed constant (recall that the value of $C$ may differ from line to line). 
\end{proof}

Next, we present the following useful lemma. 
\begin{lemma}\label{lem:thm2_eachs1s2}
 For any $u >2$ satisfying $m = \Omega(u \cdot \log m)$, conditioned on the event $\calE$ ({\em cf.}~\eqref{eq:eventE}), we have that for any $\bs_1 \in\calS^{n-1}$ and $\bs_2 \in \bbR^n$, with probability $1-e^{-\Omega(u)}$,
 \begin{align}
  \left|\frac{1}{m}\sum_{i=1}^m \left((y_i - \bar{y})\cdot (\ba_i^T\bs_1) -\nu (\ba_i^T\bx)\right)\cdot (\ba_i^T\bs_2)\right| \le \left(\frac{CK_y\sqrt{u \cdot \log m}}{\sqrt{m}} + \|\bs_1 -\bx\|_2^2\cdot \nu\right) \cdot \|\bs_2\|_2.
 \end{align}
\end{lemma}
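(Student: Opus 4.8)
The plan is to follow the template of Lemma~\ref{lem:eachs1s2}, adding just the bookkeeping needed to expose a deterministic mismatch term proportional to $\|\bs_1-\bx\|_2^2\cdot\nu$. By homogeneity in $\bs_2$ we may assume $\|\bs_2\|_2=1$ and multiply the final bound by $\|\bs_2\|_2$. First I would replace $\bar{y}$ by $M_y$: conditioned on $\calE$ (\emph{cf.}~\eqref{eq:eventE}) we have $|\bar{y}-M_y|\le CK_y\sqrt{(\log m)/m}$, while by Cauchy--Schwarz and the bound~\eqref{eq:nuhatnu_second1} applied to the fixed unit vectors $\bs_1,\bs_2$ we get $\big|\tfrac1m\sum_i(\ba_i^T\bs_1)(\ba_i^T\bs_2)\big|\le 2$ with probability $1-e^{-\Omega(m)}\ge 1-e^{-\Omega(u)}$; hence the $\bar{y}$-versus-$M_y$ discrepancy contributes only $O(K_y\sqrt{(\log m)/m})$. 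It then suffices to bound
\[
\Delta:=\left|\frac{1}{m}\sum_{i=1}^m\Big((y_i-M_y)(\ba_i^T\bs_1)-\nu(\ba_i^T\bx)\Big)(\ba_i^T\bs_2)\right|.
\]

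Next I would use the orthogonal decomposition from Lemma~\ref{lem:eachs1s2}: write $\bs_1=(\bs_1^T\bx)\bx+\sqrt{1-(\bs_1^T\bx)^2}\,\bt_1$ and $\bs_2=(\bs_2^T\bx)\bx+(\bs_2^T\bt_1)\bt_1+w_{12}\bt_2$ with $\bt_1\perp\bx$, $\bt_2\perp\{\bx,\bt_1\}$, unit norm and $w_{12}=\sqrt{1-(\bs_2^T\bx)^2-(\bs_2^T\bt_1)^2}$, then set $g_i=\ba_i^T\bx$, $h_{i,1}=\ba_i^T\bt_1$, $h_{i,2}=\ba_i^T\bt_2$, which are independent standard normals, with $y_i=f(g_i)$ depending on $g_i$ alone (so conditioning on $\calE$, which constrains only the pairs $(y_i,g_i)$, leaves $h_{i,1},h_{i,2}$ as fresh $\mathcal N(0,1)$ variables --- exactly the device already used in Lemma~\ref{lem:eachs1s2}). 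Expanding $(\ba_i^T\bs_1)(\ba_i^T\bs_2)$ produces the same five monomials $g_i^2,h_{i,1}^2,h_{i,1}h_{i,2},g_ih_{i,1},g_ih_{i,2}$ as in~\eqref{eq:I_fourTerms}, each with a coefficient bounded by an absolute constant, while $\nu(\ba_i^T\bx)(\ba_i^T\bs_2)=\nu(\bs_2^T\bx)g_i^2+\nu(\bs_2^T\bt_1)g_ih_{i,1}+\nu w_{12}g_ih_{i,2}$. The only monomial of $(y_i-M_y)(\ba_i^T\bs_1)(\ba_i^T\bs_2)$ with nonzero mean is the $g_i^2$ one, with $\bbE[(y-M_y)g^2]=\nu$ by~\eqref{eq:cond_MPR_g0}; all the others vanish in expectation by independence of $y$ from $h_{i,1},h_{i,2}$ and by oddness.

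I would then bound $\Delta$ term by term, mirroring the proof of Lemma~\ref{lem:eachs1s2}. For the $g_i^2$ pieces, add and subtract $\nu$ and use the $\calE$-bounds on $\tfrac1m\sum y_ig_i^2$, $\tfrac1m\sum y_i$, $\tfrac1m\sum g_i^2$ to get $(\bs_1^T\bx)(\bs_2^T\bx)\tfrac1m\sum(y_i-M_y)g_i^2-\nu(\bs_2^T\bx)\tfrac1m\sum g_i^2=\nu(\bs_2^T\bx)\big((\bs_1^T\bx)-1\big)+O(K_y\sqrt{(\log m)/m})$; for the $h_{i,1}^2$ and $h_{i,1}h_{i,2}$ pieces apply Lemma~\ref{lem:concentration_sub_Weibull} conditioned on $\calE$ (as in~\eqref{eq:I_secondTermBd},~\eqref{eq:I_thirdTermBd}); for the $g_ih_{i,1}$ and $g_ih_{i,2}$ pieces use Gaussian concentration conditioned on $(g_i,y_i)_i$, the conditional variance $\tfrac1{m^2}\sum_i(y_i-M_y)^2g_i^2$ being $O(K_y^2/m)$ by the $\calE$-bound on $\tfrac1m\sum y_i^2g_i^2$ together with~\eqref{eq:Ey2y4_bd} (as in~\eqref{eq:I_fourTermBd}); and for the purely Gaussian pieces $\nu(\bs_2^T\bt_1)\tfrac1m\sum g_ih_{i,1}$, $\nu w_{12}\tfrac1m\sum g_ih_{i,2}$, $\nu(\bs_2^T\bx)(\tfrac1m\sum g_i^2-1)$ apply Lemma~\ref{lem:concentration_sub_Weibull} / Gaussian concentration directly, each at rate $O(\sqrt{u/m})$. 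Using $m=\Omega(u\log m)$ to absorb the $u/m$ and $u^{1/\alpha}/m$ remainders into the $\sqrt{(u\log m)/m}$ term, and a union bound over the $O(1)$ sub-events, the total failure probability remains $e^{-\Omega(u)}$. Collecting terms gives $\Delta\le CK_y\sqrt{(u\log m)/m}+\nu|\bs_2^T\bx|\cdot|(\bs_1^T\bx)-1|$, and since $\bx,\bs_1,\bs_2$ are unit vectors, $|\bs_2^T\bx|\le 1$ and $|(\bs_1^T\bx)-1|=\tfrac12\|\bs_1-\bx\|_2^2$, yielding the claimed bound; the general-$\bs_2$ statement follows by rescaling. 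I do not expect a genuine obstacle here: the work is entirely in tracking constants and the conditioning structure, and the only conceptual point is recognizing that the ``signal'' part of $(y_i-M_y)(\ba_i^T\bs_1)(\ba_i^T\bs_2)$ is $\nu(\bs_1^T\bx)(\bs_2^T\bx)g_i^2$, whose mismatch against $\nu(\bs_2^T\bx)g_i^2$ is exactly the second-order quantity $\nu(\bs_2^T\bx)\big((\bs_1^T\bx)-1\big)=-\tfrac{\nu}{2}(\bs_2^T\bx)\|\bs_1-\bx\|_2^2$ --- the term that makes this lemma usable in the contraction analysis of the second step.
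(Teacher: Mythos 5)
Your proposal is correct and follows essentially the same route as the paper's own proof: split off the $(\bar{y}-M_y)$ discrepancy, apply the orthogonal decomposition of $\bs_1,\bs_2$ relative to $\bx$ from Lemma~\ref{lem:eachs1s2}, isolate the $g_i^2$ monomial whose deterministic mismatch is $\nu(\bs_2^T\bx)\big((\bs_1^T\bx)-1\big)=-\tfrac{\nu}{2}(\bs_2^T\bx)\|\bs_1-\bx\|_2^2$, and control the remaining zero-mean monomials by sub-Weibull/Gaussian concentration conditioned on $(y_i,g_i)_i$. No gaps.
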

\begin{proof}
 Without loss of generality, we assume that $\|\bs_2\|_2 =1$. We have 
 \begin{align}
  &\frac{1}{m}\sum_{i=1}^m \left((y_i - \bar{y})\cdot (\ba_i^T\bs_1) -\nu (\ba_i^T\bx)\right)\cdot (\ba_i^T\bs_2) \nonumber\\
  & = \frac{1}{m}\sum_{i=1}^m \left((y_i - M_y)\cdot (\ba_i^T\bs_1) -\nu (\ba_i^T\bx)\right)\cdot (\ba_i^T\bs_2) + \frac{1}{m} \sum_{i=1}^m  (M_y - \bar{y}) \cdot (\ba_i^T\bs_1) \cdot (\ba_i^T\bs_2). \label{eq:thm2_each_twoTerms}
 \end{align}
In the second term of~\eqref{eq:thm2_each_twoTerms}, we observe that from Lemma~\ref{lem:prod_subGs}, $(\ba_i^T\bs_1) \cdot (\ba_i^T\bs_2)$ are i.i.d. sub-exponential random variables with mean $\bs_1^T\bs_2$ and the sub-exponential norm being upper bounded by $C$. From Lemma~\ref{lem:concentration_sub_Weibull}, we have with probability $1-e^{-u}$ that
\begin{equation}
 \left|\frac{1}{m} \sum_{i=1}^m (\ba_i^T\bs_1) \cdot (\ba_i^T\bs_2) - (\bs_1^T\bs_2)\right| \le C\sqrt{\frac{u}{m}}. \label{eq:thm2_each_twoTerms2_0}
\end{equation}
Then, we obtain
\begin{align}
 \left|\frac{1}{m} \sum_{i=1}^m  (M_y - \bar{y}) \cdot (\ba_i^T\bs_1) \cdot (\ba_i^T\bs_2)\right| &\le |M_y - \bar{y}| \cdot \left|\frac{1}{m} \sum_{i=1}^m (\ba_i^T\bs_1) \cdot (\ba_i^T\bs_2)\right| \\
 & \le C K_y \sqrt{\frac{\log m}{m}} \cdot \left|\frac{1}{m} \sum_{i=1}^m (\ba_i^T\bs_1) \cdot (\ba_i^T\bs_2)\right| \label{eq:thm2_each_twoTerms2_1}\\
 & \le C K_y \sqrt{\frac{\log m}{m}} \cdot \left(1+C\sqrt{\frac{u}{m}}\right)\label{eq:thm2_each_twoTerms2_2} \\
 & \le C' K_y \sqrt{\frac{\log m}{m}} \label{eq:thm2_each_twoTerms2_3},
\end{align}
where~\eqref{eq:thm2_each_twoTerms2_1} follows from the definition of the event $\calE$ in Lemma~\ref{lem:probEvents},~\eqref{eq:thm2_each_twoTerms2_2} follows from~\eqref{eq:thm2_each_twoTerms2_0} and $|\bs_1^T\bs_2| \le \|\bs_1\|_2\cdot \|\bs_2\|_2 =1$, and~\eqref{eq:thm2_each_twoTerms2_3} follows from the condition $m = \Omega\big(u\cdot \log m\big)$. Then, it remains to control the first term of~\eqref{eq:thm2_each_twoTerms}. Similarly to that in the proof of Lemma~\ref{lem:eachs1s2}, we decompose $\bs_1$ as ({\em cf.}~\eqref{eq:s1_repre}) 
\begin{equation}\label{eq:thm2_s1_repre}
 \bs_1 = (\bs_1^T\bx) \bx + \sqrt{1-(\bs_1^T\bx)^2} \cdot\bt_1,
\end{equation}
where $\|\bt_1\|_2 =1$ and $\bt_1^T \bx =0$. Similarly, letting $w_{12}=\sqrt{1-(\bs_2^T\bx)^2 - (\bs_2^T\bt_1)^2}$, $\bs_2$ can be written as ({\em cf.}~\eqref{eq:s2_repre}) 
\begin{equation}\label{eq:thm2_s2_repre}
 \bs_2 = (\bs_2^T\bx)\bx + (\bs_2^T\bt_1)\bt_1 + w_{12} \bt_2,
\end{equation}
where $\|\bt_2\|_2 =1$ and $\bt_2^T \bx =\bt_2^T \bt_1 =0$. Let $g_i = \ba_i^T\bx \sim\calN(0,1)$, $h_{i,1} = \ba_i^T\bt_1 \sim\calN(0,1)$, and $h_{i,2} = \ba_i^T\bt_2 \sim\calN(0,1)$; the three are independent. Therefore, we obtain
\begin{align}
 &\frac{1}{m}\sum_{i=1}^m \left((y_i - M_y)\cdot (\ba_i^T\bs_1) -\nu (\ba_i^T\bx)\right)\cdot (\ba_i^T\bs_2) = \frac{1}{m}\sum_{i=1}^m \left((y_i - M_y)\cdot (\bs_1^T\bx) - \nu\right) \cdot (\bs_2^T\bx) g_i^2 \nonumber\\
 & \quad + \frac{1}{m}\sum_{i=1}^m \left((y_i - M_y)\cdot \sqrt{1-(\bs_1^T\bx)^2}\cdot (\bs_2^T\bt_1)\right)h_{i,1}^2 \nonumber\\
 & \quad + \frac{1}{m}\sum_{i=1}^m \left((y_i - M_y)\cdot \sqrt{1-(\bs_1^T\bx)^2}\right) \cdot w_{12}  h_{i,1} h_{i,2} \nonumber\\
 & \quad + \frac{1}{m}\sum_{i=1}^m \left((y_i - M_y)\cdot \sqrt{1-(\bs_1^T\bx)^2}\cdot (\bs_2^T\bx) +(y_i - M_y)\cdot (\bs_1^T\bx)\cdot (\bs_2^T\bt_1) - \nu (\bs_2^T\bt_1) \right) \cdot g_i h_{i,1} \nonumber\\
 & \quad + \frac{1}{m}\sum_{i=1}^m \left((y_i - M_y)\cdot(\bs_1^T\bx) -\nu\right) \cdot w_{12} g_i h_{i,2}.\label{eq:thm2_each_fiveTerms}
 \end{align}
 The equality~\eqref{eq:thm2_each_fiveTerms} is similar to~\eqref{eq:I_fourTerms}, with the major difference being that $y_i$ is replaced by $y_i - M_y$, which has zero mean. In the following, we focus on bounding the first term in~\eqref{eq:thm2_each_fiveTerms}, and other terms can be similarly bounded as those in the proof of Lemma~\ref{lem:eachs1s2}. In particular, for the first term in~\eqref{eq:thm2_each_fiveTerms}, we have 
\begin{align}
 &\frac{1}{m}\sum_{i=1}^m \left((y_i - M_y)\cdot (\bs_1^T\bx) - \nu\right) \cdot (\bs_2^T\bx) g_i^2 \nonumber\\
 & = \frac{(\bs_1^T\bx)(\bs_2^T\bx)}{m}\sum_{i=1}^m (y_i - M_y - \nu) g_i^2 + \frac{1}{m}\sum_{i=1}^m \nu (\bs_2^T\bx) ((\bs_1^T\bx)-1) g_i^2 \\
 & = \frac{(\bs_1^T\bx)(\bs_2^T\bx)}{m}\sum_{i=1}^m (y_i g_i^2 - M_y - \nu) + \frac{(\bs_1^T\bx)(\bs_2^T\bx)}{m}\sum_{i=1}^m (\nu + M_y) (1-g_i^2) \nonumber\\
 & \quad + \frac{1}{m}\sum_{i=1}^m \nu (\bs_2^T\bx) ((\bs_1^T\bx)-1) g_i^2.
\end{align}
By Lemma~\ref{lem:probEvents} and similarly to~\eqref{eq:thm2_each_twoTerms2_0}, as well as using $|M_y| \le K_y$ ({\em cf.}~\eqref{eq:Ey2y4_bd}) and $\nu < C K_y$,\footnote{Recall that we will assume that $\nu>0$ ({\em cf.}~\eqref{eq:cond_MPR_g0}). We have $\nu = \bbE[y(\ba^T\bx)^2] - M_y \le (\bbE[y^2])^{1/2} \cdot (\bbE[(\ba^T\bx)^4])^{1/2} + |M_y|$. Note that $\ba^T\bx \sim \calN(0,1)$. From~\eqref{eq:Ey2y4_bd}, we have $|M_y| \le K_y$ and $(\bbE[y^2])^{1/2} \le 2 K_y$. Then, we obtain $\nu \le (2\sqrt{3} + 1) K_y$.} we obtain
\begin{equation}
 \left|\frac{1}{m}\sum_{i=1}^m \left((y_i - M_y)\cdot (\bs_1^T\bx) - \nu\right) \cdot (\bs_2^T\bx) g_i^2\right| \le C K_y \sqrt{\frac{\log m}{m}} + C' \|\bs_1-\bx\|_2^2 \cdot \nu,\label{eq:thm2_eachs1s2_firstBd}
\end{equation}
where $C'>0$ can be choose to be slightly larger than $\frac{1}{2}$. For the last four terms in~\eqref{eq:thm2_each_fiveTerms}, similarly to~\eqref{eq:I_combinedRes}, we obtain that when $m= \Omega\big(u \cdot \log m\big)$, with probability $1-O\big(e^{-u}\big)$, the sum of the absolute value of these four terms can be upper bounded by
\begin{equation}
 \frac{CK_y \sqrt{u \cdot \log m} }{\sqrt{m}}.\label{eq:thm2_eachs1s2_sum4} 
\end{equation}
Combining~\eqref{eq:thm2_each_twoTerms},~\eqref{eq:thm2_each_twoTerms2_3},~\eqref{eq:thm2_each_fiveTerms},~\eqref{eq:thm2_eachs1s2_firstBd}, and~\eqref{eq:thm2_eachs1s2_sum4}, we obtain the desired result.
\end{proof}

\subsection{Proof of Theorem~\ref{thm:mpr_second_main}}

Since $\bx^{(t+1)} = \calP_G(\tilde{\bx}^{(t+1)})$ and $\bx \in \Range(G)$, we obtain
\begin{equation}
 \left\|\tilde{\bx}^{(t+1)} - \bx^{(t+1)}\right\|_2 \le \left\|\tilde{\bx}^{(t+1)} - \bx\right\|_2,
\end{equation}
or equivalently,
\begin{equation}
 \left\|\left(\tilde{\bx}^{(t+1)} -\bx\right) +\left(\bx- \bx^{(t+1)}\right)\right\|_2^2 \le \left\|\tilde{\bx}^{(t+1)} - \bx\right\|_2^2,
\end{equation}
which gives
\begin{align}
 \| \bx^{(t+1)} -\bx\|_2^2 &\le 2\langle\tilde{\bx}^{(t+1)}-\bx, \bx^{(t+1)} -\bx\rangle\\
 & = 2\left\langle \bx^{(t)} - \frac{\zeta}{m}\cdot \sum_{i=1}^m \left(\hat{\nu}^{(t)}\left(\ba_i^T \bx^{(t)}\right) -\tilde{\by}_i^{(t)}\right)\ba_i -\bx,\bx^{(t+1)} -\bx \right\rangle \label{eq:thm2_impFirst} \\
 & = 2\left\langle \bx^{(t)} - \frac{\zeta}{m}\cdot \sum_{i=1}^m \left(\nu\left(\ba_i^T \bx^{(t)}\right) -\tilde{\by}_i^{(t)}\right)\ba_i -\bx,\bx^{(t+1)} -\bx \right\rangle \nonumber\\
 & \quad + 2 \left\langle \frac{\zeta}{m}\cdot \sum_{i=1}^m \left(\nu - \hat{\nu}^{(t)}\right)\cdot \left(\ba_i^T \bx^{(t)}\right) \ba_i, \bx^{(t+1)} -\bx \right\rangle\label{eq:thm2_impSecond}\\
 & = 2\left\langle \frac{\zeta}{m}\cdot\sum_{i=1}^m\left(\tilde{\by}_i^{(t)} - \nu (\ba_i^T\bx)\right)\ba_i,\bx^{(t+1)} -\bx \right\rangle \nonumber\\
 & \quad + 2\left\langle \left(\bI_n -\frac{\zeta \nu}{m} \cdot \sum_{i=1}^m \ba_i\ba_i^T\right)\left(\bx^{(t)} - \bx\right),\bx^{(t+1)} -\bx \right\rangle \nonumber\\
 & \quad + 2 \left\langle \frac{\zeta}{m}\cdot \sum_{i=1}^m \left(\nu - \hat{\nu}^{(t)}\right)\cdot \left(\ba_i^T \bx^{(t)}\right) \ba_i, \bx^{(t+1)} -\bx \right\rangle\label{eq:thm2_impThird}\\
 & = 2\left\langle \frac{\zeta}{m}\cdot\sum_{i=1}^m\left((y_i -\bar{y})\cdot\left(\ba_i^T\bx^{(t)}\right) - \nu (\ba_i^T\bx)\right)\ba_i,\bx^{(t+1)} -\bx \right\rangle \nonumber\\
 & \quad + 2\left\langle \left(\bI_n -\frac{\zeta \nu}{m} \cdot \sum_{i=1}^m \ba_i\ba_i^T\right)\left(\bx^{(t)} - \bx\right),\bx^{(t+1)} -\bx \right\rangle \nonumber\\
 & \quad + 2 \left\langle \frac{\zeta}{m}\cdot \sum_{i=1}^m \left(\nu - \hat{\nu}^{(t)}\right)\cdot \left(\ba_i^T \bx^{(t)}\right) \ba_i, \bx^{(t+1)} -\bx \right\rangle\label{eq:thm2_impFourth},
 %& = 2\left\langle \bx^{(t)} - \frac{\zeta}{m}\cdot \bA^T (\nu\bA\bx^{(t)} - \tilde{\by})-\bx, \bx^{(t+1)} -\bx\right\rangle + 2\left\langle  \frac{\zeta}{m}\cdot \bA^T (\nu -\hat{\nu})\bA\bx^{(t)}, \bx^{(t+1)} -\bx\right\rangle.\label{eq:pgd_first_twoTerms}
\end{align}
where~\eqref{eq:thm2_impFirst} follows from the setting of $\tilde{\bx}^{(t+1)}$ in~\eqref{eq:def_tildebx},~\eqref{eq:thm2_impSecond} follows from the decomposition $\hat{\nu}^{(t+1)} = \nu + \big(\hat{\nu}^{(t)} - \nu\big)$,~\eqref{eq:thm2_impThird} follows from the fact that the first term in~\eqref{eq:thm2_impSecond} can be written as the sum of the first two terms in~\eqref{eq:thm2_impThird}, and~\eqref{eq:thm2_impFourth} follows from the setting of $\tilde{y}_i^{(t)}$ in~\eqref{eq:def_tildey}. For any $\delta \in (0,1)$, let $M$ be a $(\delta/L)$-net of $B^k(r)$. Then, similarly to that in the proof of Lemma~\ref{lem:diff_nuhatnu}, we have $\log |M| \le k \log \frac{4Lr}{\delta}$ ({\em cf.}~\eqref{eq:net_size_main}) and $G(M)$ is a $\delta$-net of $G(B^k(r))$. For any $t \in \{0,\ldots,T_2\}$, we write 
\begin{align}
 \bx^{(t)} = \bs^{(t)} + \be^{(t)},\label{eq:bxt_decomp}
\end{align}
where $\bs^{(t)} \in G(M)$ and $\|\be^{(t)}\|_2 \le \delta$.  Next, we provide upper bounds for the three terms in~\eqref{eq:thm2_impFourth} separately. Throughout the following, we assume the occurrence of the event $\calE$ ({\em cf.}~\eqref{eq:eventE}) and the relevant probabilities are all conditioned accordingly. 

\underline{The first term of~\eqref{eq:thm2_impFourth}}: From~\eqref{eq:bxt_decomp}, we have
\begin{align}
 & 2\left\langle \frac{\zeta}{m}\cdot\sum_{i=1}^m\left((y_i -\bar{y})\cdot\left(\ba_i^T\bx^{(t)}\right) - \nu (\ba_i^T\bx)\right)\ba_i,\bx^{(t+1)} -\bx \right\rangle \nonumber\\
 & = 2\left\langle \frac{\zeta}{m}\cdot\sum_{i=1}^m\left((y_i -\bar{y})\cdot\left(\ba_i^T\bx^{(t)}\right) - \nu (\ba_i^T\bx)\right)\ba_i,\bs^{(t+1)} -\bx \right\rangle \nonumber\\
 & \quad + 2\left\langle \frac{\zeta}{m}\cdot\sum_{i=1}^m\left((y_i -\bar{y})\cdot\left(\ba_i^T\bx^{(t)}\right) - \nu (\ba_i^T\bx)\right)\ba_i,\be^{(t+1)} \right\rangle\\
 & = 2\left\langle \frac{\zeta}{m}\cdot\sum_{i=1}^m\left((y_i -\bar{y})\cdot\left(\ba_i^T\bs^{(t)}\right) - \nu (\ba_i^T\bx)\right)\ba_i,\bs^{(t+1)} -\bx \right\rangle \nonumber\\
 & \quad + 2\left\langle \frac{\zeta}{m}\cdot\sum_{i=1}^m(y_i -\bar{y})\cdot\left(\ba_i^T\be^{(t)}\right)\ba_i,\bs^{(t+1)} -\bx \right\rangle\nonumber\\
 & \quad + 2\left\langle \frac{\zeta}{m}\cdot\sum_{i=1}^m\left((y_i -\bar{y})\cdot\left(\ba_i^T\bx^{(t)}\right) - \nu (\ba_i^T\bx)\right)\ba_i,\be^{(t+1)} \right\rangle.\label{eq:thm2_imp_firstThreeTerms}
\end{align}
Recall that in~\eqref{eq:eventAiMax}, we obtain with probability $1-m e^{-\Omega(n)}$ that $\max_{i \in [m]} \|\ba_i\|_2 \le \sqrt{2n}$. In addition, according to~\eqref{eq:sumAbsYiBd}, we have $\frac{1}{m}\sum_{i=1}^m |y_i -\bar{y}| \le 2\sqrt{2} K_y$. Then, for the second term in~\eqref{eq:thm2_imp_firstThreeTerms}, since $\big\|\be^{(t)}\big\|_2 \le \delta$, similarly to~\eqref{eq:diff_nuhatnu_fristThree2}, we obtain 
\begin{align}
 2\left\langle \frac{\zeta}{m}\cdot\sum_{i=1}^m(y_i -\bar{y})\cdot\left(\ba_i^T\be^{(t)}\right)\ba_i,\bs^{(t+1)} -\bx \right\rangle & \le 2\zeta \cdot 2n\delta\cdot \left\|\bs^{(t+1)} -\bx\right\|_2 \cdot \frac{1}{m}\sum_{i=1}^m |y_i -\bar{y}|  \\
 & \le 8\sqrt{2} \zeta K_y n \delta \cdot \left\|\bs^{(t+1)} -\bx\right\|_2\\
 & \le 8\sqrt{2} \zeta K_y n \delta \cdot \left(\left\|\bx^{(t+1)} -\bx\right\|_2+\delta\right).\label{eq:thm2_imp_firstThreeTerms_second}
\end{align}
Similarly, since both $\bx$ and $\bx^{(t)}$ are unit vectors and $\big\|\be^{(t+1)}\big\|_2 \le \delta$, for the third term in~\eqref{eq:thm2_imp_firstThreeTerms},  
\begin{align}
 & 2\left\langle \frac{\zeta}{m}\cdot\sum_{i=1}^m\left((y_i -\bar{y})\cdot\left(\ba_i^T\bx^{(t)}\right) - \nu (\ba_i^T\bx)\right)\ba_i,\be^{(t+1)} \right\rangle \nonumber\\
 & \le 2\zeta \cdot 2n\delta\cdot  \frac{1}{m}\sum_{i=1}^m |y_i -\bar{y}| +  2\zeta \cdot 2n\delta\cdot \nu \\
 & \le 4 \zeta n \delta \cdot (2\sqrt{2}K_y + \nu). \label{eq:thm2_imp_firstThreeTerms_third}
\end{align}
It remains to control the first term of~\eqref{eq:thm2_imp_firstThreeTerms}. In order to do this, we make use of Lemma~\ref{lem:thm2_eachs1s2} with taking a union bound over all $(\bs_1,\bs_2) \in G(M) \times (G(M)-\bx)$ and setting $u = k\log\frac{Lr}{\delta}$, and we obtain that when $m = \Omega\big(\big(k \log\frac{Lr}{\delta}\big) \cdot (\log m)\big)$, with probability $1-e^{-\Omega(k\log\frac{Lr}{\delta})}$, 
\begin{align}
 &\left|2\left\langle \frac{\zeta}{m}\cdot\sum_{i=1}^m\left((y_i -\bar{y})\cdot\left(\ba_i^T\bs^{(t)}\right) - \nu (\ba_i^T\bx)\right)\ba_i,\bs^{(t+1)} -\bx \right\rangle\right| \nonumber\\
 & \le 2\zeta\cdot\left(CK_y\sqrt{\frac{\left(k \log\frac{Lr}{\delta}\right)\cdot(\log m)}{m}} + \nu \left\|\bs^{(t)}-\bx\right\|_2^2\right) \cdot \left\|\bs^{(t+1)}-\bx\right\|_2 \\
 & \le 2\zeta\cdot\left(CK_y\sqrt{\frac{\left(k \log\frac{Lr}{\delta}\right)\cdot(\log m)}{m}} + \nu \left(\left\|\bx^{(t)}-\bx\right\|_2 +\delta\right)^2\right) \cdot \left(\left\|\bx^{(t+1)}-\bx\right\|_2 +\delta\right).\label{eq:thm2_imp_firstThreeTerms_first}
\end{align}

\underline{The second term of~\eqref{eq:thm2_impFourth}}: 
By~\eqref{eq:bxt_decomp}, we have 
\begin{align}
  & 2\left\langle \left(\bI_n -\frac{\zeta \nu}{m} \cdot \sum_{i=1}^m \ba_i\ba_i^T\right)\left(\bx^{(t)} - \bx\right),\bx^{(t+1)} -\bx \right\rangle \nn\\
  & =  2\left\langle \left(\bI_n -\frac{\zeta \nu}{m} \cdot \sum_{i=1}^m \ba_i\ba_i^T\right)\left(\bx^{(t)} - \bx\right),\be^{(t+1)}\right\rangle \nonumber\\
  & \quad + 2\left\langle \left(\bI_n -\frac{\zeta \nu}{m} \cdot \sum_{i=1}^m \ba_i\ba_i^T\right)\be^{(t)},\bs^{(t+1)} -\bx \right\rangle \nn\\
  & \quad + 2\left\langle \left(\bI_n -\frac{\zeta \nu}{m} \cdot \sum_{i=1}^m \ba_i\ba_i^T\right)\left(\bs^{(t)} - \bx\right),\bs^{(t+1)} -\bx \right\rangle.\label{eq:thm2_imp_firstThreeTerms_second0}
\end{align}
Since from~\eqref{eq:eventAiMax}, we have with probability $1-me^{-\Omega(n)}$ that $\max_{i \in [m]}\|\ba_i\|_2 \le \sqrt{2n}$. Similarly to~\eqref{eq:diff_nuhatnu_fristThree2}, we obtain
\begin{align}
 \left|2\left\langle \left(\bI_n -\frac{\zeta \nu}{m} \cdot \sum_{i=1}^m \ba_i\ba_i^T\right)\left(\bx^{(t)} - \bx\right),\be^{(t+1)}\right\rangle\right| \le 2\left(1+\frac{2n\zeta \nu}{m}\right) \cdot \delta \cdot \left\|\bx^{(t)}-\bx\right\|_2,\label{eq:thm2_imp_firstThreeTerms_second1}
 \end{align}
 and
 \begin{align}
 2\left\langle \left(\bI_n -\frac{\zeta \nu}{m} \cdot \sum_{i=1}^m \ba_i\ba_i^T\right)\be^{(t)},\bs^{(t+1)} -\bx \right\rangle & \le 2\left(1+\frac{2n\zeta \nu}{m}\right) \cdot \delta \cdot \left\|\bs^{(t+1)}-\bx\right\|_2 \\
 & \le 2\left(1+\frac{2n\zeta \nu}{m}\right) \cdot \delta \cdot \left(\left\|\bx^{(t+1)}-\bx\right\|_2+\delta\right). \label{eq:thm2_imp_firstThreeTerms_second2}
\end{align}
Fix any pair of $\bs_1,\bs_2 \in \bbR^n$. From Lemma~\ref{lem:concentration_sub_Weibull} and similarly to~\eqref{eq:thm2_each_twoTerms2_0}, we obtain that for any $u >2$, with probability at least $1-e^{-u}$,
\begin{equation}
 \zeta \nu (\bs_1^T\bs_2) - C\zeta \nu \sqrt{\frac{u}{m}} \cdot \|\bs_1\|_2\cdot \|\bs_2\|_2 \le \frac{\zeta \nu}{m} \left\langle\sum_{i=1}^m \ba_i\ba_i^T \bs_1,\bs_2\right\rangle \le  \zeta \nu (\bs_1^T\bs_2) + C\zeta \nu \sqrt{\frac{u}{m}} \cdot \|\bs_1\|_2\cdot \|\bs_2\|_2.
\end{equation}
Then, we obtain 
\begin{align}
 \left|\left\langle\left(\bI_n - \frac{\zeta \nu}{m} \sum_{i=1}^m \ba_i\ba_i^T\right)\bs_1,\bs_2 \right\rangle  - (1-\zeta \nu) (\bs_1^T\bs_2)\right| \le C\zeta \nu \sqrt{\frac{u}{m}} \cdot \|\bs_1\|_2\cdot \|\bs_2\|_2. \label{eq:eachbIn_s1s2}
\end{align}
Taking a union bound for all pairs $(\bs_1,\bs_2) \in (G(M)-\bx) \times (G(M)-\bx)$ and setting $u = \frac{m}{C^2\log m}$, as well as considering $\zeta,\nu$ as fixed positive constants, we obtain that when $m = \Omega\big(\big(k \log\frac{Lr}{\delta}\big) \cdot (\log m)\big)$, with probability $1-e^{-\Omega(k \log\frac{Lr}{\delta})}$,
\begin{align}
 & \left|2\left\langle \left(\bI_n -\frac{\zeta \nu}{m} \cdot \sum_{i=1}^m \ba_i\ba_i^T\right)\left(\bs^{(t)} - \bx\right),\bs^{(t+1)} -\bx \right\rangle\right| \nn\\
 &\le 2\max\{1-\zeta\nu(1-1/\sqrt{\log m}), \zeta\nu(1+1/\sqrt{\log m}) -1\} \cdot \left\|\bs^{(t)} - \bx\right\|_2\cdot \left\|\bs^{(t+1)} - \bx\right\|_2 \\
 &\le (2\cdot |1-\zeta\nu| + 1/\sqrt{\log m}) \cdot \left\|\bs^{(t)} - \bx\right\|_2\cdot \left\|\bs^{(t+1)} - \bx\right\|_2 \\
 & \le (2\cdot |1-\zeta\nu| + 1/\sqrt{\log m}) \cdot\left(\left\|\bx^{(t)} - \bx\right\|_2 + \delta\right)\cdot \left(\left\|\bx^{(t+1)} - \bx\right\|_2 + \delta\right).\label{eq:thm2_imp_firstThreeTerms_second3}
 %&\le 2\max\{1-\zeta\nu(1-1/\log m), \zeta\nu(1+1/\log m) -1\} \cdot \left(\left\|\bx^{(t)} - \bx\right\|_2 + \delta\right)\cdot \left(\left\|\bx^{(t+1)} - \bx\right\|_2 + \delta\right).\label{eq:thm2_imp_firstThreeTerms_second3}
\end{align}

\underline{The third term of~\eqref{eq:thm2_impFourth}}: By the TS-REC in Lemma~\ref{lem:boraSREC_gen}, we obtain that for any $\delta \in (0,1)$, when $m = \Omega(k \log \frac{Lr}{\delta})$, with probability $1-e^{-\Omega(m)}$, 
\begin{equation}
 \frac{1}{\sqrt{m}}\cdot\sqrt{\sum_{i=1}^m \left(\ba_i^T \left(\bx^{(t+1)}-\bx\right)\right)^2} \le \left((1+c)\|\bx^{(t+1)} -\bx\|_2 + \delta\right),
\end{equation}
where $c$ is a sufficiently small positive constant. By~\eqref{eq:bxt_decomp} and similarly to the derivation of the TS-REC, we have that when $m = \Omega(k \log \frac{Lr}{\delta})$, with probability $1-e^{-\Omega(m)}$, 
\begin{equation}
 \frac{1}{\sqrt{m}}\cdot \sqrt{\sum_{i=1}^m \left(\ba_i^T \bx^{(t)}\right)^2} \le \left(1+c+\delta\right).
\end{equation}
Then, from the Cauchy-Schwarz inequality, the third term in~\eqref{eq:thm2_impFourth} can be upper bounded as
\begin{align}
 &\left|2 \left\langle \frac{\zeta}{m}\cdot \sum_{i=1}^m \left(\nu - \hat{\nu}^{(t)}\right)\cdot \left(\ba_i^T \bx^{(t)}\right) \ba_i, \bx^{(t+1)} -\bx \right\rangle\right| \nn\\
 & \le \frac{2\zeta\cdot\left|\nu -\hat{\nu}^{(t)}\right|}{\sqrt{m}}\cdot\sqrt{\sum_{i=1}^m \left(\ba_i^T \bx^{(t)}\right)^2} \cdot \frac{1}{\sqrt{m}}\cdot\sqrt{\sum_{i=1}^m \left(\ba_i^T \left(\bx^{(t+1)}-\bx\right)\right)^2} \\
 & \le 2\zeta (1+c+\delta) \cdot\left|\nu -\hat{\nu}^{(t)}\right| \cdot\left((1+c)\|\bx^{(t+1)} -\bx\|_2 + \delta\right).\label{eq:thm2_impFourth_third}
\end{align}

\underline{Combining terms}: %Considering $K_y, \zeta, \nu$ as fixed positive constants, %and setting $\delta = C'/(n\sqrt{m})$. Similarly to~\eqref{eq:nuhatnu_finalIneq}, we observe that all the terms involving the multiplying of $\delta$ can be omitted. 
Combining~\eqref{eq:thm2_impFourth},~\eqref{eq:thm2_imp_firstThreeTerms},~\eqref{eq:thm2_imp_firstThreeTerms_second},~\eqref{eq:thm2_imp_firstThreeTerms_third},~\eqref{eq:thm2_imp_firstThreeTerms_first},~\eqref{eq:thm2_imp_firstThreeTerms_second0},~\eqref{eq:thm2_imp_firstThreeTerms_second1},~\eqref{eq:thm2_imp_firstThreeTerms_second2},~\eqref{eq:thm2_imp_firstThreeTerms_second3}, and~\eqref{eq:thm2_impFourth_third}, we obtain that when $m = \Omega\big(\big(k \log\frac{Lr}{\delta}\big) \cdot (\log m)\big)$, with probability $1-e^{-\Omega(k\log\frac{Lr}{\delta})}$, %and using Lemma~\ref{lem:diff_nuhatnu} to bound $\left|\nu -\hat{\nu}^{(t)}\right|$, as well as using Lemma~\ref{lem:probEvents} that concerns the probability of the occurance of the event $\calE$, we obtain that when $m = \Omega\big(k\log (nLr)\cdot \big(\log^2 m\big)\big)$, with probability $1 - O(1/m)$, 
\begin{align}
 & \left\|\bx^{(t+1)}- \bx\right\|_2^2 \le Cn\zeta K_y \delta \nn\\
 & \quad + 2\zeta\left(CK_y\sqrt{\frac{\left(k\log\frac{Lr}{\delta}\right)\cdot(\log m)}{m}} + \nu \left(\left\|\bx^{(t)}-\bx\right\|_2 +\delta\right)^2\right)\cdot \left(\left\|\bx^{(t+1)}-\bx\right\|_2 +\delta\right) \nn\\
 & \quad + (2\cdot |1-\zeta\nu| + 1/\sqrt{\log m}) \cdot\left(\left\|\bx^{(t)} - \bx\right\|_2 + \delta\right)\cdot \left(\left\|\bx^{(t+1)} - \bx\right\|_2 + \delta\right)\nn\\
 & \quad + 2\zeta (1+c+\delta) \cdot\left|\nu -\hat{\nu}^{(t)}\right| \cdot\left((1+c)\|\bx^{(t+1)} -\bx\|_2 + \delta\right).
\end{align}
Considering $\zeta$ as a positive constant and setting $\delta = \frac{K_y}{m n}$, as well as using $n =\Omega(m)$, similarly to~\eqref{eq:nuhatnu_finalIneq}, we obtain
\begin{align}
 &\left\|\bx^{(t+1)}- \bx\right\|_2^2 \le \frac{C K_y^2}{m} + \left(\left\|\bx^{(t+1)} - \bx\right\|_2 + \frac{K_y}{mn}\right)  \nn\\
 & \quad \times \left(CK_y\sqrt{\frac{(k\log (nLr))\cdot(\log m)}{m}} + 2\zeta \nu \left\|\bx^{(t)}-\bx\right\|_2^2 + (2\cdot |1-\zeta\nu| + 1/\sqrt{\log m}) \cdot \left\|\bx^{(t)} - \bx\right\|_2\right) \nn\\
 & \quad + 2\zeta \left(1+c+\frac{K_y}{mn}\right) \cdot\left|\nu -\hat{\nu}^{(t)}\right| \cdot\left((1+c)\|\bx^{(t+1)} -\bx\|_2 + \frac{K_y}{mn}\right).\label{eq:thm2_mainBd1}
\end{align}
From Lemma~\ref{lem:diff_nuhatnu}, we have 
\begin{equation}
 \left|\nu -\hat{\nu}^{(t)}\right| \le CK_y\sqrt{\frac{(k\log (nLr))\cdot(\log m)}{m}} + \left\|\bx^{(t)}-\bx\right\|_2^2 \cdot \nu. 
\end{equation}
Then, since $c>0$ is sufficiently small, we obtain
\begin{align}
 & 2\zeta \left(1+c+\frac{K_y}{mn}\right) \cdot\left|\nu -\hat{\nu}^{(t)}\right| \cdot\left((1+c)\|\bx^{(t+1)} -\bx\|_2 + \delta\right) \nn\\
 & \quad \le \left(C K_y\sqrt{\frac{(k\log (nLr))\cdot(\log m)}{m}} + 3\zeta \nu \left\|\bx^{(t)}-\bx\right\|_2^2\right) \cdot \left(\left\|\bx^{(t+1)} - \bx\right\|_2 + \frac{K_y}{mn}\right).\label{eq:thm2_mainBd1_thirdTerm}
\end{align}
Combining~\eqref{eq:thm2_mainBd1} and~\eqref{eq:thm2_mainBd1_thirdTerm}, we obtain
\begin{align}
 &\left\|\bx^{(t+1)}- \bx\right\|_2^2 \le \frac{C}{m} + \left(\left\|\bx^{(t+1)} - \bx\right\|_2 + \frac{K_y}{mn}\right)  \nn\\
 & \quad \times \left(CK_y\sqrt{\frac{(k\log (nLr))\cdot(\log m)}{m}} + 5\zeta \nu \left\|\bx^{(t)}-\bx\right\|_2^2 + (2\cdot |1-\zeta\nu| + 1/\sqrt{\log m}) \cdot \left\|\bx^{(t)} - \bx\right\|_2\right).
\end{align}
Therefore, from the quadratic formula, we obtain 
\begin{align}
 \left\|\bx^{(t+1)}- \bx\right\|_2 &\le CK_y\sqrt{\frac{(k\log (nLr))\cdot(\log m)}{m}} \nn\\
 &\quad + \left(5\zeta \nu \left\|\bx^{(t)}-\bx\right\|_2 + (2\cdot |1-\zeta\nu| + 1/\sqrt{\log m})\right) \cdot \left\|\bx^{(t)} - \bx\right\|_2.\label{eq:thm2LastImp}
\end{align}
Then, since $\frac{1}{\sqrt{\log m}} =o(1)$, if~\eqref{eq:imp_cond_secondStep} holds for $t =0$, i.e., $5\zeta\nu \cdot \big\|\bx^{(0)}-\bx\big\|_2 +2\cdot |1-\zeta\nu| + \beta_1 = 1-\beta_2$, we obtain from~\eqref{eq:thm2LastImp} that 
\begin{equation}
 \left\|\bx^{(1)}- \bx\right\|_2 \le CK_y\sqrt{\frac{(k\log (nLr))\cdot(\log m)}{m}} + (1-\beta_2)\cdot \left\|\bx^{(0)}- \bx\right\|_2.\label{eq:bxt01}
\end{equation}
When 
\begin{equation}
 \left\|\bx^{(0)}- \bx\right\|_2 > \frac{CK_y}{\beta_2}\sqrt{\frac{(k\log (nLr))\cdot(\log m)}{m}},
\end{equation}
from~\eqref{eq:bxt01}, we obtain 
\begin{equation}
 \left\|\bx^{(1)}- \bx\right\|_2 < \left\|\bx^{(0)}- \bx\right\|_2.
\end{equation}
This in turn leads to $5\zeta\nu \cdot \big\|\bx^{(1)}-\bx\big\|_2 +2\cdot |1-\zeta\nu| + \beta_1 < 1-\beta_2$, and thus from~\eqref{eq:thm2LastImp}, we obtain
\begin{equation}
 \left\|\bx^{(2)}- \bx\right\|_2 \le CK_y\sqrt{\frac{(k\log (nLr))\cdot(\log m)}{m}} + (1-\beta_2)\cdot \left\|\bx^{(1)}- \bx\right\|_2.\label{eq:bxt12}
\end{equation}
Therefore, if letting $T_0 \in \bbN$ be the smallest integer such that the inequality
\begin{equation}
 \left\|\bx^{(t)}- \bx\right\|_2 > \frac{CK_y}{\beta_2}\sqrt{\frac{(k\log (nLr))\cdot(\log m)}{m}},\label{eq:violatedByT0}
\end{equation}
is violated, by induction, we obtain that the sequence $\big\{\|\bx^{(t)}-\bx\|_2\big\}_{t\in [0,T_0]}$ is monotonically decreasing, and for all $t \le T_0$, we have
\begin{equation}
 \left\|\bx^{(t+1)}- \bx\right\|_2 \le CK_y\sqrt{\frac{(k\log (nLr))\cdot(\log m)}{m}} + (1-\beta_2)\cdot \left\|\bx^{(t)}- \bx\right\|_2,
\end{equation}
which gives 
\begin{align}
 \left\|\bx^{(t+1)}- \bx\right\|_2 & \le \left(1+ (1-\beta_2)+\ldots+ (1-\beta_2)^t\right) \cdot CK_y\sqrt{\frac{(k\log (nLr))\cdot(\log m)}{m}}  \\\nn\\
 &\quad + (1-\beta_2)^{t+1}  \left\|\bx^{(0)}- \bx\right\|_2 \\
 & < \frac{CK_y}{\beta_2} \cdot \sqrt{\frac{(k\log (nLr))\cdot(\log m)}{m}}  + (1-\beta_2)^{t+1} \cdot \left\|\bx^{(0)}- \bx\right\|_2.\label{eq:linearRatetp1}
\end{align}
In addition, since $T_0 \in \bbN$ is the smallest integer such that~\eqref{eq:violatedByT0} is violated, we have
\begin{equation}
 \left\|\bx^{(T_0)}-\bx\right\|_2 \le \frac{CK_y}{\beta_2} \cdot \sqrt{\frac{(k\log (nLr))\cdot(\log m)}{m}}.
\end{equation}
Then, by the linear convergence rate in~\eqref{eq:linearRatetp1} and $\|\bx^{(0)}-\bx\|_2 \le 2$, we obtain $T_0 = O\big(\log\big(\frac{m}{(k\log(nLr)\cdot(\log m)}\big)\big)$. Moreover, 
\begin{align}
 \left\|\bx^{(T_0+1)}-\bx\right\|_2 & \le  CK_y\sqrt{\frac{(k\log (nLr))\cdot(\log m)}{m}} + (1-\beta_2)\cdot \left\|\bx^{(T_0)}- \bx\right\|_2 \\
 &\le CK_y\sqrt{\frac{(k\log (nLr))\cdot(\log m)}{m}} + \frac{1-\beta_2}{\beta_2} \cdot CK_y\sqrt{\frac{(k\log (nLr))\cdot(\log m)}{m}} \\
 & = \frac{CK_y}{\beta_2} \sqrt{\frac{(k\log (nLr))\cdot(\log m)}{m}}.
\end{align}
Then, by induction, we obtain for all $t\ge T_0$ that
\begin{equation}
 \left\|\bx^{(t)}-\bx\right\|_2 \le \frac{CK_y}{\beta_2} \sqrt{\frac{(k\log (nLr))\cdot(\log m)}{m}},
\end{equation}
which completes the proof.

\section{Additional Numerical Results for the MNIST Dataset}

In this section, we present some additional numerical results for the MNIST dataset. We first present a figure to illustrate how quickly does Step 2 of Algorithm~\ref{algo:mpr_gen} converge in Appendix~\ref{app:conv_rate_step2}, then we present how the reconstruction error of Algorithm~\ref{algo:mpr_gen} varies with respect to $1/\sqrt{m}$ in Appendix~\ref{app:err_1overSqrtm}. In Appendix~\ref{app:effect_scaleFactor}, we empirically illustrate the effect of the scale factor $\hat{\nu}^{(t)}$ used in~\eqref{eq:def_tildebx}. In Appendix~\ref{app:comp_prGAN}, we compare with the method proposed in~\cite{shamshad2020compressed}. A simple numerical comparison with the approximate message passing (AMP) algorithm proposed in~\cite{aubin2020exact} is provided in Appendix~\ref{app:AMP_comp}.

\subsection{The Convergence Rate of Step 2 of Algorithm~\ref{algo:mpr_gen}}
\label{app:conv_rate_step2}

For the noisy magnitude-only measurement model~\eqref{eq:noisy_magModel1}, we fix $m = 400$ and $\sigma = 0.1$ to see how the reconstruction error decays with respect to the number of iterations of Step 2 of Algorithm~\ref{algo:mpr_gen}. The results are illustrated in Figure~\ref{fig:convRate}. From Figure~\ref{fig:convRate}, we observe that the logarithm of the reconstruction error decays almost linearly during the first $20$ iterations.

\begin{figure}
\begin{center}
 \includegraphics[width=0.8\textwidth]{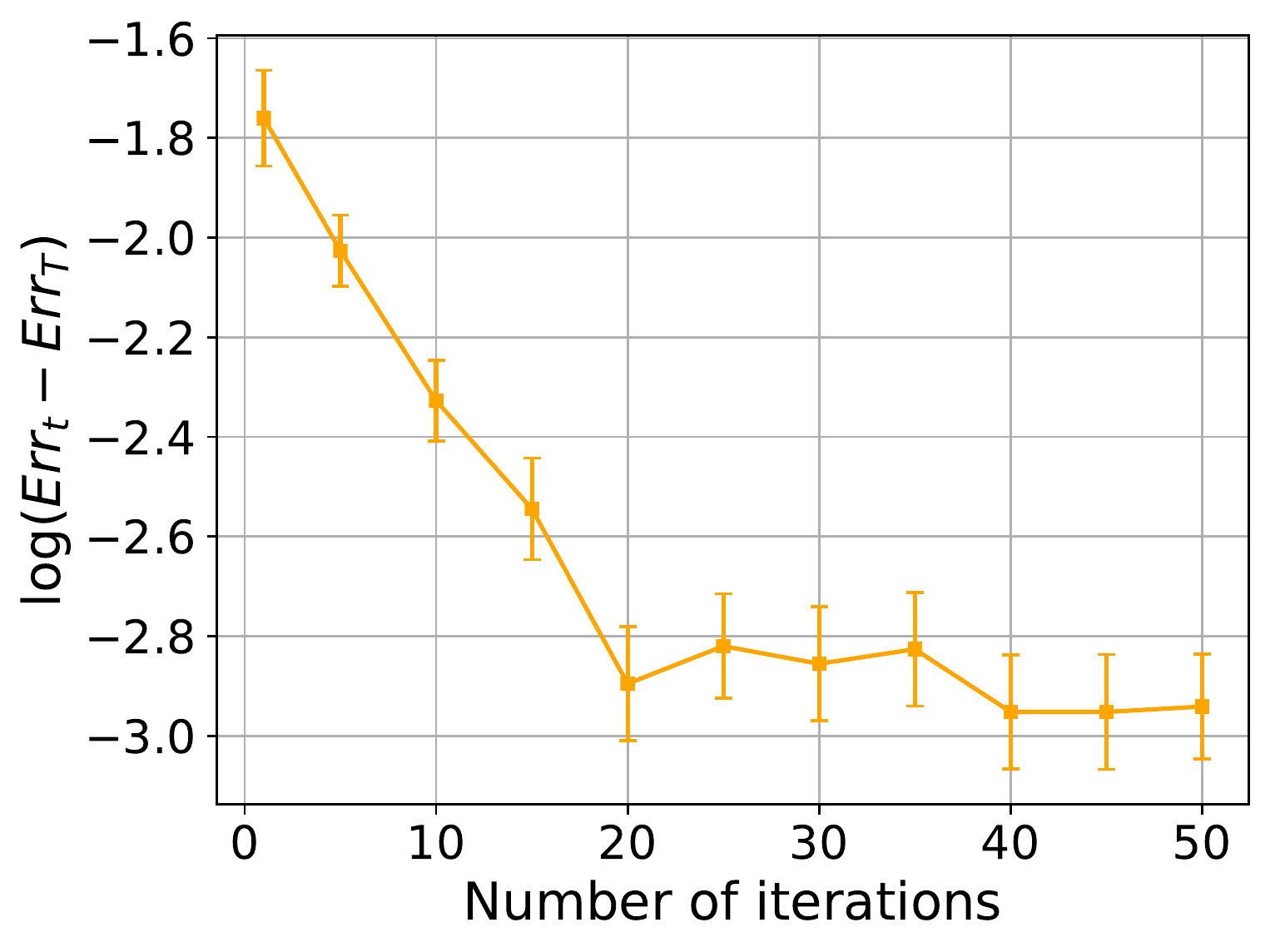}
\end{center}
 \caption{A plot of the log reconstruction error against the number of iterations.}\label{fig:convRate}
\end{figure}

\subsection{The Reconstruction Error against $1/\sqrt{m}$}
\label{app:err_1overSqrtm}

According to Theorem~\ref{thm:mpr_second_main}, for a fixed generative model, the final reconstruction error of our Algorithm~\ref{algo:mpr_gen} should scale as $O\big(1/\sqrt{m}\big)$ (ignoring the $\log m$ term). This is numerically verified in Figure~\ref{fig:err_1overSqrtm}, for which we consider the noisy magnitude-only measurement model~\eqref{eq:noisy_magModel1} with $\sigma = 0$ or $\sigma = 0.1$. 

 \begin{figure}[h]
\begin{center}
\begin{tabular}{cc}
 \includegraphics[height=0.35\textwidth]{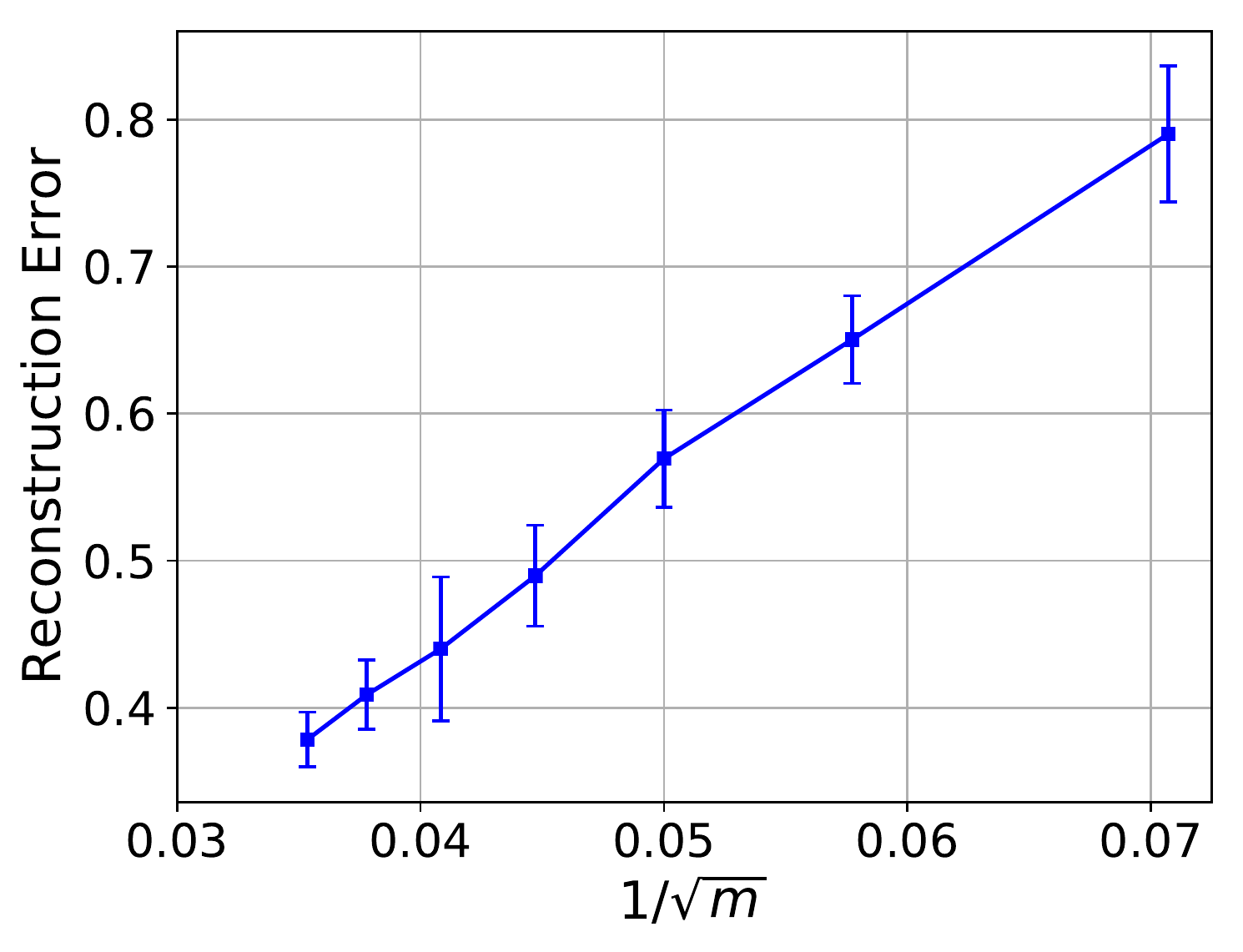} & \hspace{0.1cm}
\includegraphics[height=0.35\textwidth]{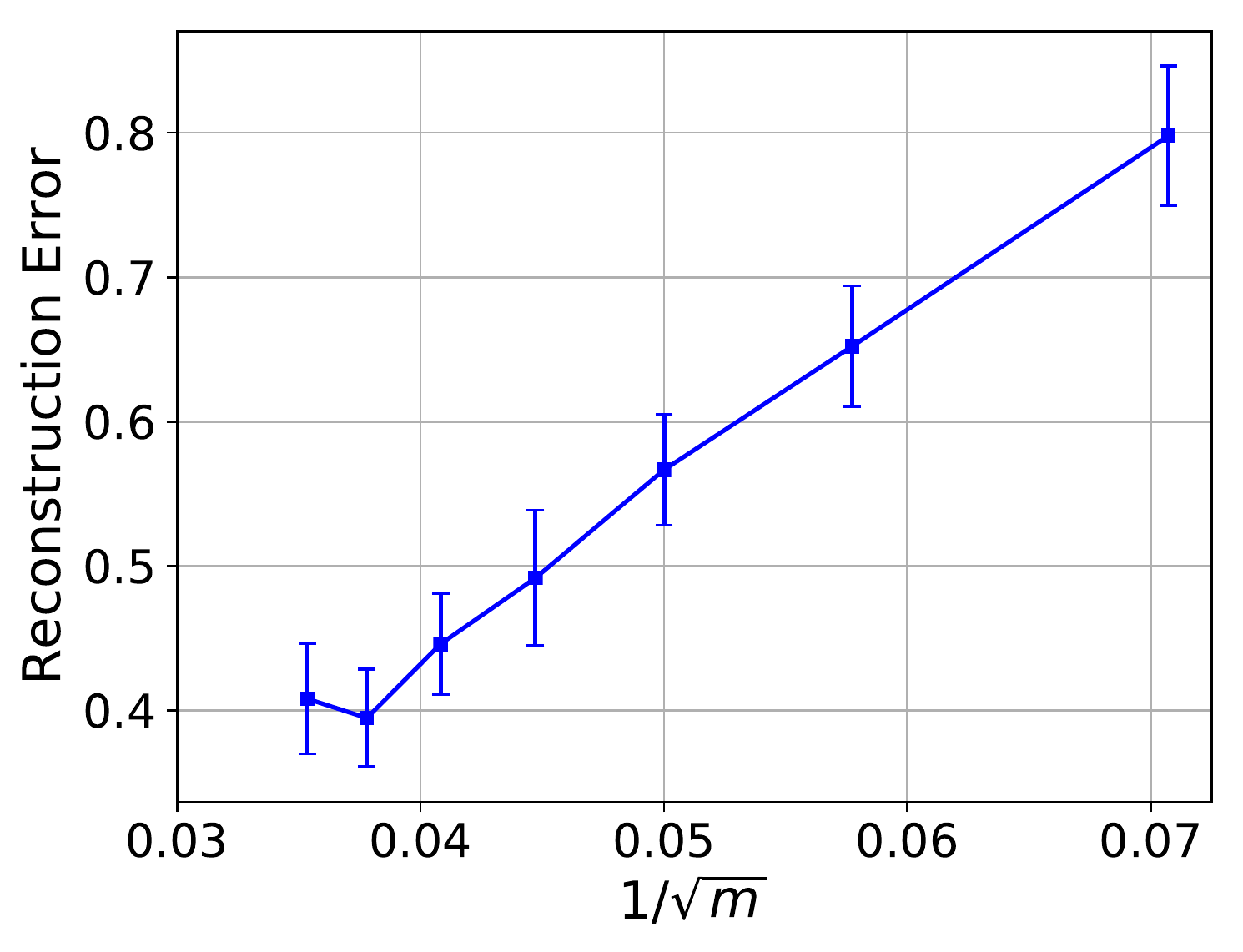} \\
{\small (a) ~\eqref{eq:noisy_magModel1} with $\sigma = 0$}  & {\small (b) ~\eqref{eq:noisy_magModel1} with $\sigma =0.1$}
\end{tabular}
\caption{Plots of reconstruction error of~\texttt{MPRG} against $1/\sqrt{m}$ for the measurement model~\eqref{eq:noisy_magModel1}.}
\label{fig:err_1overSqrtm}
\end{center}
\end{figure} 

\subsection{The Effect of the Scale Factor $\hat{\nu}^{(t)}$ used in~\eqref{eq:def_tildebx}}
\label{app:effect_scaleFactor}

In Algorithm~\ref{algo:mpr_gen}, the scale factor $\hat{\nu}^{(t)}$ is calculated according to~\eqref{eq:def_hatnu} and is used in~\eqref{eq:def_tildebx}. We remark that this scale factor plays an important role. To illustrate this, we compare Algorithm~\ref{algo:mpr_gen} (recall that it is denoted by~\texttt{MPRG}) with the case that using a fixed $\hat{\nu}^{(0)}$ in~\eqref{eq:def_tildebx} (i.e., it is not varying with respect to $t$) during the iterations of the second step of Algorithm~\ref{algo:mpr_gen}, which is denoted by~\texttt{MPRGf}. For the measurement models~\eqref{eq:tanhModel} and~\eqref{eq:sinModel}, the numerical results are presented in Figures~\ref{fig:scaleFactor_twoModels} and~\ref{fig:scaleFactor_quant_twoModels}. From these figures, we observe that by using a varying scale factor $\hat{\nu}^{(t)}$, we obtain better reconstructed images and smaller reconstruction errors.

 \begin{figure}
\begin{center}
\begin{tabular}{cc}
 \includegraphics[height=0.15\textwidth]{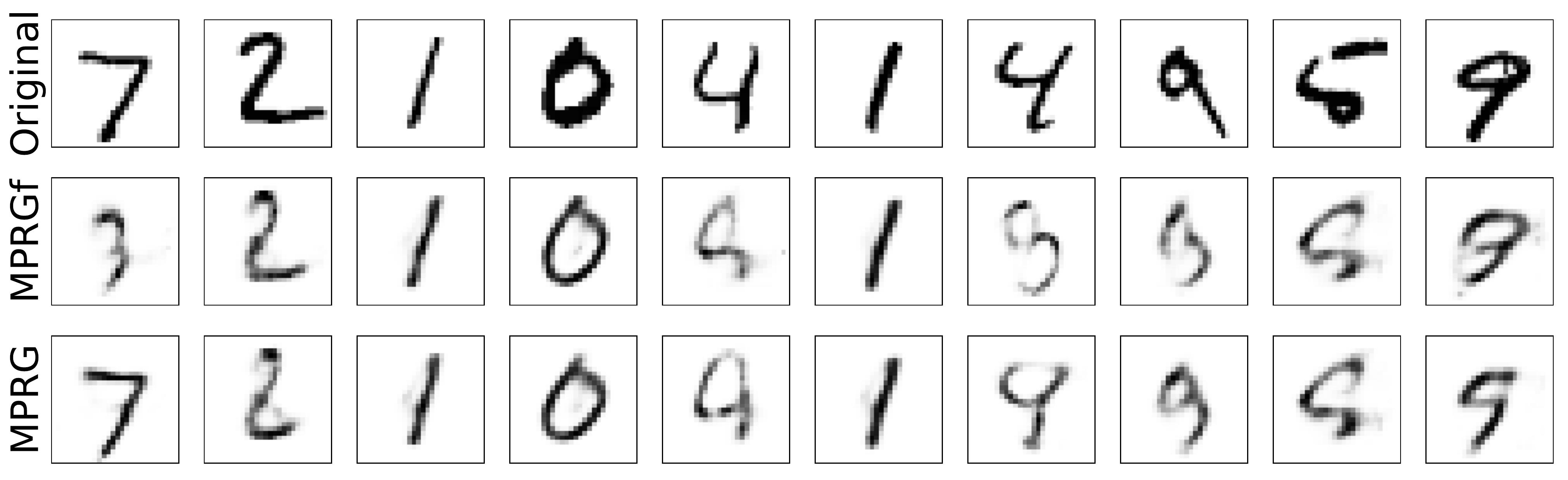} & \hspace{0.1cm}
\includegraphics[height=0.15\textwidth]{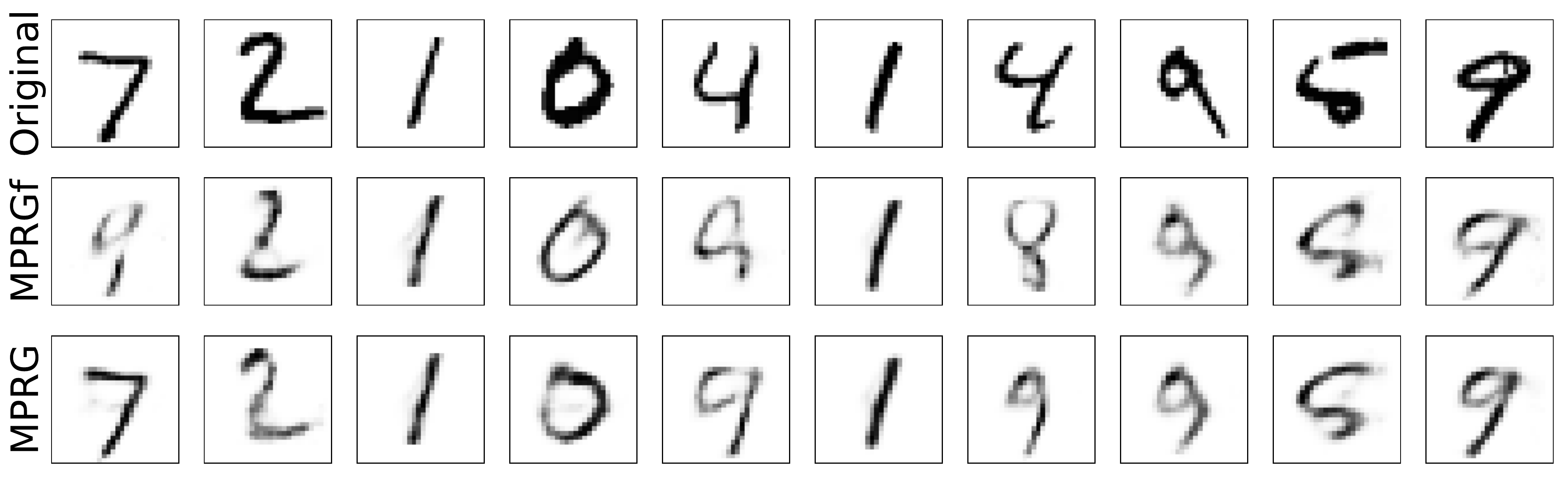} \\
{\small (a)~\eqref{eq:tanhModel} with $m = 300$ and $\sigma = 0.01$}  & {\small (b)~\eqref{eq:sinModel} with $m =400$ and $\sigma =0.5$}
\end{tabular}
\caption{Examples of reconstructed MNIST images of~\texttt{MPRG} and~\texttt{MPRGf}.}
\label{fig:scaleFactor_twoModels}
\end{center}
\end{figure} 

 \begin{figure}
\begin{center}
\begin{tabular}{cc}
\includegraphics[height=0.35\textwidth]{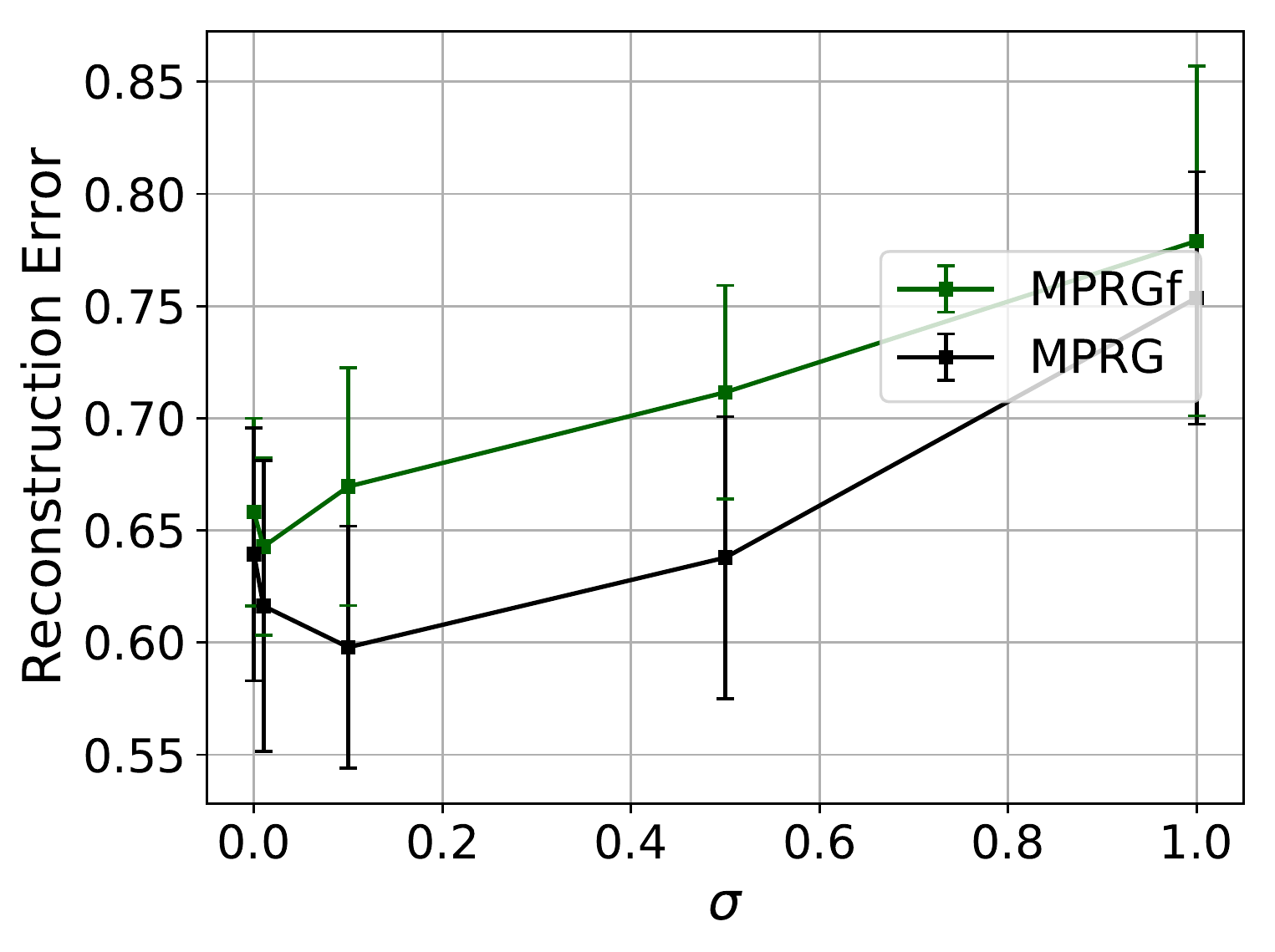} & \hspace{-0.5cm}
\includegraphics[height=0.35\textwidth]{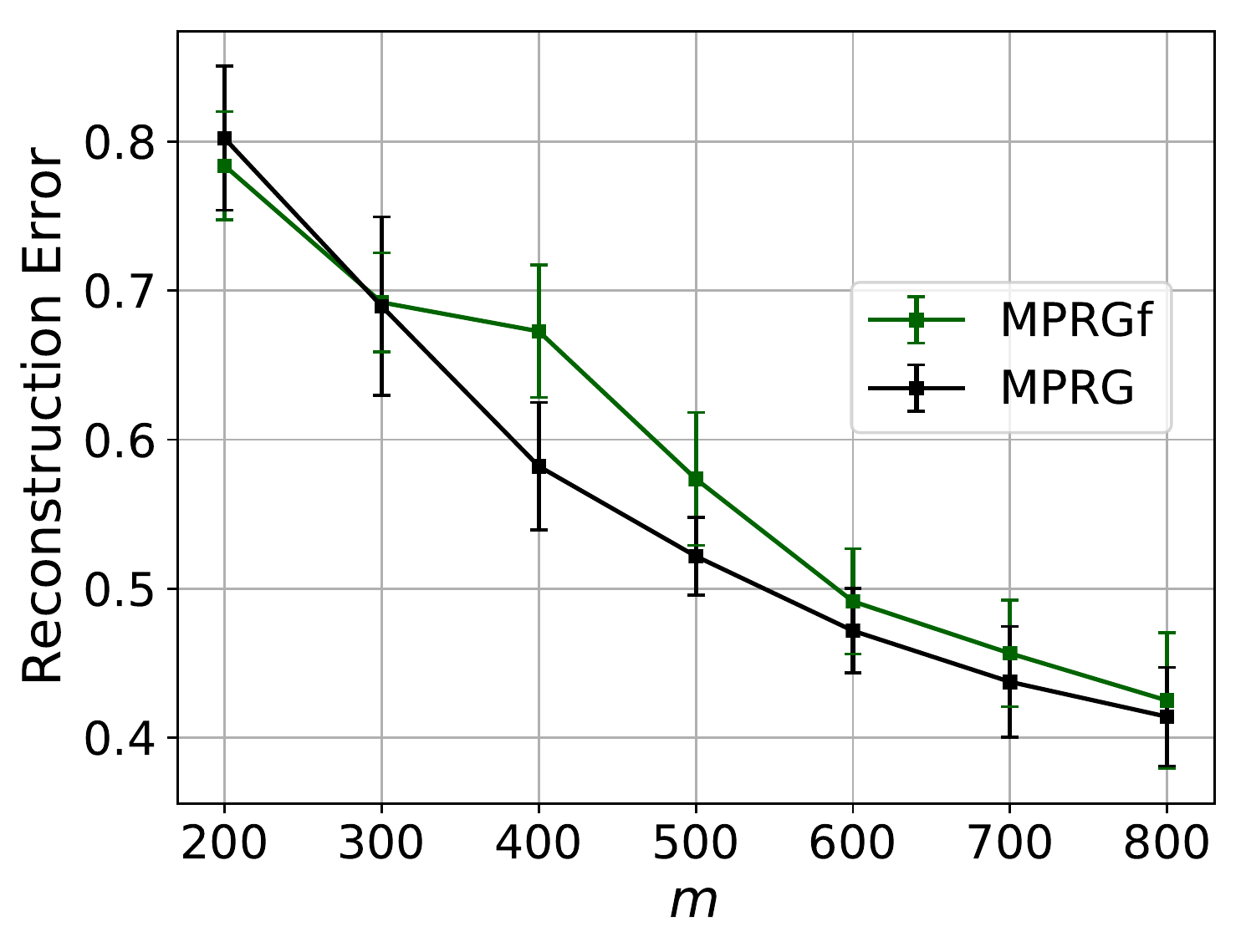} \\
{\small (a)~\eqref{eq:tanhModel} with fixed $m =300$} & {\small (b)~\eqref{eq:sinModel} with fixed $\sigma = 0.5$}
\end{tabular}
\caption{Quantitative comparisons of the performance of~\texttt{MPRG} and~\texttt{MPRGf}.} \label{fig:scaleFactor_quant_twoModels}
\end{center}
\end{figure}

\subsection{Comparison with the Method proposed in~\cite{shamshad2020compressed}}
\label{app:comp_prGAN}

In this subsection, we compare with the method proposed in~\cite{shamshad2020compressed}, which we denote by~\texttt{prGAN}. We focus on the noiseless measurement model $y_i = |\ba_i^T \bx|$ for $i \in [m]$, and the numerical results are presented in Figures~\ref{fig:images_prGAN} and~\ref{fig:quant_prGAN}. We observe from these figures that the three methods~\texttt{prGAN},~\texttt{APPGD}, and~\texttt{MPRG} lead to competitive reconstruction error. However,~\texttt{prGAN} is not as stable as~\texttt{APPGD} and~\texttt{MPRG}, and it may result in reconstructed images that are not desired.

\begin{figure}
\begin{center}
 \includegraphics[width=0.8\textwidth]{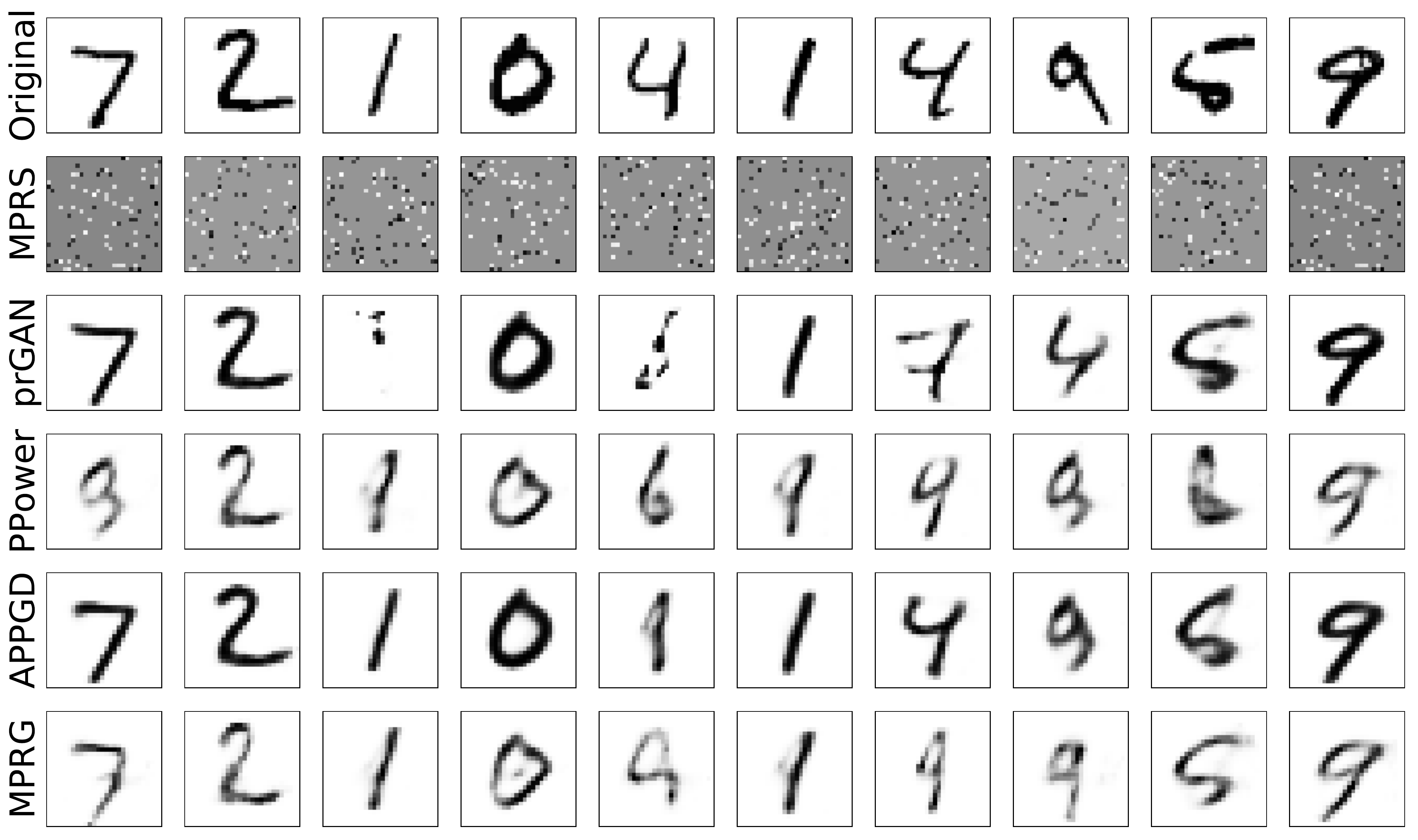}
\end{center}
 \caption{Examples of reconstructed MNIST images of~\texttt{prGAN}.}\label{fig:images_prGAN}
\end{figure}

\begin{figure}
\begin{center}
 \includegraphics[width=0.8\textwidth]{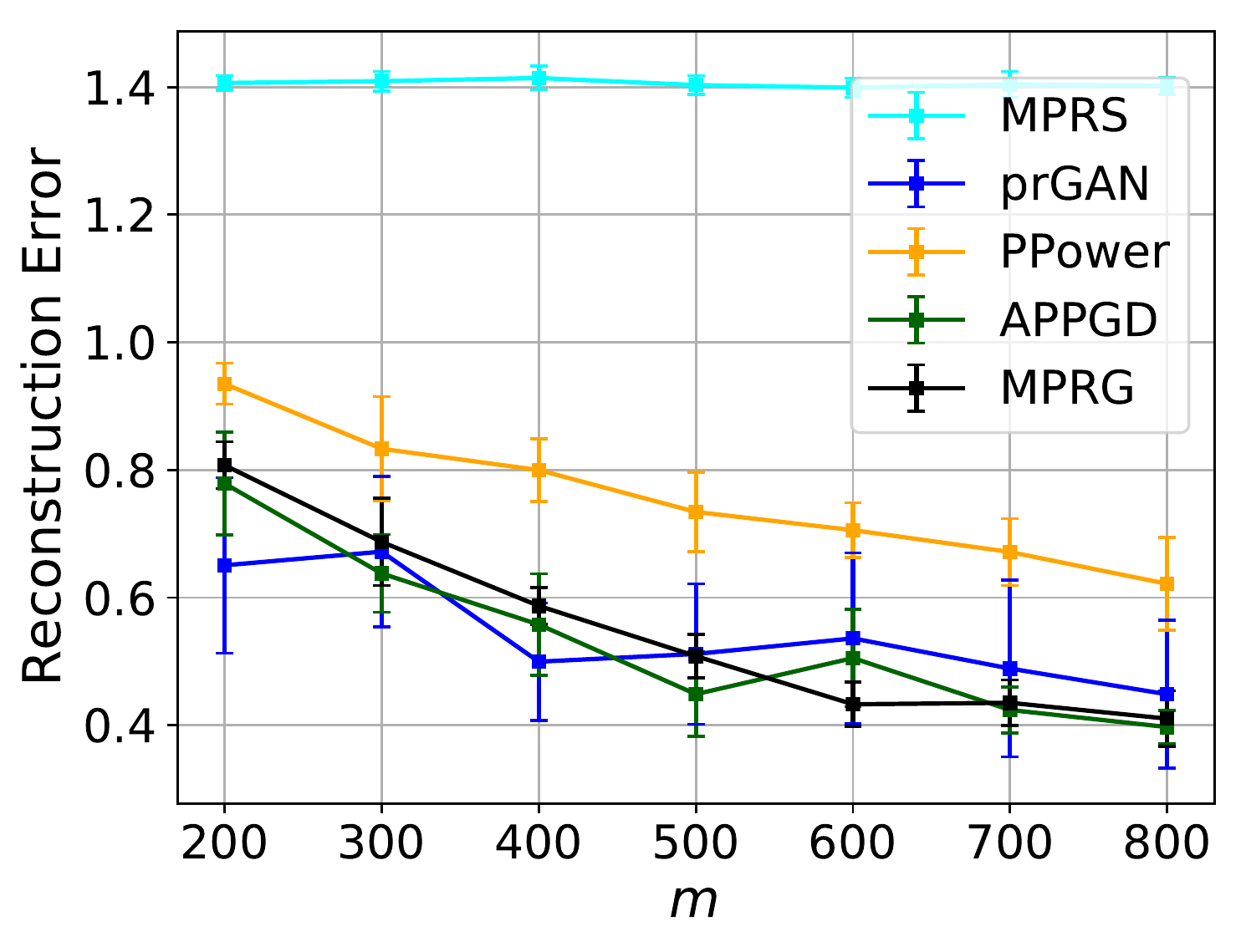}
\end{center}
 \caption{Quantitative comparisons to the performance of~\texttt{prGAN}.}\label{fig:quant_prGAN}
\end{figure}

\subsection{Comparison with the AMP Algorithm proposed in~\cite{aubin2020exact}}
\label{app:AMP_comp}

We follow the setting in~\cite{aubin2020exact} to use a ReLU neural network generative model with no offsets and zero-mean random Gaussian weights. The architecture of this neural network generative model $G$ is the same as the decoder of the VAE model used for the MNIST dataset, i.e., the latent dimension $k = 20$, there are two hidden layers with $500$ neurons, and the output dimension is $784$. We randomly choose $10$ latent vectors $\bz \in \bbR^k$ and use the corresponding $G(\bz)$ as the signal. Under this ReLU neural network generative model and the noiseless measurement model $y_i = |\ba_i^T \bx|$, we compare our Algorithm~\ref{algo:mpr_gen} with the AMP algorithm used in~\cite{aubin2020exact}. The reconstruction error is averaged over the $10$ signals and $10$ random restarts. The results are presented in Figures~\ref{fig:res_AMP} and~\ref{fig:quant_AMP}. We can observe from these figures that~\texttt{MPRG} leads to better reconstruction performance compared to~\texttt{AMP}.

 \begin{figure}
\begin{center}
\begin{tabular}{cc}
 \includegraphics[height=0.15\textwidth]{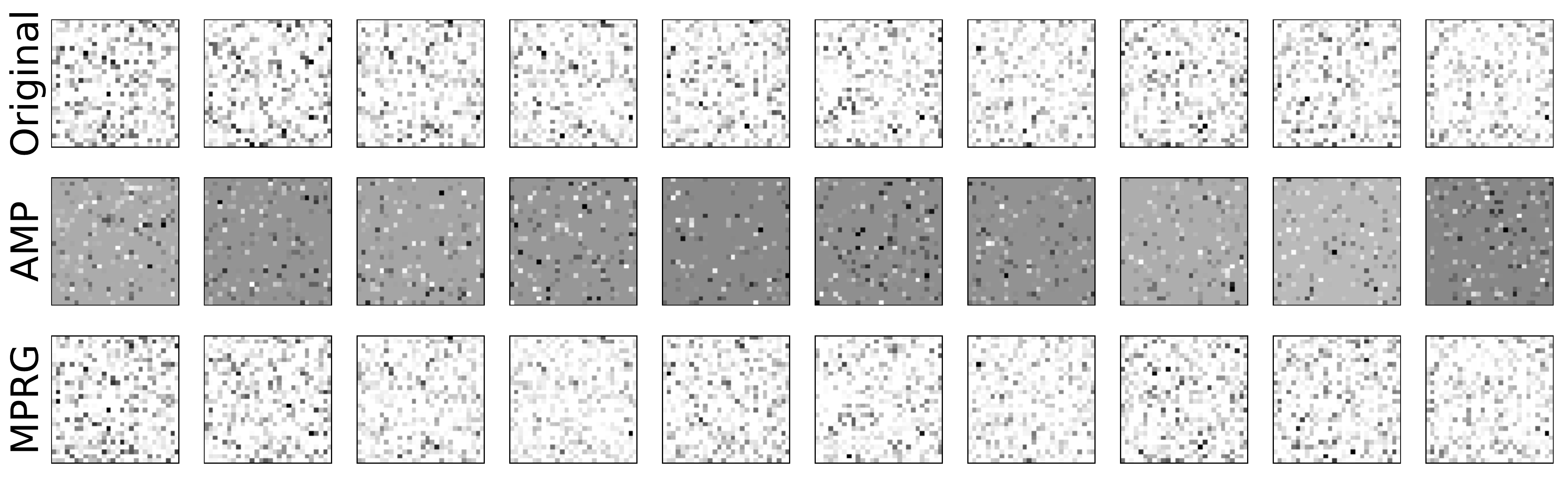} & \hspace{0.1cm}
\includegraphics[height=0.15\textwidth]{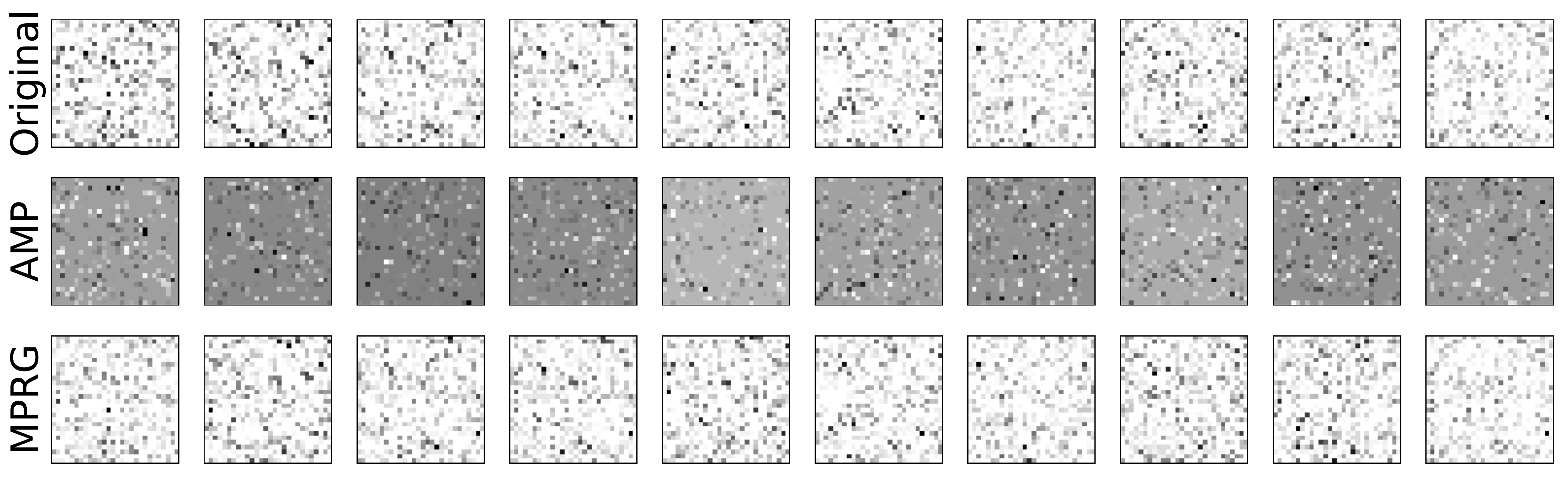} \\
{\small (a) $m = 200$}  & {\small (b) $m =300$}
\end{tabular}
\caption{Examples of reconstructed signals of~\texttt{MPRG} and~\texttt{AMP}.}
\label{fig:res_AMP}
\end{center}
\end{figure}

\begin{figure}
\begin{center}
 \includegraphics[width=0.8\textwidth]{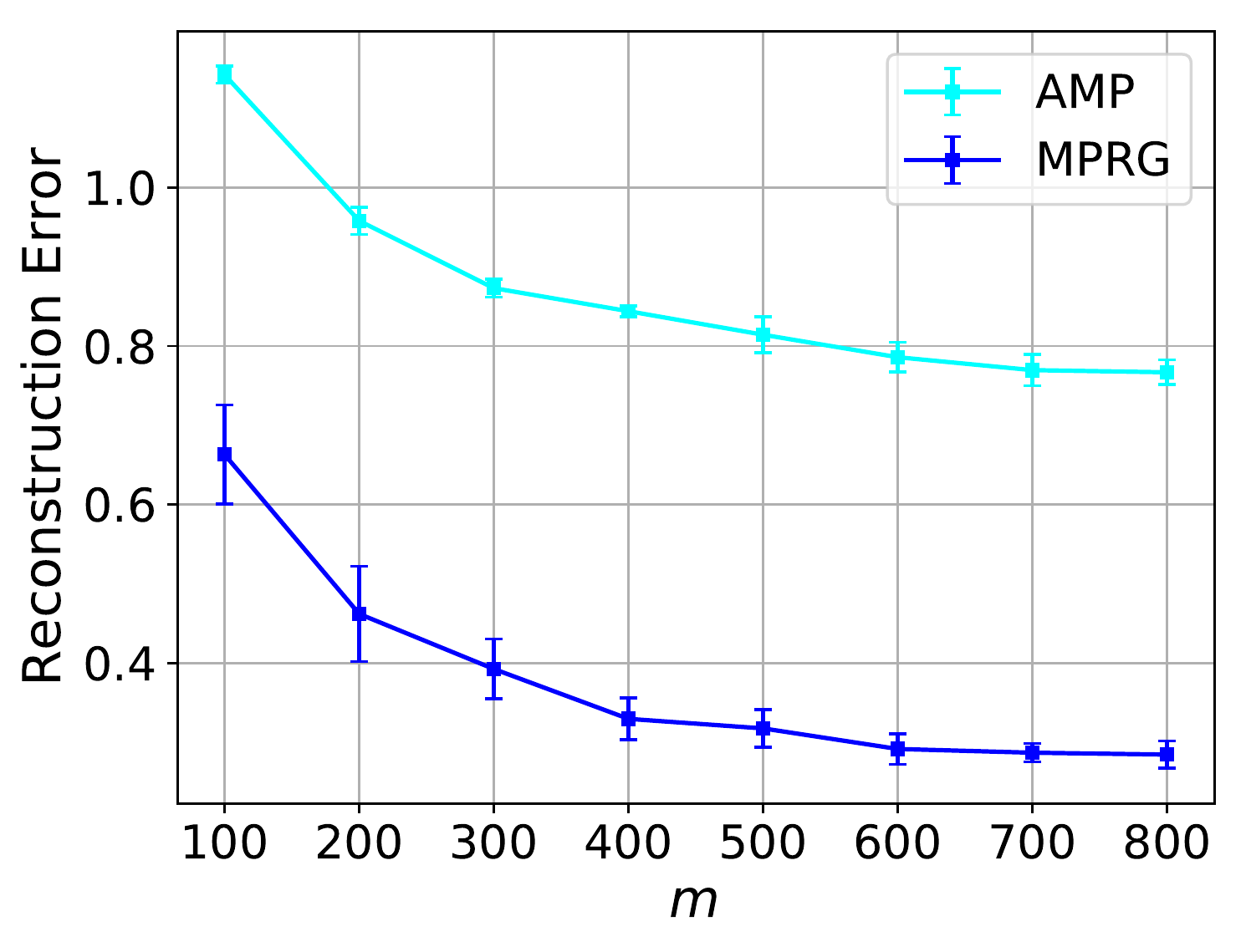}
\end{center}
 \caption{Quantitative comparisons to the performance of~\texttt{AMP} under a ReLU neural network generative model.}\label{fig:quant_AMP}
\end{figure}

\section{Empirical Results for the CelebA Dataset}

The CelebA dataset contains more than $200, 000$ face images of celebrities. Each input image is cropped into a $64 \times 64$ RGB image and thus $n = 64 \times 64 \times 3 = 12288$. The generative model $G$ is set to be (the normalized version of) the Deep Convolutional Generative Adversarial Networks (DCGAN) model pre-trained by the authors of~\cite{bora2017compressed}, with the latent dimension being $k = 100$. The projection operator $\calP_G(\cdot)$ is approximated by the Adam optimizer with a learning rate of $0.1$ and $100$ steps. To reduce the impact of local minima, we present the best reconstructed images among $6$ random restarts. Other involved parameters are set to be the same as those for the MNIST dataset.

For the CelebA dataset, we do not compare with the sparsity-based method~\texttt{MPRS} since the face images are clearly not sparse in the natural domain and we have observed from the results for the MNIST dataset that~\texttt{MPRS} leads to poor reconstructions. Recall that we have performed numerical experiments for several measurement models ({\em cf.}~\eqref{eq:noisy_magModel1},~\eqref{eq:noisy_magModel2},~\eqref{eq:tanhModel} and~\eqref{eq:sinModel}) for the MNIST dataset. In this section, we only present some proof-of-concept experimental results for the following measurement model:
\begin{equation}
 y_i = |\ba_i^T\bx + \eta_i| + 5\tanh(|\ba_i^T\bx|),\quad i=1,2,\ldots m, \label{eq:5tanhModel}
\end{equation}
where $\eta_i$ are i.i.d. realizations of an $\calN(0,\sigma^2)$ random variable. The model in~\eqref{eq:5tanhModel} can be thought of as a misspecified version of the measurement model $y_i = |\ba_i^T\bx + \eta_i|$. The reconstructed images are presented in Figure~\ref{fig:celebA_images}, from which we observe that our method~\texttt{MPRG} leads to the best reconstructed images compared to those of~\texttt{PPower} and~\texttt{APPGD}. Quantitative comparisons according to the reconstruction error are provided in Figure~\ref{fig:celebA_quant}. From this figure, we observe that when $m > 4000$,~\texttt{MPRG} gives smallest reconstruction error. We expect the advantage of our method to be more significant as the level of misspecification increases, i.e., the multiplier of the $\tanh$ term in~\eqref{eq:5tanhModel} becomes larger. 

\begin{figure}
\begin{center}
 \includegraphics[width=0.9\textwidth]{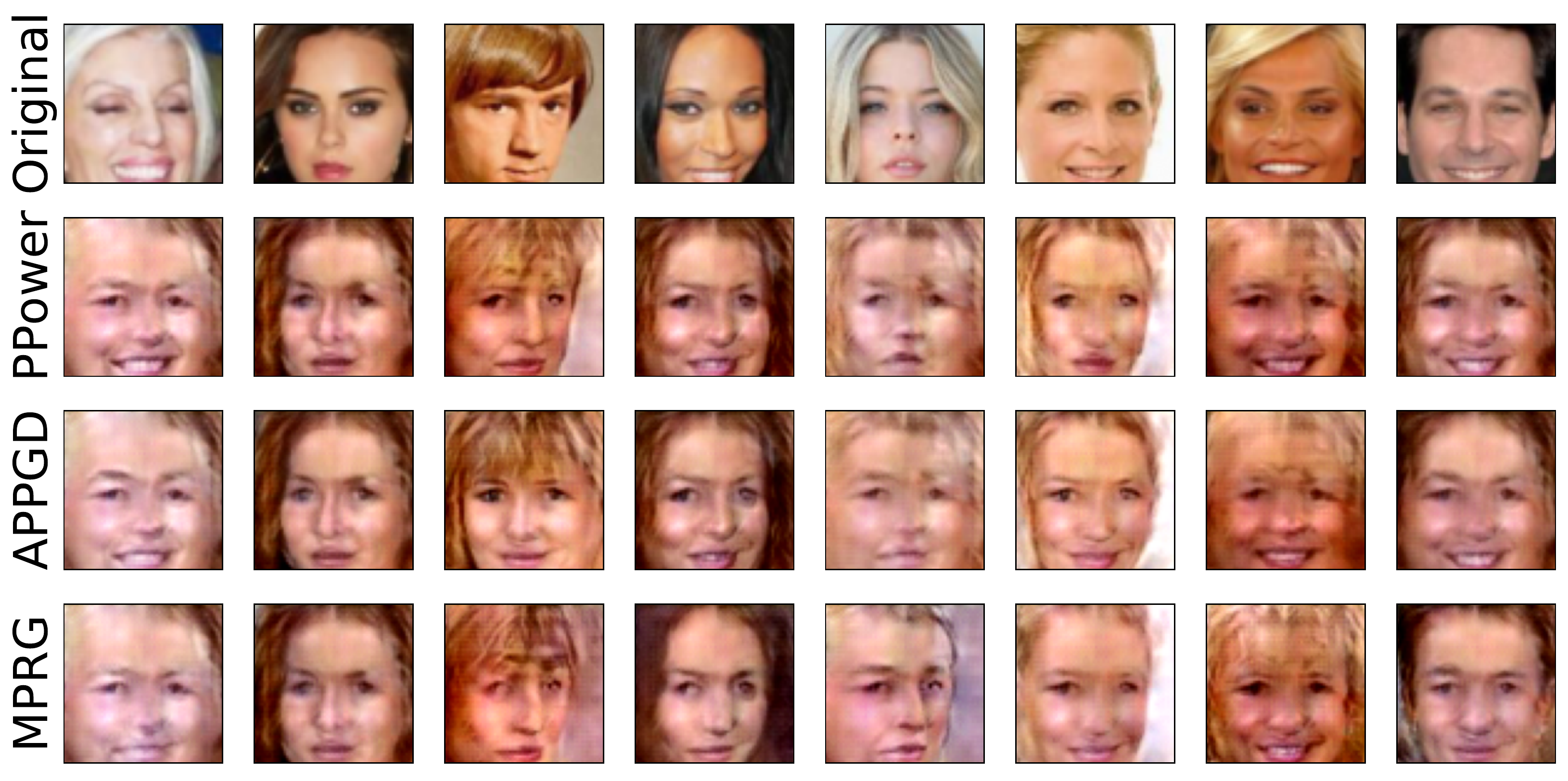}
\end{center}
 \caption{Examples of reconstructed images of the CelebA dataset for the measurement model~\eqref{eq:5tanhModel} with $m = 6000$ and $\sigma =0.4$.}\label{fig:celebA_images}
\end{figure}

\begin{figure}
\begin{center}
 \includegraphics[width=0.9\textwidth]{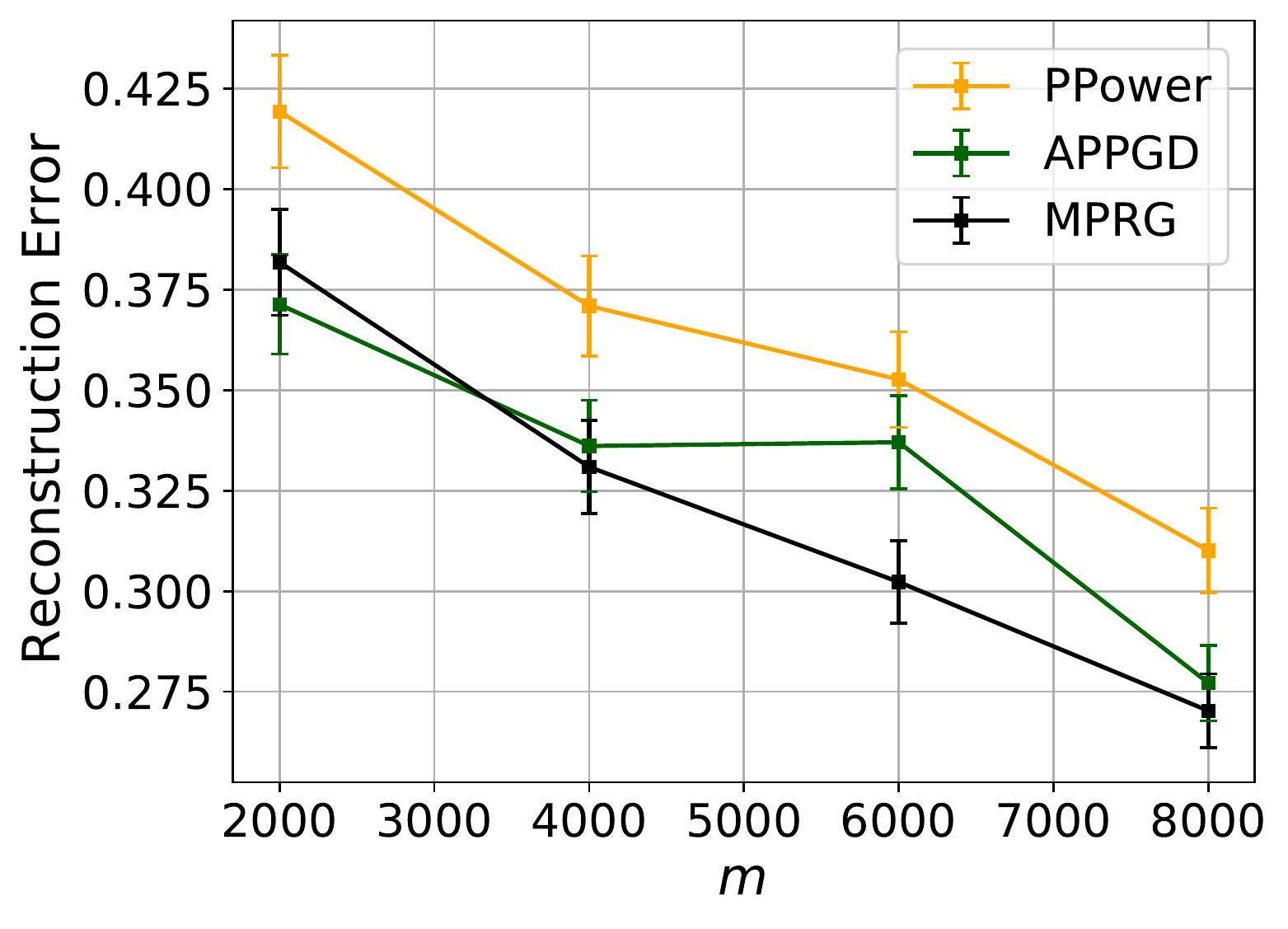}
\end{center}
 \caption{Quantitative comparisons of the performance for the CelebA dataset and measurement model~\eqref{eq:5tanhModel} with $\sigma = 0.4$ and varying $m$.}\label{fig:celebA_quant}
\end{figure}

\end{document}